\newcommand{\R}{\mathbb{R}}
\newcommand{\E}{\mathbb{E}}
\renewcommand{\log}{\lg}
\DeclareMathOperator{\sign}{sign}
\newcommand{\margin}{\mathrm{margin}}
\theoremstyle{definition}
\newtheorem{definition}{Definition}
\theoremstyle{plain}
\newtheorem{corollary}{Corollary}[section]
\newtheorem{lemma}{Lemma}[section]
\newtheorem{theorem}{Theorem}[section]
\newenvironment{customlem}[1]
  {\innercustomlem}
  {\endinnercustomlem}
\newenvironment{customthm}[1]
  {\innercustomthm}
  {\endinnercustomthm}
\title{Optimal Minimal Margin Maximization with Boosting}
\author{
   Allan Gr\o nlund\footnote{All authors contributed evenly.}\ \thanks{Aarhus University.
   Email: \texttt{jallan@cs.au.dk}.  }  
   \and
   Kasper Green Larsen\thanks{Aarhus University.
     Email: \texttt{larsen@cs.au.dk}. Supported by a Villum
     Young Investigator Grant and an AUFF Starting Grant.
   }
   \and
   Alexander Mathiasen\thanks{Aarhus University.
     Email: \texttt{alexander.mathiasen@gmail.com}. Supported by
     an AUFF Starting Grant.
   }
}
\begin{document}

\date{}

\maketitle

\begin{abstract}
Boosting algorithms produce a classifier by iteratively combining base
hypotheses. It has been observed experimentally that the generalization error
keeps improving even after achieving zero training error. One popular
explanation attributes this to improvements in margins. A common goal
in a long line of research, is to maximize the smallest margin using as few
base hypotheses as
possible, culminating with the AdaBoostV algorithm by \cite{ratsch2005efficient}. 
The AdaBoostV algorithm was later conjectured to
yield an optimal trade-off between number of hypotheses trained and the minimal
margin over all training points \cite{nie2013open}. 
Our main contribution is a new algorithm refuting this conjecture. 
Furthermore, we prove a lower bound which implies that our new
algorithm is optimal.
\end{abstract}

\section{Introduction}
Boosting is one of the most famous and succesful ideas in learning. Boosting algorithms are meta algorithms that produce highly accurate classifiers by combining already existing less accurate classifiers. 
Probably the most famous boosting algorithm is AdaBoost by Freund and Schapire~\cite{freund1995desicion}, who won the 2003 Gödel Prize for their work.


AdaBoost was designed for binary classification and works by combining base hypotheses learned by a given base learning algorithm into a weighted sum that represents the final classifier. This weighed set of base hypotheses is constructed iteratively in rounds, each round constructing a new base hypothesis that focuses on the training data misclassified by the previous base hypotheses constructed. More precisely, AdaBoost takes training data $D=\{(x_i, y_i) \mid x_i\in \mathbb{R}^d, y_i\in \{-1, +1\} \}_{i=1}^n$ and constructs a linear combination classifier $\mathrm{sign}(\sum_{t=1}^T\alpha_t h_t(x))$, where $h_t$ is the base hypothesis learned in the $t$'th iteration and  $\alpha_t$ is the corresponding weight. 


It has been proven that AdaBoost decreases the training error exponentially fast if each base hypothesis is slightly better than random guessing on the weighed data set it is trained on \cite{freund1999short}. Concretely, if $\epsilon_t$ is the error of $h_t$ on the weighed data set used to learn $h_t$ then the linear combination has training error at most $\text{exp}(-2\sum_{t=1}^T{(1/2-\epsilon_t)^2})$. If each $\epsilon_t$ is at most a half minus a fixed constant, then the training error is less than $1/n$ after $O(\log n)$ rounds which means the all training points are classified correctly.
%
%
Quite surprisingly, experiments show that continuing the AdaBoost algorithm even after the training data is perfectly classified, making the model more and more complex, continues to improve generalization \cite{schapire1998boosting}. The most prominent approach to explaining this generalization phenomenon considers \emph{margins} \cite{schapire1998boosting}. The margin of a point $x_i$ is
\begin{align*}
\margin(x_i) = \frac{y_i \sum_{t=1}^T \alpha_t h_t(x_i)}
{\sum_{t=1}^T \lvert \alpha_t \rvert}.
\end{align*}
%
For binary classification, if each $h_t(x)\in[-1, +1]$, then the margin of a point is a number between -1 and +1. Notice that a point has positive margin if it is classified correctly and negative margin if it is classified incorrectly. It has been observed experimentally that the margins of the training points usually increase when training, even after perfectly classifying the training data. 
This has inspired several bounds on generalization error that depend on the distribution of margins \cite{schapire1998boosting, breiman1999prediction, koltchinskii2001some, emargin, kmargin}. The conceptually simplest of these bounds depend only on the minimal margin, which is the margin of the point $x_i$ with minimal margin. The point $x_i$ with minimal margin can be interpreted as the point the classifier struggles the most with. This has inspired a series of algorithms with guarantees on the minimal margin \cite{breiman1999prediction, grove1998boosting, bennett2000column, ratsch2002maximizing, ratsch2005efficient}.

These algorithms have the following goal: Let $\mathcal{H}$ be the (possibly infinite) set of all base hypotheses that may be returned by the base learning algorithm. Suppose the best possible minimal margin on some training data for any linear combination of $h \in \mathcal{H}$ is $\rho^*$, i.e.
\begin{align*}
\rho^* = \max_{\alpha \neq 0}  \left( \min_i \sum_{h\in \mathcal{H}} \frac{y_i \sum_{h \in \mathcal{H}} \alpha_h h(x_i)}
{\sum_{h \in \mathcal{H}} |\alpha_h|} \right) .
\end{align*}

Given some precision parameter $v$, the goal is to construct a linear combination with minimal margin at least $\rho=\rho^*-v$ using as few hypotheses as possible. In this case we say that the linear combination has a gap of $v$.
The current state of the art is AdaBoostV \cite{ratsch2005efficient}. It guarantees a gap of $v$ using $O(\log(n) / v^2)$ hypotheses. It was later conjectured that there exists data sets $D$ and a corresponding set of base hypotheses $\mathcal{H}$, such that any linear combination of base hypotheses from $\mathcal{H}$ must use at least $\Omega(\log(n)/v^2)$ hypotheses to achieve a gap of $v$ for any $\sqrt{\lg n/n} \leq v \leq a_1$ for some constant $a_1 > 0$. This would imply optimality of AdaBoostV. This conjecture was published as an open problem in the Journal of Machine Learning Research \cite{nie2013open}.

Our main contribution is a refutal of this conjecture. We refute the conjecture by introducing a new algorithm called SparsiBoost, which guarantees a gap of $v$ with just $T=O(\log(nv^2)/v^2)$ hypotheses.
When $v \leq n^{o(1)}/\sqrt{n}$, SparsiBoost has $T=O(\log(n^{o(1)})/v^2)=o(\log (n)/v^2)$, which is asymptotically better than AdaBoostV's $T=O(\log(n)/v^2)$ guarantee. Moreover, it also refutes the conjectured lower bound. Our algorithm involves a surprising excursion to the field of combinatorial discrepancy minimization.
We also show that our algorithm is the best possible. That is, there exists data sets $D$ and corresponding set of base hypotheses $\mathcal{H}$, such that any linear combination of base hypotheses from $\mathcal{H}$ with a gap of $v$, must use at least $T=\Omega(\log(nv^2)/v^2)$ hypotheses. 

This work thus provides the final answer to over a decade's research into understanding the trade-off between minimal margin and the number of hypotheses: Given a gap $v$, the optimal number of hypotheses is $T=\Theta(\log(nv^2)/v^2)$ for any $\sqrt{1/n} \leq v \leq a_1$ where $a_1 > 0$ is a constant. Notice that smaller values for $v$ are irrelevant since it is always possible to achieve a gap of zero using $n+1$ base hypotheses. This follows from Carathéodory's Theorem.

\subsection{Previous Work on Minimal Margin}
\paragraph{Upper Bounds.}
\cite{breiman1999prediction} introduced Arc-GV, which was the first algorithm that guaranteed to find a finite number of hypotheses $T<\infty$ with gap zero ($v=0$). As pointed out by \cite{ratsch2005efficient}, one can think of Arc-GV as a subtle variant of AdaBoost where the weights $\alpha_t$ of the hypotheses are slightly changed. If AdaBoost has hypothesis weight $\alpha_t$, then Arc-GV chooses the hypothesis weight $\alpha_t'=\alpha_t + x$ for some $x$ that depends on the minimal margin of $h_1,\dots, h_t$. A few years later, \cite{grove1998boosting} and \cite{bennett2000column} introduced DualLPBoost and LPBoost which both have similar guarantees. 

\cite{ratsch2002maximizing} introduced AdaBoost$_\rho$, which was the first algorithm to give a guarantee on the gap achieved in terms of the number of hypotheses used. Their algorithm takes a parameter $\rho\le \rho^*$ that serves as the target margin one would like to achieve. It then guarantees a minimal margin of $\rho - \mu$ using $T=O(\log(n)/\mu^2)$ hypotheses. One would thus like to choose $\rho=\rho^*$.  If $\rho^*$ is unknown, it can be found up to an additive approximation of $v$ by binary searching using AdaBoost$_\rho$. This requires an additional $O(\log 1/v)$ calls to AdaBoost$_\rho$, resulting in $O(\log(n)/v^2)\log(1/v)$ iterations of training a base hypothesis to find the desired linear combination of $T=O(\lg(n)/v^2)$ base hypotheses. Similar to Arc-GV, AdaBoost$_\rho$ differs from AdaBoost only in choosing the weights $\alpha_t$. 
Instead of having the additional term depend on the minimal margin of $h_1,\dots,h_t$, it depends only on the estimation parameter $\rho$. 

A few years later, \cite{ratsch2005efficient} introduced AdaBoostV. It is a clever extension of AdaBoost$_\rho$ that uses an adaptive estimate of $\rho^*$ to remove the need to binary search for it. It  achieves a gap of $v$ using $T=O(\log(n)/v^2)$ base hypotheses and no extra iterations of training.

\paragraph{Lower Bounds.}
\cite{klein1999number} showed a lower bound for a seemingly unrelated game theoretic problem. It was later pointed out by \cite{nie2013open} that their result implies the following lower bound for boosting: there exists a data set of $n$ points and a corresponding set of base hypotheses $\mathcal{H}$, such that any linear combination of $T\in[\log n; \sqrt{n}]$ base hypotheses must have a gap of $v=\Omega(\sqrt{\log(n)/T})$. Rewriting in terms of $T$ we get $T=\Omega\left( \log(n)/v^2\right)$ for $\sqrt{\lg(n)}/n^{1/4} \leq v \leq a_1$ for some constant $a_1 > 0$.


\cite{nie2013open} conjectured that Klein and Young's lower bound of $v = \Omega(\sqrt{\lg(n)/T})$ holds for all $T\le a_1 \cdot n$ for some constant $a_1 > 0$. Rewriting in terms of $T$, they conjecture that $T=\Omega(\log(n)/v^2)$ holds for $\sqrt{ a_2/n}\le v \le a_3$ where $a_2,a_3 > 0$ are some constants.


\subsection{Our Results On Minimal Margin}
\label{sec:ourmini}
Our main result is a novel algorithm, called SparsiBoost, which refutes the conjectured lower bound in \cite{nie2013open}. Concretely, SparsiBoost guarantees a gap of $v$ with just $T=O(\log(nv^2)/v^2)$ hypotheses.
At a first glance it might seem SparsiBoost violates the lower bound of Klein and Young. Rewriting in terms of $v$, our upper bound becomes $v=O(\sqrt{\log(n/T)/T})$ (see \Cref{sec:appen} for details). When $T\le \sqrt{n}$ (the range of parameters where their lower bound applies), this becomes $v=O(\sqrt{\log(n)/T})$ which does not violate Klein and Young's lower bound. Moreover, our upper bound explains why both \cite{klein1999number} and \cite{nie2013open} were not able to generalize the lower bound to all $T = O(n)$: When $T = n^{1-o(1)}$, our algorithm achieves a gap of $v = O(\sqrt{\lg(n^{o(1)})/T}) = o(\sqrt{\lg(n)/T})$.

The high level idea of SparsiBoost is as follows: Given a desired gap $v$, we use AdaBoostV to find $m=O(\lg(n)/v^2)$ hypotheses $h_1,\dots,h_{m}$ and weights $w_1,\dots,w_{m}$ such that $\sum_i w_i h_i$ achieves a gap of $v/2$. We then carefully ``sparsify'' the vector $w = (w_1,\dots,w_{m})$ to obtain another vector $w'$ that has at most $T=O(\lg(nv^2)/v^2)$ non-zeroes. Our sparsification is done such that the margin of every single data point changes by at most $v/2$ when replacing $\sum_i w_i h_i$ by $\sum_i w'_i h_i$. In particular, this implies that the minimum margin, and hence gap, changes by at most $v/2$. We can now safely ignore all hypotheses $h_i$ where $w'_i=0$ and we have obtained the claimed gap of at most $v/2+v/2 =v$ using $T=O(\lg(nv^2)/v^2)$ hypotheses.

Our algorithm for sparsifying $w$ gives a general method for sparsifying a vector while approximately preserving a matrix-vector product. We believe this result may be of independent interest and describe it in more detail here: The algorithm is given as input a matrix $U\in[-1,+1]^{n \times m}$ and a vector $w \in \R^m$ where $\|w\|_1=1$. It then finds a vector $w'$ such that $\|Uw-Uw'\|_\infty = O(\log(n/T)/T), \|w'\|_0\le T$ and $\|w'\|_1=1$. Here $\|x\|_1=\sum_i|x_i|$, $\|x\|_\infty=\text{max}_i|x_i |$ and $\|x\|_0$ denotes the number of non-zero entries of $x$. When we use this result in SparsiBoost, we will define the matrix $U$ as the ``margin matrix'' that has $u_{ij} = y_ih_j(x_i)$. Then $(Uw)_i = \margin(x_i)$ and the guarantee $\|Uw-Uw'\|_\infty = O(\log(n/T)/T)$ will ensure that the margin of every single point changes by at most $O(\lg(n/T)/T)$ if we replace the weights $w$ by $w'$. Our algorithm for finding $w'$ is based on a novel connection to the celebrated but seemingly unrelated ``six standard devitations suffice'' result by \cite{spencer1985six} from the field of combinatorial discrepancy minimization. 

When used in SparsiBoost, the matrix $U$ is defined from the output of AdaBoostV, but the vector sparsification algorithm could just as well be applied to the hypotheses output by any boosting algorithm. Thus our results give a general method for sparsifying a boosting classifier while approximately preserving the margins of all points.


We complement our new upper bound with a matching lower bound. More concretely, we prove that there exists data sets $D$ of $n$ points and a corresponding set of base hypotheses $\mathcal{H}$, such that any linear combination of $T$ base hypotheses must have a gap of at least $v = \Omega(\sqrt{\lg(n/T)/T})$ for any $\lg n \leq T \leq a_1 n$ where $a_1>0$ is a constant. Rewriting in terms of $T$, one must use $T = \Omega(\lg(nv^2)/v^2)$ hypotheses from $\mathcal{H}$ to achieve a gap of $v$. This holds for any $v$ satisfying $\sqrt{a_2/n} < v \leq a_3$ for constants $a_2,a_3 > 0$ (see \Cref{sec:appen} for details). Interestingly, our lower bound proof also uses the discrepancy minimization upper bound by \cite{spencer1985six} in a highly non-trivial way. Our lower bound also shows that our vector sparsification algorithm is optimal for any $T \leq n/C$ for some universal constant $C > 0$.

\subsection{Doubts on Margin Theory and other Margin Bounds}
The first margin bound on boosted classifiers was introduced by \cite{schapire1998boosting}. Shortly after, \cite{breiman1999prediction} introduced a sharper minimal margin bound alongside Arc-GV. Experimentally Breimann found that Arc-GV produced better margins than AdaBoost on 98 percent of the training data, however, AdaBoost still obtained a better test error. This seemed to contradict margin theory: according to margin theory, better margins should imply better generalization. This caused Breimann to doubt margin theory. It was later discovered by \cite{reyzin2006boosting} that the comparison was unfair due to a difference in the complexity of the base hypotheses used by AdaBoost and Arc-GV. \cite{reyzin2006boosting} performed a variant of Breimann's experiments with decision stumps to control the hypotheses complexity. They found that even though Arc-GV produced a better minimal margin, AdaBoost produced a larger margin on almost all other points \cite{reyzin2006boosting} and that AdaBoost generalized better.

A few years later, \cite{emargin} introduced a sharper margin bound than Breimann's minimal margin bound. The generalization bound depends on a term called the \textit{Equilibrium Margin}, which itself depends on the margin distribution in a highly non-trivial way. This was followed by the $k$-margin bound by \cite{kmargin} that provide generalization bounds based  on the $k$'th smallest margin for any $k$. The bound gets weaker with increasing $k$, but stronger with increasing margin. In essence, this means that we get stronger generalization bounds if the margins are large for small values of $k$.

Recall from the discussion in Section~\ref{sec:ourmini} that our sparsification algorithm preserves all margins to within $O(\sqrt{\lg(n/T)/T})$ additive error. We combined this result with AdaBoostV to get our algorithm SparsiBoost which obtained an optimal trade-off between minimal margin and number of hypotheses. While minimal margin might be insufficient for predicting generalization performance, our sparsification algorithm actually preserves the full distribution of margins. Thus according to margin theory, the sparsified classifier should approximately preserve the generalization performance of the full unsparsified classifier. To demonstrate this experimentally, we sparsified a classifier trained with LightGBM \cite{lightgbm}, a highly efficient open-source implementation of Gradient Boosting \cite{gb1, gb2}. We compared the margin distribution and the test error of the sparsified classifier against a LightGBM classifier trained directly to have the same number of hypotheses. Our results (see Section~\ref{sec:experiments}) show that the sparsified classifier has a better margin distribution and indeed generalize better than the standard LightGBM classifier.

\section{SparsiBoost}
\label{sec:upper}
In this section we introduce SparsiBoost. The algorithm takes the following inputs: training data $D$, a target number of hypotheses $T$ and a base learning algorithm $A$ that returns hypotheses from a class $\mathcal{H}$ of possible base hypotheses. SparsiBoost initially trains $c\cdot T$ hypotheses for some appropriate $c$, by running AdaBoostV with the base learning algorithm $A$. It then removes the extra $c\cdot T-T$ hypotheses while attempting to preserve the margins on all training examples. 

In more detail, let $h_1,...,h_{cT} \in \mathcal{H}$ be the hypotheses returned by AdaBoostV with weights $w_1,...,w_{cT}$. Construct a margin matrix $U$ that contains the margin of every hypothesis $h_j$ on every point $x_i$ such that $u_{ij}=y_i h_j(x_i)$. Let $w$ be the vector of hypothesis weights, meaning that the $j$'th coordinate of $w$ has the weight $w_j$ of hypothesis $h_j$. Normalize $w=w/\|w\|_1$ such that $\|w\|_1=1$. The product $Uw$ is then a vector that contains the margins of the final linear combination on all points: $(Uw)_i=y_i\sum_{j=1}^{cT}w_j h_j(x_i)=\margin(x_i)$. Removing hypotheses while preserving the margins can be formulated as sparsifying $w$ to $w'$ while minimizing $\|Uw-Uw'\|_\infty$ subject to $\|w'\|_0\le T$ and $\|w'\|_1=1$.

See \Cref{algo:sparsi_boost} for pseudocode.
\begin{algorithm}[h]
\caption{SparsiBoost}\label{algo:sparsi_boost}
\textbf{Input:} Training data $D=\{(x_i, y_i)\}_{i=1}^n$ where $x_i\in X$ for some input space $X$ and $y_i\in \{-1,+1\}$.
Target number of hypotheses $T$ and base learning algorithm $A$. \\
\textbf{Output:} Hypotheses $h_1,\dots,h_k$ and weights $w_1,\dots,w_k$ with $k \leq T$, such that $\sum_i w_i h_i$ has gap $O(\sqrt{\lg(2 + n/T)/T})$ on $D$.\\

\textbf{1.} \hspace{1pt}Run AdaBoostV with base learning algorithm $A$ on training data $D$ to get $cT$ hypotheses $h_1,\dots,h_{cT}$ and weights $w_1,\dots,w_{cT}$ for the integer $c = \lceil \log(n)/\log(2 + n/T)\rceil$.\\
\textbf{2.} \hspace{1pt}Construct margin matrix $U\in[-1, +1]^{n\times cT}$ where $u_{ij}=y_ih_j(x_i)$. \\
\textbf{3.} \hspace{1pt}Form the vector $w$ with $i$'th coordinate $w_i$ and normalize $w \gets w/\|w\|_1$ so $\|w\|_1=1$. \\
\textbf{4.} \hspace{1pt}Find $w'$ such that $\|w'\|_0\le T$, $\|w'\|_1=1$ and $\|Uw-Uw'\|_\infty=O(\sqrt{\log(2 + n/T)/T})$. \\
\textbf{5.} \hspace{1pt}Let $\pi(j)$ denote the index of the $j$'th non-zero entry of $w'$.\\
\textbf{6.} \hspace{1pt}\textbf{Return} hypotheses $h_{\pi(1)},\dots,h_{\pi(\|w'\|_0)}$ with weights $w'_{\pi(1)},\dots,w'_{\pi(\|w'\|_0)}$.\\
\end{algorithm}\\
We still haven't described how to find $w'$ with the guarantees shown in \Cref{algo:sparsi_boost} step 4., i.e. a $w'$ with $\|Uw-Uw'\|_\infty=O(\sqrt{\log(2 + n/T)/T})$. It is not even clear that such $w'$ exists, much less so that it can be found efficiently. Before we dive into the details of how to find $w'$, we briefly demonstrate that indeed such a $w'$ would be sufficient to establish our main theorem:
\begin{theorem}
SparsiBoost is guaranteed to find a linear combination $w'$ of at most $T$ base hypotheses with gap $v=O(\sqrt{\log (2 + n/T)/T})$.
\label{thm:upper_bound}
\end{theorem}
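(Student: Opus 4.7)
The plan is to treat Theorem 1 as a short bookkeeping argument that combines two ingredients: the known convergence rate of AdaBoostV, and the sparsification guarantee promised in Step 4 of \Cref{algo:sparsi_boost} (whose existence and construction is the real technical work, deferred to a later section and which I would treat here as a black box).

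First I would record what AdaBoostV gives us after the $cT$ rounds of Step 1. The cited guarantee is that AdaBoostV achieves gap $v$ using $O(\log(n)/v^2)$ rounds, equivalently that after $k$ rounds the gap is $O(\sqrt{\log(n)/k})$. Plugging in $k = cT$ and using $c \ge \log(n)/\log(2+n/T)$, I would compute
\[
\sqrt{\frac{\log n}{cT}} \;\le\; \sqrt{\frac{\log n \cdot \log(2+n/T)}{T \log n}} \;=\; \sqrt{\frac{\log(2+n/T)}{T}},
\]
so the classifier $\sum_j w_j h_j$ produced before sparsification already has gap $O(\sqrt{\log(2+n/T)/T})$. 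Note that the normalization $\|w\|_1 = 1$ in Step 3 does not change this gap, since the margin is scale-invariant.

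Next I would use the Step 4 guarantee. Since $\|w\|_1 = \|w'\|_1 = 1$, the entries of $Uw$ and $Uw'$ are precisely the margins of the corresponding linear combinations on each training point. The bound $\|Uw - Uw'\|_\infty = O(\sqrt{\log(2+n/T)/T})$ therefore says that every individual margin shifts by at most $O(\sqrt{\log(2+n/T)/T})$. In particular, the minimum margin can drop by at most this amount, so the gap of $\sum_j w'_j h_j$ (with respect to the same optimum $\rho^*$, which depends only on $D$ and $\mathcal{H}$) is at most the AdaBoostV gap plus this shift. By the triangle inequality, the total gap is $O(\sqrt{\log(2+n/T)/T})$, exactly as claimed. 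The constant $c$ was chosen precisely so that these two contributions balance.

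The main obstacle in the overall story is of course not this combination step but the existence of $w'$ satisfying the properties in Step 4: a unit-$\ell_1$, $T$-sparse vector that preserves $Uw$ in $\ell_\infty$ up to $O(\sqrt{\log(2+n/T)/T})$ on the $n \times cT$ margin matrix $U$. I would flag that this is where the reduction to combinatorial discrepancy (Spencer's "six standard deviations suffice") enters, and that the current theorem follows conditionally on that construction; the proof above is otherwise just the arithmetic of combining the two error terms.
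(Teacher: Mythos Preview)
Your proposal is correct and follows essentially the same route as the paper: decompose the final gap as the AdaBoostV gap after $cT$ rounds plus the margin shift $\|Uw-Uw'\|_\infty$ from sparsification, then observe that the choice $c=\lceil \log(n)/\log(2+n/T)\rceil$ makes the first term $O(\sqrt{\log(2+n/T)/T})$ so both contributions are of the same order. The paper likewise treats the existence of $w'$ as a black box at this point and defers the discrepancy-based construction to a separate theorem.
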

\begin{proof}
We assume throughout the proof that a $w'$ with the guarantees claimed in \Cref{algo:sparsi_boost} can be found. Suppose we run AdaBoostV to get $cT$ base hypotheses $h_1,...,h_{cT}$ with weights $w_1,...,w_{cT}$. Let $\rho_{cT}$ be the minimal margin of the linear combination $\sum_i w_i h_i$ on the training data $D$, and let $\rho^*$ be the optimal minimal margin over all linear combinations of base hypotheses from $\mathcal{H}$. As proved in \cite{ratsch2005efficient}, AdaBoostV guarantees that the gap is bounded by 
$
\rho^* - \rho_{cT} = O( \sqrt{\log(n)/(cT)})
$.
Normalize $w=w/\|w\|_1$ and let $U$ be the margin matrix $u_{ij}=y_ih_j(x_i)$ as  in \Cref{algo:sparsi_boost}. Then $\rho_{cT}=\min_i(Uw)_i$. From our assumption, we can efficiently find $w'$ such that $\|w'\|_1=1$, $\|w'\|_0\le T$ and $\|Uw-Uw'\|_\infty = O(\sqrt{\log(2 + n/T)/T})$. Consider the hypotheses that correspond to the non-zero entries of $w'$. There are at most $T$.  Let $\rho_T$ be their minimal margin when using the corresponding weights from $w'$. Since $w'$ has unit $\ell_1$-norm, it follows that $\rho_T = \min_i(Uw')_i$ and thus $|\rho_T - \rho_{cT}| \leq \max_i |(Uw)_i - (Uw')_i|$, i.e.
$
	|\rho_{cT}-\rho_T| \leq \|Uw-Uw'\|_\infty = O(\sqrt{\log(2 + n/T)/T})
$.
We therefore have:
\begin{align*}
	\rho^* -\rho_T
	 = (\rho^* - \rho_{cT}) +  (\rho_{cT} - \rho_T) \leq \\ O(\sqrt{\log(n)/(cT)})+O(\sqrt{\log(2 + n/T)/T}).
\end{align*}
By choosing $c=\log(n)/\log(2 + n/T)$ (as in \Cref{algo:sparsi_boost}) we get that $\rho^*-\rho_T=O(\sqrt{\log(2 + n/T)/T})$. 
\end{proof}

The core difficulty in our algorithm is thus finding an appropriate $w'$ (step 4 in \Cref{algo:sparsi_boost}) and this is the focus of the remainder of this section. Our algorithm for finding $w'$ gives a general method for sparsifying a vector $w$ while approximately preserving every coordinate of the matrix-vector product $Uw$ for some input matrix $U$. The guarantees we give are stated in the following theorem:

\begin{theorem} \label{lemma:margin_preservation} (Sparsification Theorem) 
For all matrices $U\in[-1, +1]^{n\times m}$, all $w\in\R^{m}$ with $\|w\|_1=1$ and all $T \leq m$, there exists a vector $w'$ where $\|w'\|_1=1$ and $\|w'\|_0\le T$, such that $\|Uw-Uw'\|_\infty = O(\sqrt{\log(2 + n/T)/T})$.
\end{theorem}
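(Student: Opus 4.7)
The plan is to reduce the problem to an instance of combinatorial discrepancy and then iteratively halve the support of $w$ by appealing to Spencer's ``six standard deviations suffice'' theorem. First, I would perform some trivial reductions: by absorbing the sign of each $w_j$ into column $u_j$ of $U$, I may assume $w \ge 0$ coordinatewise (the entries of $U$ remain in $[-1,+1]$, and the output $w'$ can be sign-flipped back at the end). Next, I would discretize: round each $w_j$ down to a multiple of $1/M$ for some large integer $M$ (so $M$ is polynomial in $m$ and $T$). Since $U$ has entries in $[-1,1]$ and there are $m$ coordinates, this changes $Uw$ in $\ell_\infty$ by at most $m/M$, which is made negligible by choosing $M$ large. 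After discretization, I may view $w$ as a uniform distribution over $M$ vectors $v_1,\dots,v_M \in [-1,+1]^n$ (where the column $u_j$ is repeated with multiplicity $Mw_j$), and the sparsification problem becomes: select $T$ vectors (with weight $1/T$ each) whose average is close in $\ell_\infty$ to the uniform average of all $M$.

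The core of the argument is an iterative halving procedure. Suppose the current weight vector is uniform over a multiset of $M_k$ vectors. I apply a version of Spencer's theorem to find a balanced signing $\chi \in \{-1,+1\}^{M_k}$ with $\sum_j \chi_j = 0$ and discrepancy
\begin{equation*}
\Bigl\|\sum_{j=1}^{M_k}\chi_j v_j\Bigr\|_\infty \;=\; O\!\left(\sqrt{M_k\,\log(2 + n/M_k)}\,\right).
\end{equation*}
Keeping only the vectors with $\chi_j=+1$ and doubling their weight to $2/M_k$ produces a new uniform distribution over $M_k/2$ vectors whose weights still sum to $1$. A short calculation shows that the change in $U \cdot(\text{weights})$ equals $(1/M_k)\sum_j \chi_j v_j$, so each halving step perturbs every coordinate of $Uw$ by at most
\begin{equation*}
O\!\left(\sqrt{\log(2 + n/M_k)/M_k}\,\right).
\end{equation*}
Iterating halving from $M_0 = M$ down to $M_K = T$ takes $K = \log(M/T)$ steps, and the triangle inequality gives a total error bounded by $\sum_{k=0}^{K-1} O\bigl(\sqrt{\log(2 + n/M_k)/M_k}\bigr)$. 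Since $1/M_k$ doubles each step while $\log(2+n/M_k)$ grows only slowly, this sum is a geometric series dominated by its last term, yielding the target bound $O\bigl(\sqrt{\log(2+n/T)/T}\bigr)$.

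The main technical obstacle is obtaining the right discrepancy bound in a \emph{uniform} form valid across all regimes of $M_k$ versus $n$. Classical Spencer gives $O(\sqrt{M\log(2n/M)})$ when $M \le n$; for $M > n$ one needs a (much easier) bound such as $O(\sqrt{M})$ via random signing or Chernoff, and the expression $\sqrt{\log(2+n/M)}$ is precisely what smoothly interpolates the two cases. A second subtlety is enforcing that $\chi$ be \emph{balanced} (so that exactly $M_k/2$ vectors survive and the resulting distribution remains uniform with unit $\ell_1$-norm); this can be handled by adding the constraint $\sum_j \chi_j = 0$ as an additional ``set'' in the discrepancy instance, which only increases $n$ by one and does not affect the asymptotic bound. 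Finally, one must verify that the discretization error $m/M$ from the initial rounding is negligible compared to the final target, which is immediate by choosing $M$ sufficiently large—this does not affect the sparsity $T$ of the final output, only the running time of the procedure.
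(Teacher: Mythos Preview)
Your proposal is correct and shares the core architecture with the paper's proof: apply Spencer's theorem to (approximately) halve the support while incurring an $O(\sqrt{\log(2+n/M_k)/M_k})$ perturbation per step, then sum the resulting geometric series, which is dominated by its last term. The genuine difference is in how the two arguments cope with a \emph{non-uniform} weight vector $w$. The paper works directly on $w$: at each halving step it sets aside the $\|w\|_0/3$ largest-magnitude coordinates (these are copied verbatim into $w'$), so that every remaining entry has magnitude at most $3/\|w\|_0$; Spencer is then applied (twice) to this bounded-entry remainder after scaling by $1/\max_i|w_i|$, and an extra row is appended to keep the $\ell_1$ norm under control for the final renormalization. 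You instead \emph{discretize} $w$ up front, replacing column $u_j$ by $Mw_j$ copies so that the weight vector becomes uniform over a multiset of size $\approx M$; every halving step is then a clean application of Spencer to equal-weight columns, and the imbalance issue never arises.

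Your route is arguably more transparent as a pure existence argument, since it avoids the ``set aside the heavy third'' bookkeeping entirely. The paper's route, on the other hand, is directly algorithmic: each halving step operates on at most $m$ actual columns rather than on a blown-up multiset of size $M$ (which you need to take large, on the order of $m\sqrt{T}$, to make the rounding error negligible), so when Spencer is replaced by a constructive discrepancy algorithm the paper obtains running time $\tilde O(nT+T^3)$. One small point worth tightening in your write-up: Spencer with the extra all-ones row only gives $|\sum_j\chi_j|=O(\sqrt{M_k\log(2+n/M_k)})$, not $\sum_j\chi_j=0$ exactly. The fix is the same one the paper uses: keep whichever sign class has size $\le M_k/2$, double those weights, and then renormalize, absorbing the $O(\sqrt{\log(2+n/M_k)/M_k})$ renormalization error into the per-step bound.
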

Theorem~\ref{lemma:margin_preservation} is exactly what was needed in the proof of Theorem~\ref{thm:upper_bound}. Our proof of \Cref{lemma:margin_preservation} will be constructive in that it gives an algorithm for finding $w'$. To keep the proof simple, we will argue about running time at the end of the section.

The first idea in our algorithm and proof of Theorem~\ref{lemma:margin_preservation}, is to reduce the problem to a simpler task, where instead of reducing the number of hypotheses directly to $T$, we only halve the number of hypotheses:
\begin{lemma} \label{lemma:margin_preservation_half} 
For all matrices $U\in[-1, +1]^{n\times m}$ and $w\in\R^{m}$ with $\|w\|_1=1$, there exists $w'$ where $\|w'\|_0\le \|w\|_0/2$ and $\|w'\|_1= 1$, such that $\|Uw-Uw'\|_\infty =O(\sqrt{\log(2 + n/\|w\|_0)/\|w\|_0})$.
\end{lemma}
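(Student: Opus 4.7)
The plan is to reduce the halving problem to discrepancy minimization and invoke Spencer's ``six standard deviations suffice'' theorem. Set $k = \|w\|_0$; by absorbing the signs of $w_i$ into the columns of $U$ we may assume $w \ge 0$. I parametrize a candidate sparsification by a signing $\chi \in \{\pm 1\}^k$ on the support of $w$ via $w'_i = w_i(1+\chi_i) \in \{0, 2 w_i\}$. Under this parametrization, $\|Uw - Uw'\|_\infty = \|V\chi\|_\infty$ for the weighted matrix $V = U\operatorname{diag}(w)$ restricted to the support; the mass constraint $\|w'\|_1 = 1$ becomes $\langle w,\chi\rangle = 0$; and the support constraint $\|w'\|_0 \le k/2$ becomes $\sum_i \chi_i \le 0$. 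The task is thus to find a signing $\chi$ making two linear functionals vanish while minimizing $\|V\chi\|_\infty$.

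\paragraph{Applying Spencer.}
I append the rows $w^\top$ and $\mathbf{1}^\top$ to $V$ to form an augmented matrix $\tilde V$ of size $(n+2)\times k$, then invoke Spencer's theorem in its row-scaled form: for any $N \times k$ matrix whose $j$-th row has entries of magnitude at most $M_j$, there exists $\chi \in \{\pm 1\}^k$ with $|(\tilde V\chi)_j| = O\bigl(M_j\sqrt{k\log(2+N/k)}\bigr)$ for every row $j$ simultaneously. Since each entry of row $j$ of $V$ is bounded by $\max_i w_i$, this yields $\|V\chi\|_\infty = O\bigl((\max_i w_i)\sqrt{k\log(2+n/k)}\bigr)$, together with $|\langle w,\chi\rangle|$ on the same scale and $|\sum_i\chi_i| = O(\sqrt{k\log(2+n/k)})$. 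Flipping $O(\sqrt{k\log(2+n/k)})$ signs (chosen among indices with smallest $w_i$) forces $\sum_i\chi_i = 0$ exactly; each flip perturbs $\|V\chi\|_\infty$ by at most $2\max_i w_i$, so the added error has the same order. The residual imbalance in $\langle w,\chi\rangle$, again of the same order, is absorbed by rescaling $w' \gets w'/\|w'\|_1$, which changes $Uw'$ by only $O(\sqrt{\log(2+n/k)/k})$ in $\ell_\infty$ provided $\max_i w_i = O(1/k)$.

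\paragraph{Main obstacle.}
The argument above only matches the target bound once $\max_i w_i = O(1/k)$. To reduce to this balanced regime, I handle the ``heavy'' coordinates $H = \{i : w_i > 2/k\}$ separately: they satisfy $|H| < k/2$ because $\sum_i w_i = 1$, so the plan is to retain every heavy coordinate in $w'$ at its original weight, and apply the Spencer-based halving above only to the light coordinates $L = [k] \setminus H$, where $\max_{i \in L} w_i \le 2/k$ puts the restricted $V$ at the right scale. Ensuring that the combined output has support at most $k/2$ (rather than $|H| + |L|/2$) requires biasing the Spencer balance constraint on $L$ so that $\sum_{i \in L} \chi_i \le -2|H|$, i.e., so that only $\le k/2 - |H|$ light indices are selected. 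Carrying out this biased Spencer step without inflating the discrepancy---especially when $|H|$ is a constant fraction of $k/2$---is the most delicate technical ingredient; once it is in place, the overall error $\|Uw - Uw'\|_\infty$ is dominated by the Spencer discrepancy on the light submatrix, giving the claimed $O(\sqrt{\log(2+n/k)/k})$.
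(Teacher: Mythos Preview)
Your overall strategy---reduce to a signing problem, invoke Spencer on a weighted matrix, split into heavy and light coordinates so that the light part has $\max_i w_i = O(1/k)$---is the same as the paper's. The gap is in how you handle the heavy/light split.

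With your threshold-based definition $H = \{i : w_i > 2/k\}$, the set $H$ can have size arbitrarily close to $k/2$. In that regime you need to select only $k/2 - |H|$ light coordinates out of $|L| = k - |H|$, i.e.\ a fraction that tends to $0$ as $|H| \to k/2$. Spencer gives you a nearly balanced signing on $L$, so roughly $|L|/2$ are selected; to reach the target you must flip about $|H|/2$ additional signs from $+1$ to $-1$. Even if you flip only the smallest-weight light coordinates (each $\le 2/k$), the cumulative change to each coordinate of $V\chi$ can be as large as $2 \cdot (2/k) \cdot |H|/2 = 2|H|/k$, which is $\Theta(1)$ when $|H| = \Theta(k)$---far exceeding the target $O(\sqrt{\log(2+n/k)/k})$. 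So the ``biased Spencer'' step you flag as delicate is not just delicate; as stated it fails.

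The paper sidesteps this entirely by defining the heavy set \emph{by rank} rather than by threshold: take $R$ to be the $k/3$ coordinates of largest $|w_i|$. This forces $|R| = k/3$ exactly and guarantees $\max_{i \notin R} |w_i| \le 3/k$ (otherwise the $k/3$ entries in $R$ would each exceed $3/k$ and $\|w\|_1 > 1$). The remaining $2k/3$ light coordinates are then halved \emph{twice} via Spencer, yielding at most $k/6$ survivors, and $k/3 + k/6 = k/2$. No biasing is needed: each Spencer application is the vanilla balanced version, and the two errors add up geometrically to the same $O(\sqrt{\log(2+n/k)/k})$ bound. Your renormalization idea for enforcing $\|w'\|_1 = 1$ (via an appended row) is exactly what the paper does as well.
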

To prove Theorem~\ref{lemma:margin_preservation} from Lemma~\ref{lemma:margin_preservation_half}, we can repeatedly apply Lemma~\ref{lemma:margin_preservation_half} until we are left with a vector with at most $T$ non-zeroes. Since the loss $O(\sqrt{\log(2 + n/\|w\|_0)/\|w\|_0})$ has a $\sqrt{1/\|w\|_0}$ factor, we can use the triangle inequality to conclude that the total loss is a geometric sum that is asymptotically dominated by the very last invocation of the halving procedure. Since the last invocation has $\|w\|_0 > T$ (otherwise we would have stopped earlier), we get a total loss of $O(\sqrt{\lg(2 + n/T)/T})$ as desired. The formal proof can be found in \Cref{sec:halving}. 

The key idea in implementing the halving procedure \Cref{lemma:margin_preservation_half} is as follows: Let $\pi(j)$ denote the index of the $j$'th non-zero in $w$ and let $\pi^{-1}(j)$ denote the index $i$ such that $w_i$ is the $j$'th non-zero entry of $w$. First we construct a matrix $A$ where the $j$'th column of $A$ is equal to the $\pi(j)$'th column of $U$ scaled by the weight $w_{\pi(j)}$. The sum of the entries in the $i$'th row of $A$ is then equal to the $i$'th entry of $Uw$ (since $\sum_j a_{ij}=\sum_j w_{\pi^{-1}(j)} u_{i \pi^{-1}(j)} = \sum_{j : w_j\neq 0} w_j u_{ij}=\sum_j w_j u_{ij}=(Uw)_i$). Minimizing $\|Uw-Uw'\|_\infty$ can then be done by finding a subset of columns of $A$ that approximately preserves the row sums. This is formally expressed in \Cref{lemma:RSPL} below. For ease of notation we define $a\pm[x]$ to be the interval $[a-x, a+x]$ an $\pm[x]$ to be the interval $[-x, x]$. 
\begin{lemma}\label{lemma:RSPL}
	For all matrices  $A\in[-1, 1]^{n\times T}$ there exists a submatrix $\hat{A}\in[-1,1]^{n\times k}$ consisting of $k\le T/2$ distinct columns from $A$, such that for all $i$, it holds that $\sum_{j=1}^k\hat{a}_{ij}\in\frac{1}{2}\sum_{j=1}^Ta_{ij}\pm \left[O(\sqrt{T\log (2 + n/T)})\right]$.
\end{lemma}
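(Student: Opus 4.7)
The plan is to reduce \Cref{lemma:RSPL} directly to Spencer's celebrated ``six standard deviations suffice'' discrepancy bound, whose relevance was foreshadowed in the introduction. Concretely, I would apply the following form of Spencer's theorem, extended from $\{-1,+1\}$-valued to $[-1,1]$-valued matrices: for every $A \in [-1,1]^{n \times T}$ there exists $x \in \{-1,+1\}^T$ such that $\|Ax\|_\infty = O(\sqrt{T \log(2 + n/T)})$. Such an $x$ partitions the columns of $A$ into two classes, and the coordinate $(Ax)_i$ is exactly the difference between the two corresponding row-$i$ partial sums; the Spencer bound says this difference is small, which is precisely what forces each half-sum to be close to half of the full row sum.

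Executing this, I would set $S_+ = \{j : x_j = +1\}$ and $S_- = \{j : x_j = -1\}$, let $S$ be whichever of the two has size at most $T/2$ (one always does, since they partition $\{1,\dots,T\}$), and let $\sigma \in \{+1,-1\}$ be the corresponding sign. Combining the identities $\sum_{j \in S_+} a_{ij} - \sum_{j \in S_-} a_{ij} = (Ax)_i$ and $\sum_{j \in S_+} a_{ij} + \sum_{j \in S_-} a_{ij} = \sum_{j=1}^T a_{ij}$, I would obtain for every row $i$
\begin{align*}
\sum_{j \in S} a_{ij} \;=\; \tfrac{1}{2}\sum_{j=1}^T a_{ij} \;+\; \tfrac{\sigma}{2}(Ax)_i \;\in\; \tfrac{1}{2}\sum_{j=1}^T a_{ij} \;\pm\; \bigl[\tfrac{1}{2}\|Ax\|_\infty\bigr].
\end{align*}
Taking $\hat{A}$ to be the submatrix of $A$ whose columns are indexed by $S$ then yields the lemma with $k = |S| \leq T/2$ and error $O(\sqrt{T \log(2 + n/T)})$ on each row.

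The main obstacle is essentially packaged inside Spencer's theorem itself; once the theorem is in hand, the rest is elementary. Two specific points I would want to verify carefully are: first, that Spencer's partial-coloring / entropy method extends from $\{-1,+1\}$ entries to $[-1,1]$ entries with the same asymptotic bound (this is standard but worth a sentence of justification, since our matrix $A$ is only guaranteed to be in $[-1,1]^{n\times T}$); and second, that the unified bound $O(\sqrt{T\log(2+n/T)})$ correctly covers both classical regimes, namely $n \leq T$, where $\log(2+n/T)$ is $O(1)$ and one recovers Spencer's $O(\sqrt{T})$ square-matrix bound (padding by zero rows if necessary), and $n > T$, where one uses the wide-matrix version of Spencer. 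If running time becomes an issue (as flagged at the end of the section), an algorithmic version such as Bansal's or Lovett--Meka's can be substituted with the same asymptotic discrepancy, but for the purely existential statement of \Cref{lemma:RSPL} the classical non-constructive form is enough.
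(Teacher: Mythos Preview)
Your proposal is correct and follows essentially the same approach as the paper: apply Spencer's theorem to obtain $x\in\{-1,+1\}^T$ with $\|Ax\|_\infty=O(\sqrt{T\log(2+n/T)})$, take the smaller color class as the column set of $\hat{A}$, and use the two identities relating the partial sums to $(Ax)_i$ and the full row sum to conclude. Your treatment of the two regimes and the $[-1,1]$-entry extension matches what the paper does (your labels ``wide'' and ``square'' are swapped relative to the usual convention, but the bounds you state are the right ones).
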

Intuitively we can now use \Cref{lemma:RSPL} to select a subset $S$ of at most $T/2$ columns in $A$. We can then replace the vector $w$ with $w'$ such that $w'_i = 2w_i$ if $i = \pi^{-1}(j)$ for some $j \in S$ and $w'_i = 0$ otherwise. In this way, the $i$'th coordinate $(Uw')_i$ equals the $i$'th row sum in $\hat{A}$, scaled by a factor two. By \Cref{lemma:RSPL}, this in turn approximates the $i$'th row sum in $A$ (and thus $(Uw)_i$) up to additively $O(\sqrt{T \lg (2 + n/T)})$.

Unfortunately our procedure is not quite that straightforward since $O(\sqrt{T \lg(2 + n/T)})$ is way too large compared to $O(\sqrt{\lg(2+n/\|w\|_0)/\|w\|_0}) = O(\sqrt{\lg(2+n/T)/T}) $. Fortunately \Cref{lemma:RSPL} only needs the coordinates of $A$ to be in $[-1, 1]$.  We can thus scale $A$ by $1/\max_i |w_i|$ and still satisfy the constraints. This in turn means that the loss is scaled down by a factor $\max_i |w_i|$. However, $\max_i |w_i|$ may be as large as $1$ for highly unbalanced vectors. Therefore, we start by copying the largest $T/3$ entries of $w$ to $w'$ and invoke \Cref{lemma:RSPL} twice on the remaining $2T/3$ entries. This ensures that the $O(\sqrt{T \lg (2 + n/T)})$ loss in \Cref{lemma:RSPL} gets scaled by a factor at most $3/T$ (since $\|w\|_1=1$, all remaining coordinates are less than or equal to $3/T$), while leaving us with at most $T/3 + (2T/3)/4 = T/3 + T/6 = T/2$ non-zero entries as required. Since we normalize by at most $3/T$, the error becomes $\|Uw-Uw'\|_\infty = O(\sqrt{T \lg (2  +n/T)}/T) = O(\sqrt{\lg(2 + n/T)/T})$ as desired. As a last technical detail, we also need to ensure that $w'$ satisfies $\|w'\|_1=1$. We do this by adding an extra row to $A$ such that $a_{(n+1) j} = w_j$. In this way, preserving the last row sum also (roughly) preserves the $\ell_1$-norm of $w$ and we can safely normalize $w'$ as $w' \gets w'/\|w\|_1$. The formal proof is given in \Cref{sec:sum_preserve}. 

The final step of our algorithm is thus to select a subset of at most half of the columns from a matrix $A \in [-1 , 1]^{n \times T}$, while approximately preserving all row sums. Our idea for doing so builds on the following seminal result by Spencer:
\begin{theorem}(Spencer's Theorem~\cite{spencer1985six})
\label{thm:spencer}
 For all matrices $A\in [-1, +1]^{n \times T}$ with $T \leq n$, there exists $x\in\{-1,+1\}^{T}$ such that $\|Ax\|_\infty = O(\sqrt{T \ln (en/T)})$. For all matrices $A \in [-1,+1]^{n \times T}$ with $T > n$, there exists $x \in \{-1,+1\}^T$ such that $\|Ax\|_\infty = O(\sqrt{n})$.
\end{theorem}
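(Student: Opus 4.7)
The plan is to prove Spencer's theorem via the classical partial coloring method combined with Beck's entropy technique, iterated geometrically. The main ingredient is a partial coloring lemma: for any $A \in [-1,+1]^{n\times T}$, there exists $z \in \{-1, 0, +1\}^T$ with at least $T/4$ nonzero coordinates such that $\|Az\|_\infty$ matches the target bound. Given this lemma, the full theorem follows by iteration: commit the nonzero coordinates of $z$, recurse on the remaining at most $3T/4$ zero coordinates, and bound the total discrepancy by the triangle inequality. Because the target bound $\sqrt{T_k \ln(2+n/T_k)}$ decays geometrically as $T_k$ shrinks by a factor $3/4$ per level (with the logarithmic factor growing only mildly), the resulting sum is dominated by its first term.

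The partial coloring lemma is proved by the entropy method. Consider the $2^T$ colorings $x \in \{-1, +1\}^T$ chosen uniformly at random. For each row $A_i$ of $A$, the value $A_i \cdot x$ is a sum of $T$ independent mean-zero bounded variables, hence sub-Gaussian with parameter $\|A_i\|_2 \leq \sqrt{T}$; by Hoeffding, $\Pr[|A_i\cdot x| > t] \leq 2 e^{-t^2/(2T)}$. Choose a bucket width $w = C\sqrt{T \ln(2+n/T)}$ in the regime $T \leq n$ (or $w = C\sqrt{n}$ in the regime $T > n$), and bucket each value $A_i\cdot x$ into integer multiples of $w$. A direct calculation using the sub-Gaussian tail shows that the entropy of the bucket index $b_i(x)$ is exponentially small in $w^2/T$; tuning $C$ large enough drives the per-row entropy below $T/(20n)$, so the joint bucket profile $(b_1(x),\ldots,b_n(x))$ has total entropy at most $T/20$.

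By pigeonhole, some profile is attained by at least $2^{T - T/20} = 2^{19T/20}$ colorings. The number of colorings within Hamming distance $T/4$ of any fixed coloring is at most $2^{H(1/4) T} < 2^{0.82 T}$, so there must exist two colorings $x, y$ sharing the profile at Hamming distance at least $T/4$. Set $z := (x-y)/2 \in \{-1,0,+1\}^T$; then $z$ has support at least $T/4$, and each coordinate $(Az)_i = (A_i\cdot x - A_i\cdot y)/2$ has absolute value at most $w$ because $A_i\cdot x$ and $A_i\cdot y$ lie in the same bucket of width $w$. This yields the required partial coloring.

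The main obstacle I expect is matching the entropy budget cleanly across the boundary $T \approx n$. For $T \leq n$ the quantitative claim that a sub-Gaussian variable bucketed at width $\Theta(\sqrt{T \ln(n/T)})$ has entropy $O(T/n)$ requires a careful computation rather than just a qualitative tail bound. For $T > n$, the cleanest route is to split further: when $T \geq C_0 n$ for a large constant $C_0$, the entropy method with $w = \Theta(\sqrt{n})$ applies directly; when $n < T < C_0 n$, one can pad $A$ with $T - n$ rows of zeros and apply the just-established $T \leq n'$ case with $n' = T$, giving an $O(\sqrt{T}) = O(\sqrt{n})$ bound up to constants. Gluing these regimes into a single clean statement is where the technical care is needed, but the resulting constants are absorbed in the final big-$O$ bound.
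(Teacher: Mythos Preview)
The paper does not prove this statement at all: Theorem~\ref{thm:spencer} is quoted from \cite{spencer1985six} and used as a black box in both the upper and lower bound arguments. So there is no ``paper's own proof'' to compare against. That said, what you have sketched is precisely the classical partial-coloring/entropy proof of Spencer's result, and for the regime $T \le n$ your outline is correct as written.

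There is, however, a genuine gap in your treatment of the case $T > n$. Your iteration paragraph asserts that the per-level bound $\sqrt{T_k \ln(2+n/T_k)}$ decays geometrically and is dominated by its first term; but for $T_k > n$ this quantity is $\Theta(\sqrt{T_k})$, so summing gives only $O(\sqrt{T})$, not the claimed $O(\sqrt{n})$. Your attempted fix --- proving a partial coloring with $w = \Theta(\sqrt{n})$ whenever $T_k \ge C_0 n$ --- is correct as a single-step statement, but iterating it still costs $\Theta(\sqrt{n})$ at \emph{each} of the $\Theta(\log(T/n))$ levels before $T_k$ drops below $C_0 n$, yielding $O(\sqrt{n}\log(T/n))$ rather than $O(\sqrt{n})$. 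The standard remedies are either (i) to first run a linear-algebraic phase (walk inside $[-1,1]^T$ along $\ker A$, freezing coordinates that hit $\pm 1$) which reduces to at most $n$ live coordinates with \emph{zero} accumulated discrepancy, and only then invoke the $T \le n$ entropy argument; or (ii) to sharpen the partial-coloring bound in the regime $T_k \gg n$ to $w_k = O\big(\sqrt{T_k}\,2^{-\Omega(T_k/n)}\big)$ by exploiting the large per-row entropy budget $T_k/(5n)$, after which the geometric sum over levels is genuinely $O(\sqrt{n})$. Either fix is short, but one of them is needed to recover the stated $O(\sqrt{n})$ bound.
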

We use Spencer's Theorem as follows: We find a vector $x \in \{-1,+1\}^T$ with $\|Ax\|_\infty = O(\sqrt{T \ln(en/T)})$ if $T \leq n$ and with $\|Ax\|_\infty = O(\sqrt{n}) = O(\sqrt{T})$ if $T > n$. Thus we always have $\|Ax\|_\infty = O(\sqrt{T \lg(2 + n/T)})$. Consider now the $i$'th row of $A$ and notice that $|\sum_{j : x_j=1} a_{ij} - \sum_{j : x_j=-1} a_{ij}| \leq \|Ax\|_\infty$. That is, for every single row, the sum of the entries corresponding to columns where $x$ is $1$, is almost equal (up to $\pm \|Ax\|_\infty$) to the sum over the columns where $x$ is $-1$. Since the two together sum to the full row sum, it follows that the subset of columns with $x_i=1$ \emph{and} the subset of columns with $x_i=-1$ \emph{both} preserve the row sum as required by \Cref{lemma:RSPL}. Since $x$ has at most $T/2$ of either $+1$ or $-1$, it follows that we can find the desired subset of columns. We give the formal proof of \Cref{lemma:RSPL} using Spencer's Theorem in \Cref{sec:use_spencer}

We have summarized the entire sparsification algorithm in Algorithm~\ref{algo:sparsify}.

\begin{algorithm}[h]
\caption{Sparsification}\label{algo:sparsify}
\textbf{Input:} Matrix $U \in [-1, 1]^{n \times m}$, vector $w \in \R^{m}$ with $\|w\|_1=1$ and target $T \leq m$.\\
\textbf{Output:} A vector $w' \in \R^m$ with $\|w'\|_1=1, \|w'\|_0 \leq T$ and $\|Uw - Uw'\|_\infty = O(\sqrt{\lg(2 + n/T)/T})$.\\

\textbf{1.} \hspace{1pt} Let $w' \gets w$.\\
\textbf{2.} \hspace{1pt} \textbf{While} $\|w'\|_0 > T$: \\
\textbf{3.} \hspace{15pt} Let $R$ be the indices of the $\|w'\|_0/3$ entries in $w'$ with largest absolute value.\\
\textbf{4.} \hspace{15pt} Let $\omega := \max_{i \notin R} |w_{i}|$ be the largest value of an entry outside $R$.\\
\textbf{5.} \hspace{15pt} \textbf{Do Twice}:\\
\textbf{6.} \hspace{30pt} Let $\pi(1),\pi(2),\dots,...,\pi(k)$ be the indices of the non-zero entries in $w'$ that are not in $R$.\\
\textbf{7.} \hspace{30pt} Let $A \in [-1,1]^{(n+1) \times k}$ have $a_{ij} = u_{i \pi(j)} w'_{\pi(j)} / \omega$ for $i \leq n$ and $a_{(n+1)j} = |w'_{\pi(j)}|/\omega$.\\
\textbf{8.} \hspace{30pt} Invoke Spencer's Theorem to find $x \in \{-1,1\}^k$ such that $\|Ax\|_\infty = O(\sqrt{k \lg(2 + n/k)})$.\\
\textbf{9.} \hspace{30pt} Let $\sigma \in \{-1,1\}$ denote the sign such that $x_i=\sigma$ for at most $k/2$ indices $i$.\\
\textbf{10.} \hspace{25pt} Update $w'_i$ as follows:\\
\textbf{11.} \hspace{45pt} If there is a $j$ such that $i = \pi^{-1}(j)$ and $x_j=\sigma$: set $w'_i \gets 2w'_i$.\\
\textbf{12.} \hspace{45pt} If there is a $j$ such that $i = \pi^{-1}(j)$ and $x_j\neq \sigma$: set $w'_i \gets 0$.\\
\textbf{13.} \hspace{45pt} Otherwise ($i \in R$ or $w_i'=0$): set $w'_i \gets w'_i$.\\
\textbf{14.} \hspace{12pt} Update $w' \gets w'/\|w'\|_1$.\\
\textbf{15.} \hspace{1pt}\textbf{Return} $w'$.\\
\end{algorithm}
We make a few remarks about our sparsification algorithm and SparsiBoost.

\paragraph{Running Time.}
While Spencer's original result (Theorem~\ref{thm:spencer}) is purely existential, recent follow up work \cite{discmin1} show how to find the vector $x \in \{-1,+1\}^n$ in expected $\tilde{O}((n+T)^3)$ time, where $\tilde{O}$ hides polylogarithmic factors.  A small modification to the algorithm was suggested in \cite{discmin2}. This modification reduces the running time of Lovett and Meka's algorithm to  expected $\tilde{O}(nT + T^3)$. This is far more desirable as $T$ tends to be much smaller than $n$ in boosting. Moreover, the $nT$ term is already paid by simply running AdaBoostV. Using this in Step 7. of Algorithm~\ref{algo:sparsify}, we get a total expected running time of $\tilde{O}(nT + T^3)$. We remark that these algorithms are randomized and lead to different vectors $x$ on different executions.

\paragraph{Non-Negativity.}
Examining Algorithm~\ref{algo:sparsify}, we observe that the weights of the input vector are only ever copied, set to zero, or scaled by a factor two. Hence if the input vector $w$ has non-negative entries, then so has the final output vector $w'$. This may be quite important if one interprets the linear combination over hypotheses as a probability distribution.

\paragraph{Importance Sampling.}
Another natural approach one might attempt in order to prove our sparsification result, \Cref{lemma:margin_preservation}, would be to apply importance sampling. Importance sampling samples $T$ entries from $w$ with replacement, such that each entry $i$ is sampled with probability $|w_i|$. It then returns the vector $w'$ where coordinate $i$ is equal to $\sign(w_i)n_i/T$ where $n_i$ denotes the number of times $i$ was sampled and $\sign(w_i) \in \{-1,1\}$ gives the sign of $w_i$. Analysing this method gives a $w'$ with $\|Uw-Uw'\|_\infty = \Theta(\sqrt{\lg(n)/T})$ (with high probability), i.e. slightly worse than our approach based on discrepancy minimization. The loss in the $\lg$ is enough that if we use importance sampling in SparsiBoost, then we get no improvement over simply stopping AdaBoostV after $T$ iterations.

\subsection{Repeated Halving}
\label{sec:halving}
As discussed earlier, our matrix-vector sparsification algorithm (\Cref{lemma:margin_preservation} and Algorithm~\ref{algo:sparsify}) was based on repeatedly invoking \Cref{lemma:margin_preservation_half}, which halves the number of non-zero entries. In this section, we prove formally that the errors resulting from these halving steps are dominated by the very last round of halving. For convenience, we first restate the two results:
\begin{customthm}{\ref{lemma:margin_preservation}}(Sparsification Theorem) 
For all matrices $U\in[-1, +1]^{n\times m}$, all $w\in\R^{m}$ with $\|w\|_1=1$ and all $T \leq m$, there exists a vector $w'$ where $\|w'\|_1=1$ and $\|w'\|_0\le T$, such that $\|Uw-Uw'\|_\infty = O(\sqrt{\log(2 + n/T)/T})$.
\end{customthm}

\begin{customlem}{\ref{lemma:margin_preservation_half}}
For all matrices $U\in[-1, +1]^{n\times m}$ and $w\in\R^{m}$ with $\|w\|_1=1$, there exists $w'$ where $\|w'\|_0\le \|w\|_0/2$ and $\|w'\|_1= 1$, such that $\|Uw-Uw'\|_\infty =O(\sqrt{\log(2 + n/\|w\|_0)/\|w\|_0})$.
\end{customlem}

We thus set out to prove \Cref{lemma:margin_preservation} assuming \Cref{lemma:margin_preservation_half}.

\begin{proof}
Let us call the initial weight vector $w^{(0)}=w$. We use \Cref{lemma:margin_preservation_half} repeatedly and get $w^{(1)},w^{(2)},...,w^{(k)}=w'$ such that $\|w'\|_0\le T$ as wanted and every $w^{(i)}$ has $\|w^{(i)}\|_1=1$. Let $T_i=\|w^{(i)}\|_0$ and notice this gives a sequence of numbers $T_0, T_1, ..., T_k$ where $T_i\le T_{i-1}/2$, $T_{k-1} > T$ and $T_k\le T$. In particular, it holds that $T_{k-1}2^{k-i-1}\le T_{i}$. The total difference $\|Uw^{(0)}-Uw^{(k)}\|_\infty$ is then by the triangle inequality no more than $\sum_{i=0}^{k-1}\|Uw^{(i)}-Uw^{(i+1)}\|_\infty$. Each of these terms are bounded by \Cref{lemma:margin_preservation_half} which gives us 
\begin{align*}
O\left(\sum_{i=0}^{k-1}\sqrt{\log(2 + n/T_i)/T_i} \right) &= 
O\left(\sum_{i=0}^{k-1}\sqrt{\log(2 + n/T_{k-1})/(T_{k-1}2^{k-i-1})}\right) =\\
O\left(\sum_{i=0}^\infty \sqrt{\log(2 + n/T_{k-1})/(T_{k-1}2^{i})}\right) &=
O\left( \left(\sqrt{\log(2 + n/T_{k-1})/T_{k-1}} \right)\sum_{i=0}^\infty  1/\sqrt{2^{i}}\right)=
\\O\left(\sqrt{\log(2 + n/T)/T}\right).
\end{align*}
The last step follows since $T_{k-1} > T$. We have thus shown that the final vector $w' = w^{(k)}$ has $\|w'\|_0 \leq T$, $\|w'\|_1=1$ and $\|Uw-Uw'\|_\infty = \|Uw^{(0)}-Uw^{(k)}\|_\infty = O\left(\sqrt{\log(2 + n/T)/T}\right)$. This completes the proof of \Cref{lemma:margin_preservation}.
\end{proof}

\subsection{Halving via Row-Sum Preservation}
\label{sec:sum_preserve}
In this section, we give the formal details of how to use our result on row-sum preservation to halve the number of non-zeroes in a vector $w$. Let us recall the two results:
\begin{customlem}{\ref{lemma:margin_preservation_half}}
For all matrices $U\in[-1, +1]^{n\times m}$ and $w\in\R^{m}$ with $\|w\|_1=1$, there exists $w'$ where $\|w'\|_0\le \|w\|_0/2$ and $\|w'\|_1= 1$, such that $\|Uw-Uw'\|_\infty =O(\sqrt{\log(2 + n/\|w\|_0)/\|w\|_0})$.
\end{customlem}

\begin{customlem}{\ref{lemma:RSPL}}
	For all matrices  $A\in[-1, 1]^{n\times T}$ there exists a submatrix $\hat{A}\in[-1,1]^{n\times k}$ consisting of $k\le T/2$ distinct columns from $A$, such that for all $i$, it holds that $\sum_{j=1}^k\hat{a}_{ij}\in\frac{1}{2}\sum_{j=1}^Ta_{ij}\pm \left[O(\sqrt{T\log (2 + n/T)})\right]$.
\end{customlem}
We now use \Cref{lemma:RSPL} to prove \Cref{lemma:margin_preservation_half}. 
\begin{proof}
Our procedure corresponds to steps 2.-13. in Algorithm~\ref{algo:sparsify}. To clarify the proof, we expand the steps a bit and introduce some additional notation. Let $w$ be the input vector and let $R$ be the indices of the $\|w\|_0/3$ entries in $w$ with largest absolute value. Define $\bar{w}$ such that $\bar{w}_i = w_i$ if $i \in R$ and $\bar{w}_i = 0$ otherwise, that is, $\bar{w}$ contains the largest $\|w\|_0/3$ entries of $w$ and is zero elsewhere. Similarly, define $\hat{w} = w-\bar{w}$ as the vector containing all but the largest $\|w\|_0/3$ entries of $w$.

We will use \Cref{lemma:RSPL} twice in order to obtain a vector $w''$ with $\|w''\|_0 \leq \|\hat{w}\|_0/4$, $\|w''\|_1 \in \|\hat{w}\|_1 \pm [O(\sqrt{\log(2 + n/\|w\|_0)/\|w\|_0})]$ and $\|U\hat{w} - Uw''\|_\infty = O(\sqrt{\log(2 + n/\|w\|_0)/\|w\|_0})$. Moreover, $w''$ will only be non-zero in entries where $\hat{w}$ is also non-zero. We  finally set $w' = (\bar{w} + w'')/\|\bar{w} + w''\|_1$ as our sparsified vector.

We first argue that if we can indeed produce the claimed $w''$, then $w'$ satisfies the claims in \Cref{lemma:margin_preservation_half}: Observe that $\|w'\|_0 = \|\bar{w}\|_0 + \|w''\|_0 \leq \|w\|_0/3 + (2\|w\|_0/3)/4 \leq \|w\|_0/2$ as desired. Clearly we also have $\|w'\|_1=1$ because of the normalization. Now observe that:
\begin{eqnarray*}
\|Uw-Uw'\|_\infty &\leq& \|Uw-U(\bar{w}+w'')\|_\infty + \|U(\bar{w}+w'') - Uw'\|_\infty \\
&=& \|U(\bar{w} + \hat{w})-U(\bar{w}+w'')\|_\infty + \|U(w'\|\bar{w}+w''\|_1) - Uw'\|_\infty \\
&=& \|U\hat{w}-Uw''\|_\infty + \|Uw'(\|\bar{w}+w''\|_1-1)\|_\infty\\
&=& O(\sqrt{\log(2 + n/\|w\|_0)/\|w\|_0}) + \|Uw'\|_\infty (\|\bar{w}+w''\|_1-1)\\
&=& O(\sqrt{\log(2 + n/\|w\|_0)/\|w\|_0}) + \|Uw'\|_\infty (\|\bar{w}\|_1+\|w''\|_1-1).\\
\end{eqnarray*}
In the last step, we used that $w''$ has non-zeroes only where $\hat{w}$ has non-zeroes (and thus the non-zeroes of $\bar{w}$ and $w''$ are disjoint). Since $\|w'\|_1=1$ and all entries of $U$ are in $[-1,1]$, we get $\|Uw'\|_\infty \leq 1$. We also see that:
\begin{eqnarray*}
\|\bar{w}\|_1+\|w''\|_1-1 &\in& \|\bar{w}\|_1+\|\hat{w}\|_1 -1 \pm  [O(\sqrt{\log(2 + n/\|w\|_0)/\|w\|_0})] \\
&=& \|\bar{w}+\hat{w}\|_1 -1 \pm  [O(\sqrt{\log(2 + n/\|w\|_0)/\|w\|_0})] \\
&=& \|w\|_1 -1 \pm  [O(\sqrt{\log(2 + n/\|w\|_0)/\|w\|_0})]\\
&=& 0\pm  [O(\sqrt{\log(2 + n/\|w\|_0)/\|w\|_0})].
\end{eqnarray*}
We have thus shown that
$$
\|Uw-Uw'\|_\infty = O(\sqrt{\log(2 + n/\|w\|_0)/\|w\|_0})
$$
as claimed. So what remains is to argue that we can find $w''$ with the claimed properties. Finding $w''$ corresponds to the Do Twice part of Algorithm~\ref{algo:sparsify} (steps 3.-12.). We first compute $\omega = \max_i |\hat{w}_i|$ and let $w'' = \hat{w}$. We then execute the following twice:
\begin{enumerate}
\item Define $\pi(1),\dots,\pi(\ell)$ as the list of all the indices $i$ where $w''_i \neq 0$. Also define $\pi^{-1}(j)$ as the index $i$ such that $i = \pi(j)$, i.e. $\pi^{-1}(j)$ is the index of the $j$'th non-zero coordinate of $w''$.
\item Form the matrix $A \in [-1,1]^{(n+1)\times \ell}$ where $a_{ij} = u_{i \pi(j)} w''_{\pi(j)}/\omega$ for $i\leq n$ and $a_{(n+1)j} = |w''_{\pi(j)}|/\omega$.
\item Invoke \Cref{lemma:RSPL} to obtain a matrix $\hat{A}$ consisting of no more than $k \leq \ell/2$ distinct columns from $A$ where for all rows $i$, we have $\sum_{j=1}^k \hat{a}_{ij} \in \frac{1}{2}\sum_{j=1}^\ell a_{ij}\pm \left[O(\sqrt{T\log (2 + n/T)})\right]$.
\item Update $w''$ as follows: 
\begin{enumerate}
\item We let $w''_i \gets 2w''_i$ if there is a $j$ such that $i = \pi^{-1}(j)$ and $j$ is a column in $\hat{A}$.
\item Otherwise we let $w''_i \gets 0$.
\end{enumerate}
\end{enumerate}
After the two executions, we get that the number of non-zeroes in $w''$ is at most $\|\hat{w}\|_0/4$ as claimed. Step 4. above effectively scales every coordinate of $w''$ that corresponds to a column that was included in $\hat{A}$ by a factor two. It sets coordinates not chosen by $\hat{A}$ to $0$. What remains is to argue that $\|U\hat{w} - Uw''\|_\infty = O(\sqrt{\log(2 + n/\|w\|_0)/\|w\|_0})$ and that $\|w''\|_1  \in \|\hat{w}\|_1 \pm [O(\sqrt{\log(2 + n/\|w\|_0)/\|w\|_0})]$. For this, let $A \in [-1,1]^{(n+1) \times T}$ denote the matrix formed in step 2. during the first iteration. Then $T = \|\hat{w}\|_0$. Let $\bar{A} \in [-1,1]^{(n+1) \times k}$ denote the matrix returned in step 3. of the first iteration. Similarly, let $\hat{A}$ denote the matrix formed in step 2. of the second iteration and let $\hat{\hat{A}} \in [-1,1]^{(n+1) \times k'}$ denote the matrix returned in step 3. of the second iteration. We see that $\hat{A} = 2 \bar{A}$. By \Cref{lemma:RSPL}, it holds for all rows $i$ that:
\begin{eqnarray*}
2\sum_{j=1}^{k'}\hat{\hat{a}}_{ij} &\in& \sum_{j=1}^k \hat{a}_{ij} \pm \left[O(\sqrt{k\log(2 + n/k)})\right]\\
&=& 2 \sum_{j=1}^k \bar{a}_{ij} \pm \left[O(\sqrt{k\log(2 + n/k)})\right] \\
&\subseteq& \left(\sum_{j=1}^T a_{ij} \pm   \left[O(\sqrt{k\log(2 + n/k)})\right] \right) \pm  \left[O(\sqrt{T\log(2 + n/T)})\right] \\
&\subseteq& \sum_{j=1}^T a_{ij} \pm   \left[O(\sqrt{T\log(2 + n/T)})\right].
\end{eqnarray*}
But $2\sum_{j=1}^{k'}\hat{\hat{a}}_{ij} = \sum_j u_{ij} w''_j/\omega = (Uw'')_i/\omega$ and $\sum_{j=1}^T a_{ij} = \sum_j u_{ij} \hat{w}_j/\omega = (U\hat{w})_i/\omega$. Hence:
\begin{eqnarray*}
(Uw'')_i/\omega &\in& (U\hat{w})_i/\omega \pm \left[O(\sqrt{T\log(2 + n/T)})\right] \Rightarrow \\
(Uw'')_i &\in&  (U\hat{w})_i\pm \left[O(\omega \sqrt{T\log(2 + n/T)})\right]
\end{eqnarray*}
which implies that $\|U\hat{w}-Uw''\|_\infty = O(\omega \sqrt{T\log(2 + n/T)})$. But $T = \|\hat{w}\|_0 \leq \|w\|_0$ and $\omega = \max_i |\hat{w}|_i$. Since $\|w\|_1=1$ and $\bar{w}$ contained the largest $\|w\|_0/3$ entries, we must have $\omega \leq 3/\|w\|_0$. Inserting this, we conclude that
$$
\|U\hat{w}-Uw''\|_\infty = O(\sqrt{\|w\|_0 \log(2 + n/\|w\|_0)}/\|w\|_0) = O(\sqrt{\lg(2 + n/\|w\|_0)/\|w\|_0}).
$$
The last step is to prove that $\|w''\|_1 \in \|\hat{w}\|_1 \pm [O(\sqrt{\log(2 + n/\|w\|_0)/\|w\|_0})]$. Here we focus on row $(n+1)$ of $A$ and use that we showed above that $2\sum_{j=1}^{k'}\hat{\hat{a}}_{(n+1)j} = \sum_{j=1}^T a_{(n+1)j} \pm   \left[O(\sqrt{T\log(2 + n/T)})\right]$. This time, we have $\sum_{j=1}^T a_{(n+1)j} = \sum_{j=1}^T |\hat{w}_j|/\omega = \|\hat{w}\|_1/\omega$ and $2 \sum_{j=1}^{k'} \hat{\hat{a}}_{(n+1)j} = \sum_j |w''_j|/\omega = \|w''\|_1/\omega$. Therefore we conclude that $\|w''\|_1 \in \|\hat{w}\|_1 \pm \left[O(\omega \sqrt{T\log(2 + n/T)})\right] \subseteq \|\hat{w}\|_1 \pm [O(\sqrt{\log(2 + n/\|w\|_0)/\|w\|_0})]$ as claimed.
\end{proof}

\subsection{Finding a Column Subset}
\label{sec:use_spencer}
In this section we give the detailed proof of how to use Spencer's theorem to select a subset of columns in a matrix while approximately preserving its row sums. The two relevant lemmas are restated here for convenience:
\begin{customlem}{\ref{lemma:RSPL}}
	For all matrices  $A\in[-1, 1]^{n\times T}$ there exists a submatrix $\hat{A}\in[-1,1]^{n\times k}$ consisting of $k\le T/2$ distinct columns from $A$, such that for all $i$, it holds that $\sum_{j=1}^k\hat{a}_{ij}\in\frac{1}{2}\sum_{j=1}^Ta_{ij}\pm \left[O(\sqrt{T\log (2 + n/T)})\right]$.
\end{customlem}
and
\begin{customthm}{\ref{thm:spencer}} (Spencer's Theorem~\cite{spencer1985six})
 For all matrices $A\in [-1, +1]^{n \times T}$ with $T \leq n$, there exists $x\in\{-1,+1\}^{T}$ such that $\|Ax\|_\infty = O(\sqrt{T \ln (en/T)})$. For all matrices $A \in [-1,+1]^{n \times T}$ with $T > n$, there exists $x \in \{-1,+1\}^T$ such that $\|Ax\|_\infty = O(\sqrt{n})$.
\end{customthm}
We now prove \Cref{lemma:RSPL} using Spencer's Theorem:
\begin{proof}Let $A\in[-1,1]^{n\times T}$ as in \Cref{lemma:RSPL} and let $a_i$ denote the $i$'th row of $A$. By Spencer's Theorem there must exist $x\in\{-1,+1\}^T$ s.t. $\|Ax\|_\infty=\max_i |\langle a_i, x \rangle| = O(\sqrt{T\ln(en/T)})$ if $T \leq n$ and $\|Ax\|_\infty = O(\sqrt{n}) = O(\sqrt{T})$ if $T > n$. This ensures that $\|Ax\|_\infty = O(\sqrt{T \lg(2 + n/T)})$ Either the number of $+1$'s or $-1$'s in $x$ will be $\le \frac{T}{2}$. Assume without loss of generality that the number of $+1$'s is $k\le \frac{T}{2}$. Construct a matrix $\hat{A}$ with columns of $A$ corresponding to the entries of $x$ that are $+1$. Then $\hat{A}\in[-1,1]^{n\times k}$ for $k\le \frac{T}{2}$ as wanted. It is then left to show that $\hat{A}$ preserves the row sums. For all $i$, we have:
\begin{align*}
	|\langle a_i,x\rangle|=\left|\sum_{j=1}^T a_{ij}x_j\right|= O(\sqrt{T\log(2 + n/T)})
	\quad&\Rightarrow\quad
	\sum_j a_{ij}x_j \in \pm\left[O(\sqrt{T\log(2 + n/T)})\right]\\
	&\Rightarrow \sum_{j:x_j=+1}a_{ij} - \sum_{j:x_j=-1}a_{ij}\in\pm\left[ O(\sqrt{T\log(n/T)})\right]\\
	&\Rightarrow \sum_{j:x_j=+1} a_{ij}\in \sum_{j:x_j:-1} a_{ij} \pm\left[ O(\sqrt{T\log(n/T)})\right].
\end{align*}
Using this we can bound the $i$'th row sum of $\hat{A}$
\begin{align*}
	\sum_{j=1}^k \hat{a}_{ij}=\sum_{j: x_j=+1}a_{ij}
		&= \frac{1}{2}\sum_{j: x_j=+1}a_{ij}+\frac{1}{2}\sum_{j: x_j=+1}a_{ij}\\
		&\in \frac{1}{2}\sum_{j: x_j=+1}a_{ij} + \frac{1}{2}
				\left( \sum_{j: x_j=-1}a_{ij} \pm 
						\left[
								O(\sqrt{T \log(2 + n/T)})
						\right] 
				\right) \\
		&=\frac{1}{2}\sum_j a_{ij}\pm \left[O(\sqrt{T\log(2 + n/T)})\right].
	\end{align*}
This concludes the proof. 
\end{proof}

\section{Lower Bound}
In this section, we prove our lower bound stating that there exist
a data set and corresponding set of base hypotheses $\mathcal{H}$, such that if one uses
only $T$ of the base hypotheses in $\mathcal{H}$, then one cannot obtain a gap smaller than
$\Omega(\sqrt{\log(n/T)/T})$. Similarly to the approach taken
in~\cite{nie2013open}, we model a data set $D=\{(x_i,y_i)\}_{i=1}^n$ of $n$ data points and a corresponding
set of $k$ base hypotheses $\mathcal{H} = \{h_1,\dots,h_k\}$ as an $n \times k$ matrix
$A$. The entry $a_{i,j}$ is equal to $y_i h_j(x_i)$. We prove our
lower bound for binary classification where the hypotheses take values
only amongst $\{-1,+1\}$, meaning that $A \in \{-1,+1\}^{n \times
  k}$. Thus an entry $a_{i,j}$ is $+1$ if hypothesis $h_j$ is correct
on point $x_i$ and it is $-1$ otherwise. We remark that proving the lower
bound under the restriction that $h_j(x_i)$ is among $\{-1, +1\}$
instead of $[-1,+1]$ only strengthens the lower bound.

Notice that if $w \in \R^k$ is a vector with $\|w\|_1=1$, then $(Aw)_i$ gives exactly the margin
on data point $(x_i,y_i)$ when using the linear combination $\sum_j
w_j h_j$ of base hypotheses. The optimal minimum margin $\rho^*$ for a matrix
$A$ is thus equal to
$
\rho^* := \max_{w \in \R^k : \|w\|_1 = 1} \min_i (Aw)_i.
$
We now seek a matrix $A$ for which $\rho^*$ is at least $\Omega(\sqrt{\log(n/T)/T})$ larger than  $\min_i (Aw)_i$
for all $w$ with $\|w\|_0 \leq T$ and $\|w\|_1 = 1$.
If we can find such a matrix, it implies the existence of a data
set (rows) and a set of base hypotheses (columns) for which any linear
combination of up to $T$ base hypotheses has a gap of
$\Omega(\sqrt{\log(n/T)/T})$. The lower bound thus holds regardless of
how an algorithm would try to determine which linear combination to
construct.

When showing the existence of a matrix $A$ with a large gap, we will
fix $k = n$, i.e. the set of base hypotheses $\mathcal{H}$ has
cardinality equal to the number
of data points. The following theorem shows the existence of the desired matrix $A$:

\begin{theorem}
\label{thm:gap}
There exists a universal constant $C > 0$ such that for all sufficiently large
$n$ and all $T$ with $\ln n \leq T \leq n/C$, there exists a matrix
$A \in \{-1,+1\}^{n \times n}$ such that: 1) Let $v \in \R^n$ be the vector with all coordinates equal to
  $1/n$. Then all coordinates of $Av$
are greater than or equal to $-O(1/\sqrt{n})$. 2) For every vector $w
\in \R^n$ with $\|w\|_0 \leq T$ and $\|w\|_1 = 1$, it holds that:
$
\min_i (Aw)_i \leq -\Omega\left( \sqrt{\log(n/T)/T}\right).
$ 
\end{theorem}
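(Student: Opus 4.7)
The plan is to take a random Rademacher matrix and correct its row sums using Spencer's Theorem. Let $A' \in \{-1,+1\}^{n \times n}$ be a matrix with i.i.d.\ uniformly random $\pm 1$ entries. Applying Spencer's Theorem (Theorem~\ref{thm:spencer}) with the parameter $T=n$ yields a sign vector $x \in \{-1,+1\}^n$ with $\|A'x\|_\infty = O(\sqrt n)$. Define $A$ by flipping the $j$-th column of $A'$ whenever $x_j = -1$, i.e.\ $a_{ij} = x_j a'_{ij}$, so $A \in \{-1,+1\}^{n \times n}$. Then $(Av)_i = \tfrac{1}{n}\sum_j x_j a'_{ij} = (A'x)_i/n$, which is at most $O(1/\sqrt n)$ in absolute value for every $i$, establishing property~(1). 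This is the non-trivial use of Spencer advertised in the introduction: for a plain random matrix, a union bound over rows would only give $|(Av)_i| = O(\sqrt{\log n/n})$, which is too weak.

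For property~(2), note that the column sign flip is an isometry on sparse $\ell_1$-unit vectors. Writing $D_x = \operatorname{diag}(x)$, we have $(Aw)_i = (A' D_x w)_i$, and $D_x w$ has the same sparsity and $\ell_1$-norm as $w$. Hence property~(2) for $A$ is equivalent to the statement that for every $w$ with $\|w\|_0 \le T$ and $\|w\|_1 = 1$, the random matrix $A'$ satisfies $\min_i(A'w)_i \le -\Omega(\sqrt{\log(n/T)/T})$. Since this event depends only on $A'$ and not on the Spencer witness, it suffices to show that $A'$ has the property with positive probability; Spencer's Theorem is then invoked deterministically on that good realization. The two requirements thus decouple cleanly.

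The probabilistic verification follows the $\epsilon$-net plus anti-concentration pattern. Fix $w$ with support of size $\le T$ and $\|w\|_1 = 1$. Each row $(A'w)_i$ is an independent Rademacher sum, sub-Gaussian with variance $\|w\|_2^2 \in [1/T,1]$ (the lower bound is Cauchy--Schwarz). A Berry--Esseen/Paley--Zygmund lower-tail estimate gives $\Pr[(A'w)_i \le -c] \ge p$ for some $p = (T/n)^{O(1)}$ when $c = \Theta(\sqrt{\log(n/T)/T})$, because the normalized deviation $c/\|w\|_2$ is at most $c\sqrt T = O(\sqrt{\log(n/T)})$. Independence across the $n$ rows gives $\Pr[\forall i: (A'w)_i > -c] \le (1-p)^n \le e^{-pn}$. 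A standard $\epsilon$-net of the sparse $\ell_1$-unit ball (support of size $T$, sign pattern, and an $\ell_1$-net of the probability simplex) has log-cardinality $O(T\log(n/(T\epsilon)))$. Setting $\epsilon = \Theta(c)$ and $T \le n/C$ with $C$ a sufficiently large constant makes $pn$ dominate the log net size, so a union bound produces an $A'$ on which condition~(2) holds for every net point. Extending to arbitrary sparse $w$ uses $\|A'(w-\tilde w)\|_\infty \le \|w-\tilde w\|_1 \le \epsilon$.

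The main obstacle is the anti-concentration bound and its tight interplay with the net size. The estimate $p = (T/n)^{O(1)}$ must hold uniformly for all $w$ with $\|w\|_2 \in [1/\sqrt T,1]$: the regime $\|w\|_2 \approx 1/\sqrt T$ demands the sharpest Gaussian-style lower tail, while for $w$ close to a single spike the tail probability is already $\Omega(1)$ because $c \ll \|w\|_2$ and an easier argument suffices. Careful bookkeeping of the constants is needed to push the admissible range up to $T \le n/C$, which is where obtaining $\log(n/T)$ rather than $\log n$ in the final gap becomes delicate; this is essential for matching the algorithmic upper bound $O(\log(nv^2)/v^2)$ achieved by SparsiBoost.
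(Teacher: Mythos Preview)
Your reduction via Spencer's Theorem is exactly how the paper proves Theorem~\ref{thm:gap}: it too starts from a matrix satisfying property~(2), applies Spencer to produce a sign vector $x$, and flips columns to obtain property~(1) while preserving~(2) through $Aw = B(D_x w)$. So the outer layer is right.

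The gap is in your net argument for property~(2). When $T$ is close to $n/C$ the union bound fails, and not by a constant. With $\epsilon = \Theta(c) = \Theta(\sqrt{\log(n/T)/T})$, the log-net-size $O(T\log(n/(T\epsilon)))$ equals $\Theta\bigl(T\log\bigl(n/\sqrt{T\log(n/T)}\bigr)\bigr)$, which for $T = n/C$ is $\Theta(n\log n)$. On the other side, $p \le 1$ always, so $pn \le n$; and at $T = n/C$ the normalized deviation $c\sqrt{T}$ is $\Theta(1)$, giving $p = \Theta(1)$ and $pn = \Theta(n)$. Hence $pn = \Theta(n) \ll \Theta(n\log n) = \log|W|$, and no choice of constants closes the union bound. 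Your scheme works up to roughly $T \le n/\mathrm{polylog}(n)$, reproducing the prior Klein--Young range, but breaks exactly in the new regime $T = \Theta(n)$ that the theorem is about.

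The paper's fix is structural, not bookkeeping. It uses a much \emph{coarser} net (coordinates are multiples of $1/(b_3 T)$), so the log-net-size is only $O(T\log(n/T))$; the price is that the rounding error $\|w-\tilde w\|_1$ is now $\Theta(1)$, so your extension step $\|A'(w-\tilde w)\|_\infty \le \|w-\tilde w\|_1$ is useless. Instead, the paper proves that with high probability $A'$ is ``typical'' for every net vector $\tilde w$: (i) at least $\ln\binom{n}{T}$ coordinates of $A'\tilde w$ lie below $-c_2 F(\tilde w,\tau)$, and (ii) for \emph{every} set of $\ln\binom{n}{T}$ rows, the restricted product has small $\ell_1$-norm. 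To pass from net vectors to arbitrary $w$, the residual $w - \tilde w$ is recursively re-rounded to the net, writing it as a geometrically decaying sum of scaled net vectors; property~(ii) controls the cumulative correction on the row set from~(i), leaving at least one coordinate of $A'w$ sufficiently negative. This two-sided typicality plus recursive decomposition is the missing idea.
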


Quite surprisingly, Theorem~\ref{thm:gap} shows that for any $T$ with $\ln n
\leq T \leq n/C$,  there is a matrix
$A \in \{-1,+1\}^{n \times n}$ for which the \emph{uniform}
combination of base hypotheses $\sum_{j=1}^n h_j / n$ has a minimum
margin that is much higher than anything that can be obtained using
only $T$ base hypotheses. More concretely, let $A$ be a matrix
satisfying the properties of Theorem~\ref{thm:gap} and let $v \in
\R^n$ be the vector with all coordinates $1/n$. Then 
$
\rho^* := \max_{w \in \R^k : \|w\|_1 = 1} \min_i (Aw)_i \geq \min_i
(Av)_i = -O(1/\sqrt{n}).
$
By the second property in Theorem~\ref{thm:gap}, it follows that any
linear combination of at most $T$ base hypotheses must have a gap of
$
-O(1/\sqrt{n}) - \left(-\Omega\left(\sqrt{\log(n/T)/T}\right)\right) = \Omega\left(\sqrt{\log(n/T)/T}\right).
$
This is precisely the claimed lower bound and also shows that our
vector sparsification algorithm from \Cref{lemma:margin_preservation} is optimal for any $T
\leq n/C$. To prove
Theorem~\ref{thm:gap}, we first show the existence of a matrix $B \in
\{-1,+1\}^{n \times n}$ having the second property. We then apply
Spencer's Theorem (Theorem~\ref{thm:spencer}) to
``transform'' $B$ into a matrix $A$ having both properties. We find it
quite surprising that Spencer's discrepancy minimization result finds
applications in both our upper and lower bound.

That a matrix satisfying the second property in Theorem~\ref{thm:gap}
exists is expressed in the following lemma:

\begin{lemma}
\label{lem:smallcoord}
There exists a universal constant $C > 0$ such that for all
sufficiently large $n$ and all $T$ with $\ln n \leq T \leq n/C$,
there exists a matrix $A \in \{-1,+1\}^{n \times n}$ such that for
every vector $w \in \R^n$ with $\|w\|_0 \leq T$ and $\|w\|_1 = 1$ it
holds that:
$
\min_i (Aw)_i \leq -\Omega\left( \sqrt{\ln(n/T)/T}\right).
$
\end{lemma}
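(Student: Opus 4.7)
I would prove \Cref{lem:smallcoord} by the probabilistic method. Let $A\in\{-1,+1\}^{n\times n}$ have i.i.d.\ uniform entries and set the target threshold $\alpha = c_0\sqrt{\ln(n/T)/T}$ for a small absolute constant $c_0 > 0$ to be chosen. The goal is to show that, with positive probability over $A$, every $w\in\R^n$ with $\|w\|_0\le T$ and $\|w\|_1=1$ satisfies $\min_i (Aw)_i \le -\alpha/2$ (the factor $1/2$ will be absorbed by a net argument). For any such $w$, $(Aw)_i = \sum_j A_{ij}w_j$ is a symmetric Rademacher sum with variance $\sigma^2 = \|w\|_2^2$, and $\|w\|_0\le T$ together with $\|w\|_1=1$ forces $\sigma \ge 1/\sqrt{T}$ by Cauchy--Schwarz, so $\sigma^2\in[1/T,1]$.

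\textbf{Per-vector anti-concentration and row independence.}
I would first establish, for a fixed $w$ and a fixed row $i$, a lower-tail estimate
\[
 \Pr\bigl[(Aw)_i \le -\alpha\bigr] \ge p,
\]
splitting on $\sigma$. If $\sigma \ge 2\alpha$, the fourth-moment bound $\E[(Aw)_i^4]\le 3\sigma^4$ combined with Paley--Zygmund gives $\Pr[(Aw)_i\le -\sigma/2] = \Omega(1)$, so $p = \Omega(1)$. Otherwise $\sigma < 2\alpha$, in which case $\alpha/\sigma \le c_0\sqrt{\ln(n/T)}$; since $\max_j|w_j|\le\sigma$, a Berry--Esseen-type lower tail for Rademacher sums, combined with the Gaussian tail bound $\Phi(-t)\ge \tfrac{1}{2t\sqrt{2\pi}}e^{-t^2/2}$ applied at $t=\alpha/\sigma$, yields $p = \Omega\bigl((T/n)^{c_0^2/2}/\sqrt{\ln(n/T)}\bigr)$. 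Because the rows of $A$ are independent, the events $\{(Aw)_i\le -\alpha\}$ are independent across $i$, so
\[
 \Pr\bigl[\min_i (Aw)_i > -\alpha\bigr] \le (1-p)^n \le e^{-np}.
\]

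\textbf{Discretization, union bound, and main obstacle.}
To extend the single-$w$ bound to all admissible $w$, I would build a net. The support of $w$ lies in one of at most $\binom{n}{T}\le (en/T)^T$ index sets, and on each support the unit $\ell_1$-ball admits a standard $\epsilon$-net of size at most $(3/\epsilon)^T$. Because every entry of $A$ has absolute value $1$, $\|A(w-w')\|_\infty \le \|w-w'\|_1 \le \epsilon$, so choosing $\epsilon = \alpha/2$ and union bounding over the net gives
\[
 \Pr\bigl[\exists w:\ \min_i (Aw)_i > -\alpha/2\bigr] \le \bigl(Cn/(\epsilon T)\bigr)^T\, e^{-np},
\]
which can be driven strictly below $1$ by taking $c_0$ small and the constant $C$ in the hypothesis $T\le n/C$ large, yielding the lemma. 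The main technical obstacle is the anti-concentration estimate when $\sigma < 2\alpha$: the Gaussian lower tail that one tries to exploit is only of order $(T/n)^{c_0^2/2}$ (up to $\log$ factors), while the Berry--Esseen error is of order $\max_j|w_j|/\sigma$, and $c_0$ and $C$ must be tuned jointly so that this error is genuinely lower order than the Gaussian tail and so that $np$ dominates the log-net size $T\ln\bigl(n/(\epsilon T)\bigr)$ uniformly over $\ln n\le T\le n/C$.
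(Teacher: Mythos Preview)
Your probabilistic-method framework is natural, but the union bound you outline does \emph{not} close for $T$ in the full range $\ln n\le T\le n/C$; it breaks precisely in the regime $T=\Theta(n)$ that the lemma must cover. With $\epsilon=\alpha/2=(c_0/2)\sqrt{\ln(n/T)/T}$, the log-cardinality of your net is
\[
\ln\!\left[\binom{n}{T}(C'/\epsilon)^T\right]\;=\;T\ln\!\frac{n}{T}\;+\;\frac{T}{2}\ln\!\frac{T}{\ln(n/T)}\;+\;O(T),
\]
which for $T=n/C$ is $\Theta\!\bigl((n/C)\ln n\bigr)$. On the other hand, your per-row tail probability satisfies only $p\ge c\,(T/n)^{c_0^2/2}$, so $np=O(n)$ at $T=n/C$, and no choice of constants $c_0$ and $C$ can make $np$ dominate $(n/C)\ln n$ as $n\to\infty$. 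In short, an $\epsilon$-net fine enough that the crude bound $\|A(w-w')\|_\infty\le\|w-w'\|_1\le\epsilon$ controls the remainder is simply too large when $T$ is linear in $n$.

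The paper's proof confronts exactly this obstruction. It uses a much \emph{coarser} net (coordinates on a grid of width $1/(b_3T)$) whose size is only $\binom{n}{T}^{O(1)}$, so the log-net size stays at $O\!\bigl(T\ln(n/T)\bigr)$. The price is that the rounding remainder $w-f(w)$ has $\ell_1$ norm $\Theta(1)$ rather than $O(\alpha)$, so one cannot bound $\|A(w-f(w))\|_\infty$ by its $\ell_1$ norm. Instead, the paper establishes a second ``typicality'' property: for every net vector $v$ and every set of $k=\ln\binom{n}{T}$ rows, the corresponding submatrix satisfies $\|\bar A v\|_1\le b\,k\,F(v,\tau(T))$. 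One then writes the remainder as an infinite geometric series of scaled net vectors and uses this $\ell_1$ control on the $k$ ``good'' rows to show the remainder cannot cancel more than a fraction of them. This two-sided typicality plus recursive decomposition is the missing idea in your outline.

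A secondary issue: your anti-concentration step via Berry--Esseen is not robust. The error term is of order $\sum_j|w_j|^3/\sigma^3$, which can be $\Theta(1)$ (e.g.\ when one coordinate of $w$ carries most of $\sigma$), while your target tail probability $p\sim(T/n)^{c_0^2/2}$ may be arbitrarily small. The paper sidesteps this by invoking the sharp two-sided Rademacher tail bounds of Montgomery-Smith (stated as Theorem~\ref{thm:tailbounds}), phrased in terms of the functional $F(w,t)$, which give the correct lower tail uniformly over all $w$.
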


We prove Lemma~\ref{lem:smallcoord} in later subsection and move on to show how we use it in combination with Spencer's Theorem to prove Theorem~\ref{thm:gap}:

\begin{proof}
Let $B$ be a matrix satisfying the statement in
Lemma~\ref{lem:smallcoord}. Using Spencer's Theorem (Theorem~\ref{thm:spencer}), we get that there exists a vector $x \in
\{-1,+1\}^n$ such that $\|B x \|_\infty = O(\sqrt{n \ln(e n/n)}) = O(\sqrt{n})$. Now form the
matrix $A$ which is equal to $B$, except that the $i$'th column is
scaled by $x_i$. Then $A\textbf{1} = Bx$ where $\textbf{1}$ is the
all-ones vectors. Normalizing the all-ones vector by a factor $1/n$
yields the vector $v$ with all coordinates equal to $1/n$. Moreover,
it holds that $\|Av\|_\infty = \|B x\|_\infty/n = O(1/\sqrt{n})$,
which in turn implies that $\min_i (Av)_i \geq -O(1/\sqrt{n})$.

Now consider any vector $w \in \R^n$ with $\|w\|_0 \leq T$ and
$\|w\|_1 = 1$. Let $\tilde{w}$ be the vector obtained from $w$ by
multiplying $w_i$ by $x_i$. Then $Aw = B\tilde{w}$. Furthermore
$\|\tilde{w}\|_1 = \|w\|_1 = 1$ and $\|\tilde{w}\|_0  = \|w\|_0 \leq T$. It
follows from Lemma~\ref{lem:smallcoord} and our choice of $B$ that
$
\min_i  (Aw)_i = \min_i (B \tilde{w})_i \leq -\Omega\left( \sqrt{\ln(n/T)/T}\right).
$
\end{proof}

The proof of Lemma~\ref{lem:smallcoord} is deferred to the following subsections, here we
sketch the ideas. At a high level, our proof goes as
follows: First argue that for a fixed vector $w \in \R^n$ with
$\|w\|_0 \leq T$ and $\|w\|_1=1$ and a random matrix $A \in
\{-1,+1\}^{n \times n}$ with each coordinate chosen uniformly and independently, it
holds with very high probability that $Aw$ has many coordinates that
are less than $-a_1( \sqrt{\ln(n/T)/T})$ for a constant $a_1$. Now intuitively
we would like to union bound over all possible vectors $w$ and argue
that with non-zero probability, all of them satisfies this simulatenously. This is not directly possible as there are infinitely
many $w$. Instead, we create a \emph{net} $W$ consisting of a collection
of carefully chosen vectors. The net will have the property that any
$w$ with $\|w\|_1=1$ and $\|w\|_0 \leq T$ will be close to a vector
$\tilde{w} \in W$. Since the net is not too large, we can union bound
over all vectors in $W$ and find a matrix $A$ with the above
property for all vectors in $W$ simultaneously.

For an arbitrary vector $w$ with $\|w\|_1 = 1$ and $\|w\|_0 \leq T$, we can then write $Aw = A\tilde{w} +
A(w-\tilde{w})$ where $\tilde{w} \in W$ is close to $w$. Since $\tilde{w} \in W$ we get that $A\tilde{w}$ has
many coordinates that are less than $-a_1( \sqrt{\ln(n/T)/T})$. The
problem is that $A(w-\tilde{w})$ might cancel out these negative
coordinates. However, $w-\tilde{w}$ is a very short vector so this
seems unlikely. To prove it formally, we further show that for
every vector $v \in W$, there are also few coordinates in $Av$ that are greater
than $a_2( \sqrt{\ln(n/T)/T})$ in absolute value for some constant
$a_2 > a_1$. We can then round
$(w-\tilde{w})/\|w-\tilde{w}\|_1$ to a vector in the net, apply this reasoning and recurse
on the difference between $(w-\tilde{w})/\|w-\tilde{w}\|_1$ and the net
vector. We give the full details in the following subsections.

\subsection{Preliminaries}
In the following, we will introduce a few tail bounds that will be
necessary in our proof of Lemma~\ref{lem:smallcoord}.

\begin{definition}
For a vector $w \in \R^n$, define $F(w,t)$ as follows: Let $i_j$ be the index such that $w_{i_j}$ is the $j$'th largest
coordinate of $w$ in terms of absolute value. Then:
$$
F(w,t) := \left(\sum_{j=1}^{\lfloor
    t^2 \rfloor} |w_{i_j}| + t \left(\sum_{j=\lfloor t^2 \rfloor +
      1}^n w_{i_j}^2\right)^{1/2}\right).
$$
\end{definition}

We need the following tail upper and lower bounds for the distribution of $\langle a, x \rangle$ where $a \in \R^n$ and $x \in \{-1,+1\}^n$ has uniform random and independent Rademacher coordinates:

\begin{theorem}[{\cite{rademacherbounds}}]
\label{thm:tailbounds}
There exists universal constants $c_1,c_2,c_3 > 0$ such that the
following holds: For any vector $w \in \R^n$, if $x \in \{-1,+1\}^n$
has uniform random and independent Rademacher coordinates, then:
$$
\forall t > 0 : \Pr\left[\langle w, x\rangle > c_1 F(w,t) \right] \leq e^{-t^2/2}
$$
and
$$
\forall t > 0 : \Pr\left[\langle w, x\rangle > c_2 F(w,t) \right] \geq c_3^{-1} e^{-c_3 t^2}.
$$
\end{theorem}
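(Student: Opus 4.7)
The plan is as follows. Let $m=\lfloor t^2\rfloor$, let $S = \{i_1,\ldots,i_m\}$ denote the indices of the top $m$ coordinates of $w$ by absolute value, and set $\sigma = \bigl(\sum_{j>m} w_{i_j}^2\bigr)^{1/2}$, so that $F(w,t) = \sum_{i\in S}|w_i| + t\sigma$. Split $\langle w,x\rangle = H + L$ with $H=\sum_{i\in S} w_i x_i$ and $L=\sum_{i\notin S} w_i x_i$. The upper tail is the easy direction: since $|H|\le \sum_{i\in S}|w_i|$ deterministically, choosing $c_1=1$ the event $\langle w,x\rangle > c_1 F(w,t)$ forces $L > t\sigma$, and Hoeffding's inequality applied to $L$ (a Rademacher sum of variance $\sigma^2$ with every entry at most $|w_{i_{m+1}}|$) immediately bounds this probability by $e^{-t^2/2}$.

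For the reverse inequality, I would first assume $w_i\ge 0$ without loss of generality (replacing $x_i$ by $\mathrm{sign}(w_i)\,x_i$, which is still Rademacher), and then condition on the event $E$ that $x_i=+1$ for every $i\in S$. This event has $\Pr[E]=2^{-m}\ge e^{-(\ln 2)t^2}$, and conditional on $E$ the heavy part equals exactly $\sum_{i\in S}w_i$ while the coordinates outside $S$ remain i.i.d.\ Rademacher. It therefore suffices to establish an anti-concentration bound for the Rademacher sum $L$ of the form
\[
\Pr\bigl[\,L \ge c\,t\sigma\,\bigr] \;\ge\; e^{-C t^2}
\]
for some absolute constants $c,C>0$; multiplying this by $\Pr[E]$ and translating back through the decomposition yields the claimed reverse tail once $c_2,c_3$ are tuned.

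The main obstacle is precisely this Rademacher anti-concentration step, because the tail coordinates are only guaranteed to satisfy $|w_{i_{m+1}}|\le \sigma$, which is not small enough on its own for a one-shot Berry--Esseen argument to produce a Gaussian-like lower tail at level $t\sigma$. I would handle this by a short peeling argument. Write $M=\max_{i\notin S}|w_i|$ and fix a small absolute constant $\delta$. If $M/\sigma \le \delta$, a Berry--Esseen estimate shows $L/\sigma$ is within $O(\delta)$ of standard Gaussian in Kolmogorov distance, and the Gaussian lower tail at level $c\,t$ yields $\Pr[L \ge c\,t\sigma]\ge e^{-O(t^2)}$. Otherwise $M = \Theta(\sigma)$, which means at most $O(1)$ coordinates outside $S$ carry $\Omega(\sigma^2)$ of the variance; I move those into $S$, paying only a constant multiplicative factor in $\Pr[E]$, and recurse on the reduced tail whose new $M/\sigma$ has dropped by a constant factor. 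The peeling terminates after $O(1)$ rounds in the Berry--Esseen regime, and the accumulated constant losses telescope into a single $e^{-O(t^2)}$. Combining everything gives the constants $c_2,c_3$ stated in the theorem.
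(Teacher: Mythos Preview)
The paper does not prove this theorem; it is quoted from \cite{rademacherbounds} (Montgomery--Smith's two-sided tail characterisation for Rademacher sums) and used as a black box in the lower-bound section. So there is no in-paper proof to compare your argument against.

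On the substance of your sketch: the upper bound with $c_1=1$ via Hoeffding on the light part $L$ is correct and is the standard argument. The lower bound, however, has a genuine gap. You reduce to showing $\Pr[L \ge c\,t\sigma]\ge e^{-Ct^2}$ for the tail Rademacher sum and then propose a peeling that terminates in $O(1)$ rounds once $M/\sigma\le\delta$. Neither step holds as written. For the first, if the tail consists of a single coordinate of size $a$ then $\sigma=a$ and $\Pr[L\ge c\,ta]=0$ whenever $ct>1$, so the displayed inequality is simply false in general. For the second, take geometrically decreasing tail weights $w_{m+j}=2^{-j}$: removing every coordinate above $\delta\sigma$ leaves the ratio $M'/\sigma'$ exactly equal to the old $M/\sigma$, so the peeling never reaches the Berry--Esseen regime and certainly does not stop in $O(1)$ rounds.

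The repair is to stop insisting that $L$ alone reach level $t\sigma$ and instead split on which term of $F(w,t)$ dominates. If $\sum_{j\le m}|w_{i_j}|\ge t\sigma$, then conditioning on $E$ together with the trivial $\Pr[L\ge 0]\ge 1/2$ already yields $\langle w,x\rangle \ge \tfrac12 F(w,t)$ with probability at least $2^{-m-1}$. If instead $t\sigma>\sum_{j\le m}|w_{i_j}|\ge m\,|w_{i_{m+1}}|$, then automatically $M=|w_{i_{m+1}}|<\sigma/t$ (up to the floor in $m=\lfloor t^2\rfloor$) with no peeling required; every tail coordinate is now at most $\sigma/t$, so the tail can be greedily partitioned into roughly $t^2$ independent blocks each of variance $\Theta(\sigma^2/t^2)$, a Paley--Zygmund / fourth-moment bound makes each block exceed $c\sigma/t$ with constant probability, and independence gives $\Pr[L\ge c\,t\sigma]\ge e^{-O(t^2)}$. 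This case split is essentially how the cited result is proved.
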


\begin{lemma}
\label{lem:Fineq}
For any vector $w \in \R^n$, integer $t > 0$ and any $a \geq 1$, it holds that:
$$
F(w,a t) \leq a^2 F(w,t)
$$
\end{lemma}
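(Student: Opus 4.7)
The plan is to exploit the fact that when $t$ is an integer we have $\lfloor t^2\rfloor = t^2$, and combine this with the sortedness of the $|w_{i_j}|$ to trade off the two terms of $F$. Set $s := t^2$ and $S := \lfloor (at)^2 \rfloor$, and note $s \leq S \leq a^2 t^2$. I will bound $F(w, at)$ term-by-term against $F(w, t)$.

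For the head term, the point is that since the coordinates are sorted by decreasing absolute value, $|w_{i_j}| \leq |w_{i_s}|$ for every $j > s$ and $|w_{i_j}| \geq |w_{i_s}|$ for every $j \leq s$. From the latter I get $|w_{i_s}| \leq \tfrac{1}{s}\sum_{j=1}^s |w_{i_j}| = \tfrac{1}{t^2}\sum_{j=1}^s |w_{i_j}|$. Then
\[
\sum_{j=s+1}^{S} |w_{i_j}| \;\leq\; (S-s)\,|w_{i_s}| \;\leq\; \frac{(a^2-1)t^2}{t^2}\sum_{j=1}^{s}|w_{i_j}| \;=\; (a^2-1)\sum_{j=1}^{s}|w_{i_j}|,
\]
so that $\sum_{j=1}^{S}|w_{i_j}| \leq a^2 \sum_{j=1}^{s}|w_{i_j}|$.

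For the tail term, $S\geq s$ means $\sum_{j=S+1}^n w_{i_j}^2 \leq \sum_{j=s+1}^n w_{i_j}^2$, and $a \geq 1$ gives $at \leq a^2 t$. Hence
\[
at\,\Bigl(\sum_{j=S+1}^n w_{i_j}^2\Bigr)^{1/2} \;\leq\; a^2 t\,\Bigl(\sum_{j=s+1}^n w_{i_j}^2\Bigr)^{1/2}.
\]
Adding the two bounds yields $F(w, at) \leq a^2 F(w, t)$.

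There is no real obstacle here; the only thing that has to be spotted is the inequality $|w_{i_s}| \leq \tfrac{1}{t^2}\sum_{j=1}^{s}|w_{i_j}|$, which lets one transfer the extra $(S-s)$ head coordinates in $F(w, at)$ into the first $s$ head coordinates of $F(w, t)$ at the cost of a factor $a^2-1$. Everything else is monotonicity in $a$ and in the splitting index, and the integrality of $t$ is used only to remove the floor in $\lfloor t^2\rfloor$.
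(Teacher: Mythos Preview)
Your proof is correct and follows essentially the same approach as the paper's proof: both bound the head sum $\sum_{j=1}^{\lfloor a^2 t^2\rfloor}|w_{i_j}|$ by $a^2\sum_{j=1}^{t^2}|w_{i_j}|$ and then use monotonicity of the tail together with $at\le a^2 t$. The only difference is that the paper writes the head-sum inequality as a single unjustified step, whereas you spell out the sortedness argument $|w_{i_s}|\le \tfrac{1}{t^2}\sum_{j\le s}|w_{i_j}|$ that actually makes it go through.
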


\begin{proof}
By definition, we see that:
\begin{eqnarray*}
F(w, at) &=& \left(\sum_{j=1}^{\lfloor
    a^2 t^2 \rfloor} |w_{i_j}| + at \left(\sum_{j=\lfloor a^2t^2 \rfloor +
      1}^n w_{i_j}^2\right)^{1/2}\right) \\
&\leq& \left(a^2 \sum_{j=1}^{\lfloor
     t^2 \rfloor} |w_{i_j}| + a^2 t \left(\sum_{j=\lfloor a^2t^2 \rfloor +
      1}^n w_{i_j}^2\right)^{1/2}\right) \\
&\leq& \left(a^2 \sum_{j=1}^{\lfloor
     t^2 \rfloor} |w_{i_j}| + a^2 t \left(\sum_{j=\lfloor t^2 \rfloor +
      1}^n w_{i_j}^2\right)^{1/2}\right) \\
&=& a^2F(w,t).
\end{eqnarray*}
\end{proof}

\begin{corollary}
\label{cor:l1}
There exists a universal constant $c_4 > 0$ such that the
following holds: Let $A$ be a $k \times n$ matrix with entries independent and uniform
random amongst $\{-1,+1\}$. For any vector $w \in \R^n$ and any
integer $t \geq 4$, we have:
$$
\Pr\left[\|Aw\|_1 > c_4 k F(w,t) \right] \leq e^{-t^2 k/4}.
$$
\end{corollary}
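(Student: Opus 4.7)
The plan is to view $\|Aw\|_1 = \sum_{i=1}^k |\langle a_i, w\rangle|$ as a sum of $k$ independent non-negative random variables (where $a_i$ denotes the $i$'th row of $A$, a uniform Rademacher vector) and run a standard Chernoff argument. The two ingredients already in hand — the threshold tail bound of \Cref{thm:tailbounds} and the scaling identity of \Cref{lem:Fineq} — combine to yield an exponential tail on each individual summand, and from there Markov applied to the joint moment generating function does the rest. Throughout, let $M = c_1 F(w,t)$.

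First I would promote the threshold bound into a full exponential tail. The upper tail in \Cref{thm:tailbounds} states $\Pr[\langle a_i, w\rangle > c_1 F(w,s)] \leq e^{-s^2/2}$ for every $s > 0$, and by the symmetry of the Rademacher distribution the same bound (up to a factor $2$) holds for $|\langle a_i, w\rangle|$. For a threshold $\tau \geq M$ I would set $s = t\sqrt{\tau/M} \geq t$; then \Cref{lem:Fineq} gives $c_1 F(w,s) \leq c_1 (s/t)^2 F(w,t) = \tau$, so
$$\Pr[|\langle a_i, w\rangle| > \tau] \leq 2 e^{-s^2/2} = 2\exp\!\left(-\frac{t^2 \tau}{2M}\right).$$

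Next I would estimate the moment generating function $E[e^{\lambda X_i}]$ of $X_i := |\langle a_i, w\rangle|$ at $\lambda := t^2/(4M)$. Writing $E[e^{\lambda X_i}] = 1 + \int_0^\infty \lambda e^{\lambda \tau}\Pr[X_i > \tau]\,d\tau$ and splitting the integral at $\tau = M$, the contribution from $\tau \leq M$ is bounded by $e^{\lambda M} = e^{t^2/4}$, while the contribution from $\tau \geq M$ is at most $\int_M^\infty 2\lambda e^{(\lambda - t^2/(2M))\tau}\,d\tau = \int_M^\infty 2\lambda e^{-t^2 \tau/(4M)}\,d\tau = 2 e^{-t^2/4}$. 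Hence $E[e^{\lambda X_i}] \leq 3 e^{t^2/4}$.

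Finally, by independence of the rows and Markov's inequality applied to $e^{\lambda \sum_i X_i}$, for $s := c_4 k F(w,t)$,
$$\Pr\!\left[\sum_{i=1}^k X_i > c_4 k F(w,t)\right] \leq e^{-\lambda s} \left(E[e^{\lambda X_1}]\right)^k \leq \exp\!\left(-\frac{c_4 k t^2}{4 c_1} + k\ln 3 + \frac{k t^2}{4}\right).$$
Taking $c_4$ a sufficiently large multiple of $c_1$ (for instance $c_4 = 3 c_1$), and using the hypothesis $t \geq 4$ so that $k \ln 3 \leq kt^2/8$, the exponent is at most $-kt^2/4$, which gives the claimed bound. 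The proof is conceptually routine; the only delicate part is bookkeeping the constants so that the $kt^2/4$ and $k\ln 3$ slack in the MGF bound can be comfortably absorbed into $-kt^2/4$ using the hypothesis $t \geq 4$, and this is exactly why the corollary needs that lower bound on $t$.
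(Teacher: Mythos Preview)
Your proof is correct, and it takes a genuinely different route from the paper's. Both arguments rest on the same two ingredients --- the threshold tail bound of \Cref{thm:tailbounds} and the scaling inequality of \Cref{lem:Fineq} --- but they assemble them differently. The paper argues combinatorially: if $\|Aw\|_1 > 2c_1 k F(w,t)$, then there must exist a profile of nonnegative integers $(z_1,\dots,z_k)$ with $\sum_h z_h = k$ such that $|(Aw)_h| \geq c_1 z_h F(w,t)$ for every $h$; each such profile has probability at most $2^k e^{-kt^2/2}$ by independence and \Cref{lem:Fineq}, and a union bound over the at most $\binom{2k-1}{k} \leq 2^{2k}$ profiles finishes. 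Your argument is instead the standard Chernoff--MGF route: you promote the threshold bound into an exponential tail $\Pr[X_i > \tau] \leq 2e^{-t^2\tau/(2M)}$ via \Cref{lem:Fineq}, integrate to control $\E[e^{\lambda X_i}]$, and then apply Markov to the product MGF. Your approach is slightly more textbook and arguably more transparent about \emph{why} the sum concentrates (sub-exponential summands), while the paper's avoids the MGF integration and is a bit more direct. The constants come out comparable ($c_4 = 2c_1$ in the paper versus $c_4 = 3c_1$ for you), and in both cases the hypothesis $t \geq 4$ serves the same purpose: absorbing the polynomial-in-$k$ slack ($2^{3k}$ for them, $3^k$ for you) into the $e^{-kt^2/2}$ gain.
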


\begin{proof}
Let $t \geq 4$ be integer. For each way of choosing $k$ non-negative integers $z_1,\dots,z_k$
such that $\sum_i z_i = k$, define an event $E_{z_1,\dots,z_k}$ that
happens when:
$$
\forall h : |(Aw)_h| \geq c_1 z_h F(w,t).
$$
We wish to bound $\Pr[E_{z_1,\dots,z_k}]$. Using Lemma~\ref{lem:Fineq}
we get:
\begin{eqnarray*}
|(Aw)_h| \geq c_1 z_h F(w,t) &\Rightarrow& \\
|(Aw)_h| \geq c_1F(w, \sqrt{z_h} t),\\
\end{eqnarray*}
whenever $z_h \geq 1$. Using that the distribution of $(Aw)_h$ is
symmetric around $0$, we get from Theorem~\ref{thm:tailbounds} that
$$
\Pr[E_{z_1,\dots,z_k}] \leq \prod_{h=1}^k 2e^{-(\sqrt{z_h} t)^2/2} = 2^ke^{-k t^2/2}.
$$
Let $c_4 = 2c_1$ where $c_1$ is the constant from
Theorem~\ref{thm:tailbounds}. If 
$$
\|Aw\|_1 > c_4 k F(w,t)
$$
then at least one of the events $E_{z_1,\dots,z_k}$ must happen. Since
there are $\binom{k + k-1}{k} \leq 2^{2k}$ such events, we conclude
from a union bound that for integer $t \geq 4$, we have
$$
\Pr\left[\|Aw\|_1 > c_4 k F(w,t)\right] \leq 2^{3k}e^{-k t^2/2} <
e^{-k(t^2/2 - 3)} \leq e^{-k(t^2/2 - t^2/4)} = e^{-kt^2/4}.
$$
\end{proof}

We will also need the Chernoff bound:
\begin{theorem}[Chernoff Bound]
\label{thm:chernoff}
Let $X_1,\dots,X_n$ be independent $\{0,1\}$-random variables and let
$X = \sum_i X_i$. Then for any $0 < \delta < 1$:
$$
\Pr[X \leq (1-\delta)\E[X]] \leq e^{-\delta^2 \E[X]/2}
$$
\end{theorem}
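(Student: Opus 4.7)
The plan is to use the standard Chernoff moment-generating-function argument. For any parameter $s > 0$, since $e^{-sX}$ is nonnegative, Markov's inequality gives
\begin{equation*}
\Pr[X \le (1-\delta)\mu] \;=\; \Pr\bigl[e^{-sX} \ge e^{-s(1-\delta)\mu}\bigr] \;\le\; e^{s(1-\delta)\mu}\,\E[e^{-sX}],
\end{equation*}
where $\mu := \E[X]$. The remainder of the argument reduces to bounding $\E[e^{-sX}]$ tightly and then optimizing the free parameter $s$.

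For the MGF bound, I would use independence of the $X_i$ to factor $\E[e^{-sX}] = \prod_i \E[e^{-sX_i}]$. Writing $p_i = \Pr[X_i = 1]$, each factor equals $1 + p_i(e^{-s}-1)$, which is at most $\exp(p_i(e^{-s}-1))$ by the elementary inequality $1+x \le e^x$. Multiplying these bounds and using $\sum_i p_i = \mu$ yields
\begin{equation*}
\E[e^{-sX}] \;\le\; \exp\bigl(\mu(e^{-s}-1)\bigr).
\end{equation*}
Substituting back and choosing $s = -\ln(1-\delta) > 0$ (so that $e^{-s} = 1-\delta$, which is the standard optimizer) produces the familiar raw bound
\begin{equation*}
\Pr[X \le (1-\delta)\mu] \;\le\; \left(\frac{e^{-\delta}}{(1-\delta)^{1-\delta}}\right)^{\!\mu}.
\end{equation*}

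The main obstacle, and really the only nontrivial remaining step, is simplifying this expression into the clean form $e^{-\delta^2 \mu/2}$ claimed in the theorem. Taking logarithms, it suffices to prove the purely analytic inequality $g(\delta) := \delta + (1-\delta)\ln(1-\delta) - \delta^2/2 \ge 0$ on $(0,1)$. I would verify this by an elementary calculus check: $g(0) = 0$; differentiating gives $g'(\delta) = -\ln(1-\delta) - \delta$ with $g'(0) = 0$; and $g''(\delta) = \delta/(1-\delta) \ge 0$ on $[0,1)$, so $g'$ is nondecreasing and hence nonnegative, forcing $g$ to be nondecreasing from $g(0) = 0$. Combined with the MGF bound above, this delivers exactly $\Pr[X \le (1-\delta)\mu] \le e^{-\delta^2 \mu/2}$.
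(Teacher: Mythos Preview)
Your argument is correct and is exactly the textbook MGF proof of the lower-tail Chernoff bound; every step checks out, including the calculus verification of the auxiliary inequality $g(\delta) \ge 0$.

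As for comparison with the paper: there is nothing to compare. The paper simply states the Chernoff bound as a preliminary (Theorem~\ref{thm:chernoff}) without proof, treating it as a standard tool to be quoted. So your write-up supplies a self-contained proof where the paper gives none.
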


\subsection{The Net Argument}
We are ready to prove Lemma~\ref{lem:smallcoord}. As highlighted
above, our proof will use a net argument. What we wanted to construct,
was a collection of vectors $W$, such that any vector $w$ with
$\|w\|_0 \leq T$ and $\|w\|_1 = 1$ was close to some vector in $W$,
and moreover, for a random matrix $A \in \{-1,+1\}^{n \times n}$,
there is a non-zero probability that every vector $w \in W$ satisfies
that there are many coordinates in $Aw$ that are less than
$-a_1(\sqrt{\lg(n/T)/T})$, but few that are greater than
$a_2(\sqrt{\lg(n/T)/T})$ in absolute value for some constants $a_1 <
a_2$. We formulate this property in the following definition:
\begin{definition}
Let $A \in \R^{n \times n}$ be an $n \times n$ real matrix and let $w
\in \R^n$. We say that $A$ is $(t,b,T)$-\emph{typical} for $w$ if the following
two holds:
\begin{itemize}
\item $\left| \left\{ i : (Aw)_i < -c_2 F(w,t) \right\} \right| \geq
  \ln \binom{n}{T}$, where $c_2$ is the constant from Theorem~\ref{thm:tailbounds}.
\item For all sets of $k = \ln \binom{n}{T}$ rows of $A$, it
  holds that the corresponding $k \times n$ submatrix $\bar{A}$
  satisfies $\|\bar{A} w\|_1 < b k  F(w,t)$.
\end{itemize}
\end{definition}
Observe that the second property in the definition says that there
cannot be too many very large coordinates in $Aw$. The following lemma
shows that for a fixed vector $w \in \R^n$ and a random matrix $A$,
there is a very high probability that $A$ is typical for $w$:

\begin{lemma}
\label{lem:typical}
There exists universal constants $b_1,b_2 > 0$ such that the following
holds for all $n$ sufficiently large: Let $A$ be a random $n \times n$ matrix with each entry chosen independently
and uniformly at random amongst $\{-1,+1\}$. Define 
$$
\tau(T) := \left\lfloor \sqrt{c_3^{-1} \ln\left( \frac {n}{12 c_3\ln \binom{n}{T}}
  \right)} \right\rfloor
$$
where $c_3$ is the constant from Theorem~\ref{thm:tailbounds}. Then
the following holds for all $T \leq n/b_1$: If $w \in \R^n$ with $\|w\|_0 \leq
T$, then:
$$
\Pr\left[A \textrm{ is } (\tau(m),b_2,T)\textrm{-typical for } w\right] \geq
1-\binom{n}{T}^{-3}.
$$
Furthermore, it holds that $\tau(T) = \Omega(\sqrt{\ln(n/T)})$.
\end{lemma}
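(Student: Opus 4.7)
The plan is to fix an arbitrary $w \in \R^n$ with $\|w\|_0 \le T$ and show that each of the two defining conditions of $(\tau(T),b_2,T)$-typicality fails with probability at most $\tfrac{1}{2}\binom{n}{T}^{-3}$, then finish with a union bound. The first condition is an independent-sum statement that matches the Chernoff bound almost directly; the second condition is the delicate one and requires combining Corollary~\ref{cor:l1} with Lemma~\ref{lem:Fineq}.

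For the first condition, the rows $a_1,\dots,a_n$ of $A$ are independent Rademacher vectors, so the indicators $\mathbf{1}[(Aw)_i < -c_2 F(w,\tau(T))]$ are i.i.d.\ Bernoullis. By symmetry of the Rademacher distribution (i.e.\ applying the lower tail in Theorem~\ref{thm:tailbounds} to $-w$), each such event has probability at least $c_3^{-1} e^{-c_3 \tau(T)^2}$. Plugging in the definition of $\tau(T)$ this is at least $12\ln\binom{n}{T}/n$, so the expected number of negative coordinates is at least $12\ln\binom{n}{T}$. A direct application of Theorem~\ref{thm:chernoff} with $\delta = 11/12$ then gives failure probability at most $e^{-(121/24)\ln\binom{n}{T}}$, which is below $\tfrac{1}{2}\binom{n}{T}^{-3}$.

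For the second condition, I would apply Corollary~\ref{cor:l1} with an inflated integer parameter $s = \alpha \tau(T)$ for a sufficiently large constant $\alpha$ depending only on $c_3$ and $c_4$. This yields, for any fixed set of $k = \ln\binom{n}{T}$ rows, the bound $\Pr[\|\bar A w\|_1 > c_4 k F(w,s)] \le e^{-s^2 k/4}$. Lemma~\ref{lem:Fineq} converts this into $\|\bar A w\|_1 \le c_4 \alpha^2 k F(w,\tau(T))$ on the good event, which is exactly the required form with $b_2 := c_4 \alpha^2$. Union bounding over the $\binom{n}{k} \le (en/k)^k$ choices of row set gives total failure probability at most $(en/k)^k e^{-\alpha^2 \tau(T)^2 k / 4}$. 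Because $\tau(T)^2 = \Theta\bigl(c_3^{-1}\ln(n/k)\bigr)$ up to an additive constant absorbed by the floor, choosing $\alpha$ large enough (as a function of $c_3$ alone) forces this expression below $e^{-5k} = \binom{n}{T}^{-5} \le \tfrac{1}{2}\binom{n}{T}^{-3}$. The ``furthermore'' claim $\tau(T) = \Omega(\sqrt{\ln(n/T)})$ follows from the crude bound $\ln\binom{n}{T} \le T\ln(en/T)$, which gives $\ln\bigl(n/(12 c_3 \ln\binom{n}{T})\bigr) \ge \ln(n/T) - \ln\ln(en/T) - O(1)$; for $T \le n/b_1$ with $b_1$ chosen sufficiently large, the first term dominates.

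The main obstacle is the parameter juggling in the second condition: the exponent in Corollary~\ref{cor:l1} depends on the parameter $t$ one plugs in, whereas the target statement demands the constant $F(w,\tau(T))$. Inflating $t$ to $\alpha \tau(T)$ is precisely what lets one pay the $\binom{n}{k}$ union bound using the enlarged exponent $\alpha^2 \tau(T)^2 k/4$, while paying only a constant blow-up $\alpha^2$ in the $F$-term via Lemma~\ref{lem:Fineq}. Verifying that $\alpha$ can be chosen independent of $n$ and $T$ requires $\ln(n/k)$ to be bounded below by an absolute constant, which is exactly what the assumption $T \le n/b_1$ enforces once $b_1$ is taken large enough; this is also how the universal constants $b_1,b_2$ emerge in the statement.
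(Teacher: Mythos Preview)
Your proposal is correct and follows essentially the same approach as the paper's own proof: the same Chernoff argument with $\delta=11/12$ for the first condition, the same inflation trick $s=\alpha\tau(T)$ combined with Corollary~\ref{cor:l1}, Lemma~\ref{lem:Fineq}, and a union bound over $\binom{n}{k}$ row sets for the second condition, and the same crude estimate $\ln\binom{n}{T}\le T\ln(en/T)$ for the ``furthermore'' claim. The only cosmetic difference is that the paper targets $\alpha^2\tau(T)^2/4\ge 5\ln(en/k)$ (yielding failure probability $e^{-4k\ln(en/k)}$) rather than your $e^{-5k}$, but both suffice.
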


\begin{proof}
Let $a_i$ be the $i$'th row of $A$. Using that the distribution of
$\langle a ,w \rangle$ is symmetric around $0$,
Theorem~\ref{thm:tailbounds} gives us:
$$
\forall t > 0 : \Pr\left[\langle w, x\rangle < -c_2 F(w,t) \right] \geq c_3^{-1} e^{-c_3 t^2}
$$
Defining 
$$
\tau(T) := \left\lfloor \sqrt{c_3^{-1} \ln\left( \frac {n}{12 c_3\ln \binom{n}{T}}
  \right)} \right\rfloor
$$
gives
$$
\Pr\left[\langle w, x\rangle < -c_2 F(w,\tau(T)) \right] \geq
c_3^{-1} e^{-c_3 \tau(T)^2} \geq \frac{12 \ln \binom{n}{T}}{n}.
$$
Now define $X_i$ to take the value $1$ if $\langle w, x\rangle < -c_2
F(w,\tau(T))$ and $0$ otherwise. Let $X = \sum_i X_i$. Then $\E[X] \geq 12
\ln\binom{n}{T}$. It follows from the Chernoff bound
(Theorem~\ref{thm:chernoff}) that
$$
\Pr\left[X \leq \ln\binom{n}{T}\right] \leq e^{-(11/12)^2 6 \ln
  \binom{n}{T}} = \binom{n}{T}^{-11^2/24} <
\binom{n}{T}^{-5} < \frac{\binom{n}{T}^{-3}}{2}.
$$
Thus the first requirement for being $(\tau(T),b_2,T)$-typical is
satisfied with probability at least 
$$
1-\frac{\binom{n}{T}^{-3}}{2}.
$$
Our next step is to show the same for the second property. To do
this, we will be using Corollary~\ref{cor:l1}, which requires $t \geq
4$. If we restrict $T \leq n/b_1$ for a big enough constant $b_1
> 0$,
this is indeed the case for our choice of $\tau(T)$ (since we can make $\ln
\binom{n}{T} \leq n/b'$ for any desirable constant $b'>0$ by
setting $b_1$ to a large enough constant). Thus from here on, we fix $b_1$ to a
constant large enough that $\tau(T) \geq 4$ whenever $T \leq n/b_1$.

Now fix a set of $k = \ln \binom{n}{T}$ rows of
$A$ and let $\bar{A}$ denote the corresponding $k \times n$
submatrix. Corollary~\ref{cor:l1} gives us that $\Pr[\|\bar{A}w\|_1 > c_4 k
F(w,\alpha \tau(T))] \leq e^{-\alpha^2 \tau(T)^2k/4} =
\binom{n}{T}^{-\alpha^2\tau(T)^2/4}$ for any constant $\alpha \geq
1$. If we set $\alpha$ to a large enough constant, we get that
$\alpha^2 \tau(T)^2/4 \geq 5 \ln(en/k)$. Thus $\Pr[\|\bar{A}w\|_1 > c_4 k
F(w,\alpha \tau(T))] \leq e^{-5k \ln(en/k)}$. There are $\binom{n}{k}
\leq e^{k \ln(en/k)}$ sets of $k$ rows in $A$, thus by a union bound,
we get that with probability at least $1-e^{-4k \ln(en/k)} \geq
1-\binom{n}{T}^{-3}/2$, it simultanously holds that $\|\bar{A}w\|_1 \leq c_4 k
F(w,\alpha \tau(T))$ for all $k \times n$ submatrices $\bar{A}$ of
$A$. From Lemma~\ref{lem:Fineq}, this implies $\|\bar{A}w\|_1 \leq c_4\alpha^2 k
F(w,\tau(T))$. Another union bound shows that $A$ is $(\tau(T), b_2,
T)$-typical for $w$ with probability at least $1-\binom{n}{T}^{-3}$ if
we set $b_2 = c_4 \alpha^2$.

What remains is to show that $\tau(T) = \Omega(\sqrt{\ln(n/T)})$. We
see that:
\begin{eqnarray*}
\left\lfloor \sqrt{c_3^{-1} \ln\left( \frac {n}{12 c_3\ln \binom{n}{T}}
  \right)} \right\rfloor &\geq& 
\left\lfloor \sqrt{c_3^{-1} \ln\left( \frac {n}{12 c_3 T \ln(en/T)}
  \right)} \right\rfloor
\end{eqnarray*}
If $b_1$ is a large enough constant (i.e. $T$ small enough compared to $n$), then
$\ln(en/T) \leq 2\ln(n/T)$. Thus for large enough $b_1$:
\begin{eqnarray*}
\tau(T) &\geq& \left\lfloor \sqrt{c_3^{-1} (\ln(n/T) -\ln \ln(n/T) -
    \ln(24 c_3))} \right\rfloor
\end{eqnarray*}
 the terms $\ln(\ln(n/T))$ and $\ln(24 c_3)$ are at most small constant
factors of $\ln(n/T)$ and the claim follows for $b_1$ big enough.
\end{proof}

We are ready to define the net $W$ which we will union bound over and
show that a random matrix is typical for all vectors in $W$
simultanously with non-zero probability. Let $b_1,b_2$ be the constants from
Lemma~\ref{lem:typical} and let $T \leq n/b_1b^2_3$ with $b_3 > b_2$ a large constant. Construct a collection $W$ of vectors in
$\R^n$ as
follows: For every choice of $m$ distinct coordinates $i_1,\dots,i_T \in [n]$, and every choice
of $T$ integers $z_1,\dots,z_T$ with $\sum_j |z_j| \leq (b_3+1)T$, add the
vector $w$ with coordinates $w_{i_j} = z_j/(b_3T)$ for $j=1,\dots,T$ and
$w_h = 0$ for $h \notin \{i_1,\dots,i_T\}$ to $W$. We have
$$
|W| \leq \binom{n}{T} 2^T \sum_{i=0}^{(b_3+1)T} \binom{T+i-1}{T}.
$$
To see this, observe that $\binom{n}{T}$ counts the number of ways to
choose $i_1,\dots,i_T$, $2^T$ counts the number of ways of choosing
the signs of the $z_j$'s and $\binom{T+i-1}{T}$ counts the number of
ways to choose $T$ non-negative integers summing to $i$.

If $b_3$ is large enough, then $(b_3+1)T \leq (b_3+1)n/(b_1 b^2_3)$ is much smaller than $n$ and
hence $|W| < \binom{n}{T}^3$. Since $T < n/b_1$, we can use Lemma~\ref{lem:typical} and a union
bound over all $w \in W$ to conclude that there exists an $n \times n$
matrix $A$ with coordinates amongst $\{-1,+1\}$, such that $A$ is
$(\tau(T), b_2, T)$-typical for all $w \in W$ with $\tau(T) =
\Omega(\sqrt{\ln(n/T)})$. We claim that this implies the following:

\begin{lemma}
\label{lem:typImplies}
Let $A \in \{-1,+1\}^{n \times n}$ be $(\tau(T),b_2,T)$-typical for
all $w \in W$. If $\ln n \leq T$, then for any vector $x$ with $\|x\|_0 \leq T$ and
$\|x\|_1=1$, there exists an index $i$ such that 
$$
(Ax)_i \leq -\Omega\left( \sqrt{\frac{\ln(n/T)}{T}} \right).
$$
\end{lemma}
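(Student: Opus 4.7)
The plan is a net argument: given $x$ with $\|x\|_1=1$ and $\|x\|_0\le T$, I would express $x$ as an absolutely convergent sum $x=\sum_{j\ge 0}\alpha_j\tilde w_j$ with every $\tilde w_j\in W$ and weights $\alpha_j$ decaying geometrically, then use typicality of each $\tilde w_j$ to locate one index $i^*$ where $(Ax)_{i^*}\le -\Omega(\sqrt{\ln(n/T)/T})$. Concretely, define $\tilde w_0$ by rounding each coordinate of $x$ to the nearest multiple of $1/(b_3 T)$; the residual $r_0=x-\tilde w_0$ satisfies $\|r_0\|_\infty\le 1/(2b_3 T)$, $\|r_0\|_0\le T$, and $\|r_0\|_1\le 1/(2b_3)$. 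Inductively set $\alpha_j=\|r_{j-1}\|_1$, let $\tilde w_j$ be the rounding of $r_{j-1}/\alpha_j$ to multiples of $1/(b_3 T)$, and let $r_j=r_{j-1}-\alpha_j\tilde w_j$. A short check shows $\tilde w_j\in W$ (the integer multipliers sum in absolute value to at most $(b_3+1)T$), $\alpha_j\le(2b_3)^{-j}$, and $x=\sum_{j\ge 0}\alpha_j\tilde w_j$ with $\alpha_0=1$.

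Next I would use Property 1 of typicality applied to $\tilde w_0$: there is a set $S\subseteq[n]$ with $|S|\ge k=\ln\binom{n}{T}$ on which $(A\tilde w_0)_i<-c_2 F(\tilde w_0,\tau(T))$. To lower bound $F(\tilde w_0,\tau(T))$, I would split by whether more than half of the $\ell_1$ mass of $\tilde w_0$ lies in its top $\lfloor\tau(T)^2\rfloor$ entries; in the "concentrated" case $F\ge\tfrac{1}{4}$, and in the "spread" case Cauchy-Schwarz on the remaining coordinates gives $F\ge \tau(T)\|\tilde w_0\|_1/(4\sqrt T)$. Since $\|\tilde w_0\|_1\ge 1/2$ and $T\ge\ln n\ge \tau(T)^2$, this yields $F(\tilde w_0,\tau(T))=\Omega(\tau(T)/\sqrt T)=\Omega(\sqrt{\ln(n/T)/T})$.

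For the residual terms, I would pick any $S'\subseteq S$ with $|S'|=k$ and apply Property 2 of typicality for each $\tilde w_j$ ($j\ge 1$) to the $k\times n$ submatrix $\bar A$ of $A$ indexed by $S'$, giving $\sum_{i\in S'}|(A\tilde w_j)_i|\le b_2 k F(\tilde w_j,\tau(T))$. Summing weighted by $\alpha_j$ and averaging over $S'$ produces an index $i^*\in S'$ with
\[
\sum_{j\ge 1}\alpha_j|(A\tilde w_j)_{i^*}|\;\le\; b_2\sum_{j\ge 1}\alpha_j F(\tilde w_j,\tau(T)).
\]
Combined with the definition of $S$, this gives
\[
(Ax)_{i^*}\;\le\; -c_2 F(\tilde w_0,\tau(T))+b_2\sum_{j\ge 1}\alpha_j F(\tilde w_j,\tau(T)).
\]

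The main obstacle is showing that the residual sum $\sum_{j\ge 1}\alpha_j F(\tilde w_j,\tau(T))$ is dominated by $\tfrac{c_2}{2b_2}F(\tilde w_0,\tau(T))$. The crude bound $F(\tilde w_j,\tau(T))\le\|\tilde w_j\|_1(1+\tau(T))=O(\tau(T))$ together with the geometric decay of $\alpha_j$ gives $O(\tau(T)/b_3)$, which must be shown smaller than $\Omega(\tau(T)/\sqrt T)$ by selecting $b_3$ large (enough) and exploiting that each rescaled residual $r_{j-1}/\alpha_j$ inherits from $x$ the constraint $\|\cdot\|_0\le T$ with $\ell_1=1$, so the $F$ values entering the geometric series can be controlled tightly (this is where the recursion sketched in the paper's introduction is used: apply the same reasoning to $r_{j-1}/\alpha_j$ and peel off another net vector). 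Once this residual bound is established, we conclude $(Ax)_{i^*}\le-\tfrac{c_2}{2}F(\tilde w_0,\tau(T))=-\Omega(\sqrt{\ln(n/T)/T})$, as required.
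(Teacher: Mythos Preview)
Your overall architecture matches the paper's: round $x$ to a net vector, use Property~1 to locate a set $S'$ of $k=\ln\binom{n}{T}$ rows where $A\tilde w_0$ is very negative, decompose the residual as a geometrically decaying sum of net vectors, and use Property~2 on the submatrix $\bar A$ to control the residual contribution. The lower bound $F(\tilde w_0,\tau(T))=\Omega(\tau(T)/\sqrt T)$ via the concentrated/spread dichotomy is also exactly what the paper does.

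The genuine gap is in your final paragraph. Your ``crude bound'' gives $\sum_{j\ge1}\alpha_j F(\tilde w_j,\tau(T))=O(\tau(T)/b_3)$, and you then assert this can be made at most $c\,\tau(T)/\sqrt T$ by taking $b_3$ large enough. But $b_3$ is a \emph{fixed constant} baked into the definition of the net $W$, while $T$ is arbitrary; for $T>b_3^2$ the inequality $\tau(T)/b_3\le c\,\tau(T)/\sqrt T$ simply fails. The additional suggestion to ``apply the same reasoning to $r_{j-1}/\alpha_j$ and peel off another net vector'' is exactly what your decomposition already does, so it cannot rescue the estimate.

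What you are missing is that the residual net vectors are not merely in $W$: they are \emph{flat}. Since $r_{j-1}$ is a rounding error at scale $1/(b_3T)$ times $\alpha_{j-1}$, one has
\[
\|\tilde w_j\|_\infty \;\le\; \Bigl(1+\tfrac{\alpha_{j-1}}{\alpha_j}\Bigr)\cdot\frac{1}{b_3 T}.
\]
Plugging this $\ell_\infty$ bound directly into the definition of $F$ (using $\|\tilde w_j\|_0\le T$) gives
\[
F(\tilde w_j,\tau(T))\;\le\;\Bigl(1+\tfrac{\alpha_{j-1}}{\alpha_j}\Bigr)\Bigl(\tfrac{\tau(T)^2}{b_3 T}+\tfrac{\tau(T)}{b_3\sqrt T}\Bigr),
\]
so that
\[
\sum_{j\ge1}\alpha_j F(\tilde w_j,\tau(T))\;\le\;\Bigl(\tfrac{\tau(T)^2}{b_3 T}+\tfrac{\tau(T)}{b_3\sqrt T}\Bigr)\sum_{j\ge1}(\alpha_j+\alpha_{j-1})\;=\;O\Bigl(\tfrac{\tau(T)}{b_3\sqrt T}\Bigr),
\]
using $\tau(T)\le\sqrt T$ (which follows from $T\ge\ln n$) and $\sum_j\alpha_j=O(1)$. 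Now a constant $b_3$ chosen large relative to $b_2/c_2$ suffices. This $\ell_\infty$ control of the residual net vectors is the step that makes the argument close, and it is absent from your proposal.
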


\begin{proof}
Let $x$ be as in the lemma, i.e. $\|x\|_0 \leq T$ and $\|x\|_1=1$. To prove the lemma, we will ``round''
the arbitrary vector $x$ to a close-by vector in $W$, allowing us to
use that $A$ is typical for all vectors in $W$. 

For a vector $x$ with $\|x\|_0 \leq T$ and $\|x\|_1=1$, let $f(x)
\in W$ be the vector in $W$ obtained by rounding each coordinate of
$x$ up (in absolute value) to the nearest multiple of $1/(b_3
T)$. The vector $f(x)$ is in $W$ since $\sum_i |f(x)_i| \cdot (b_3 T)
\leq (b_3 T)(1 + T/b_3T) \leq 
b_3T + T = (b_3+1)T$ and that each coordinate is an integer
multiple of $1/(b_3T)$. 

Notice that the vector $\tilde{x} = x - f(x)$ has
at most $T$ non-zero coordinates, all between
$0$ and $1/(b_3 T)$ in absolute value. Now observe that:
\begin{eqnarray*}
Ax &=& A(f(x)+\tilde{x}) \\
&=& Af(x) + A\tilde{x}.
\end{eqnarray*}
Since $f(x) \in W$, we have that $A$ is $(\tau(T),b_2,T)$-typical
for $f(x)$. Hence there exists at least $\ln \binom{n}{T}$
coordinates of $Af(x)$ such that $(Af(x))_i < -c_2
F(f(x),\tau(T))$. If we let $i_j$ denote the index of the $j$'th
largest coordinate of $f(x)$ in terms of absolute value, then we have:
\begin{eqnarray*}
F(f(x),\tau(T)) &=& \left(\sum_{j=1}^{\lfloor \tau(T)^2 \rfloor}
    |f(x)_{i_j}| + \tau(T) \left( \sum_{j = \lfloor \tau(T)^2
        \rfloor  + 1 }^n f(x)_{i_j}^2\right)^{1/2}\right).
\end{eqnarray*}
By construction, we have that $\|f(x)\|_1 \geq \|x\|_1 = 1$ ($f$
rounds coordinates up in absolute value). This means that either $\sum_{j=1}^{\lfloor \tau(T)^2 \rfloor}
    |f(x)_{i_j}| \geq 1/2$ or $\sum_{j = \lfloor \tau(T)^2
        \rfloor  + 1 }^n |f(x)_{i_j}| \geq 1/2$. In the first case,
      we have $F(f(x),\tau(T)) \geq 1/2$. In the latter case, we
      use Cauchy-Schwartz to conclude that
$$
 \left( \sum_{j = \lfloor \tau(T)^2
        \rfloor  + 1 }^n f(x)_{i_j}^2\right)^{1/2} \geq
    (1/2)/\|f(x)\|^{1/2}_0 \geq 1/(2\sqrt{T}).
$$
To see how this follows from Cauchy-Schwartz, define $\overline{f(x)}$ as
the vector having the same coordinates as $f(x)$ for indices $i_j$
with $j \geq \lfloor \tau(T)^2 \rfloor+1$ and $0$ elsewhere. Define $y$ as the vector
with a $1$ in all coordinates $i$ where $\overline{f(x)}_i \neq 0$ and $0$
elsewhere. Then by Cauchy-Schwartz
$$
1/2 \leq \sum_i |y_i \overline{f(x)}_i| \leq \sqrt{\|\overline{f(x)}\|_2 \|y\|_2} \leq \sqrt{\|\overline{f(x)}\|_2 \|f(x)\|_0}.
$$
Hence we must have 
$$
F(f(x),\tau(T)) \geq \min\left\{1/2, 
      \frac{\tau(T)}{\sqrt{T}} \right\}.
$$
Therefore, we have at least $\ln \binom{n}{T}$ coordinates $i$ of $A
f(x)$ with
$$
(Af(x))_i \leq - c_2 \min\left\{1/2, 
      \frac{\tau(T)}{\sqrt{T}} \right\},
$$
where $c_2$ is the constant from Theorem~\ref{thm:tailbounds}.

We let $T$ denote an arbitrary subset of $\ln \binom{n}{T}$ such indices. What remains is to show
that $A\tilde{x}$ cannot cancel out these very negative
coordinates. To prove this, we will write $\tilde{x}$ as a sum
$\|\tilde{x}\|_1 \cdot \sum_{j=0}^{\infty}
\alpha_j \tilde{x}^{(j)}$ with each $\tilde{x}^{(j)} \in W$. The idea
is to repeatedly apply the function $f$ to find a vector in $W$ that
is close to $\tilde{x}$. We then append that to the sum and recurse on
the ``rounding error'' $\tilde{x}-f(\tilde{x})$. To do this more
formally, define
\begin{eqnarray*}
\hat{x}^{(0)} &:=& \tilde{x}/\|\tilde{x}\|_1\\
\tilde{x}^{(0)} &:=& f(\hat{x}^{(0)})\\
\alpha_0 &:=& 1
\end{eqnarray*}
and for $j>0$, define
\begin{eqnarray*}
\hat{x}^{(j)} &:=& \hat{x}^{(j-1)} - \alpha_{j-1}\tilde{x}^{(j-1)} \\
\tilde{x}^{(j)} &:=& f(\hat{x}^{(j)}/\|\hat{x}^{(j)}\|_1)\\
\alpha_j &:=& \|\hat{x}^{(j)}\|_1.
\end{eqnarray*}
We would like to show that $\tilde{x} = \|\tilde{x}\|_1 \cdot \sum_{j=0}^{\infty}
\alpha_j \tilde{x}^{(j)}$. The first step in proving this, is to
bound $\alpha_j$ and $\hat{x}^{(j)}$ as $j$ tends to infinity. We see that:
$\alpha_j =
\|\hat{x}^{(j)}\|_1$. For $j \neq 0$, we therefore get:
\begin{eqnarray*}
\alpha_j &=& \|\hat{x}^{(j-1)} - \alpha_{j-1} \tilde{x}^{(j-1)} \|_1
\\
&=&  \|\hat{x}^{(j-1)} - \alpha_{j-1}
f(\hat{x}^{(j-1)}/\alpha_{j-1})\|_1 \\
&=& \alpha_{j-1} \|\hat{x}^{(j-1)}/\alpha_{j-1} - 
f(\hat{x}^{(j-1)}/\alpha_{j-1})\|_1\\
&\leq& \alpha_{j-1} T (1/b_3 T)\\
&=& \alpha_{j-1}/b_3.
\end{eqnarray*}
This implies that $\lim_{j \to \infty} \alpha_j = 0$. Since $\alpha_j
= \|\hat{x}^{(j)}\|_1$, this also implies that $\lim_{j \to \infty}
\hat{x}^{(j)} = 0$. We can now conclude that:
\begin{eqnarray*}
\|\tilde{x}\|_1 \cdot \sum_{j=0}^{\infty}
\alpha_j \tilde{x}^{(j)} &=& \\
\|\tilde{x}\|_1 \cdot \left(f(\hat{x}^{(0)})+ \sum_{j=1}^{\infty}
\alpha_j 
\left(\frac{\hat{x}^{(j)}-\hat{x}^{(j+1)}}{\alpha_j}\right) \right) &=& \\
\|\tilde{x}\|_1 \cdot \left(f(\hat{x}^{(0)}) + \sum_{j=1}^{\infty}
\hat{x}^{(j)}-\hat{x}^{(j+1)} \right) &=& \\
\|\tilde{x}\|_1 \cdot \left(f(\hat{x}^{(0)}) +\left(\hat{x}^{(1)} -
    \lim_{j \to \infty} \hat{x}^{(j)}\right) \right)
&=& \\
\|\tilde{x}\|_1 \cdot \left(f(\hat{x}^{(0)}) +\left( \hat{x}^{(0)} -
    \tilde{x}^{(0)} \right)\right) &=& \\
\|\tilde{x}\|_1 \cdot \left(f(\hat{x}^{(0)}) +\left( \hat{x}^{(0)} -
   f(\hat{x}^{(0)})\right)\right) &=& \\
\tilde{x}.
\end{eqnarray*}
Thus we have that $A\tilde{x} = A\left(\|\tilde{x}\|_1
  \sum_{j=0}^\infty \alpha_j \tilde{x}^{(j)}\right)$. Now let
$\bar{A}$ be the $\ln \binom{n}{T} \times n$ submatrix of $A$
corresponding to the rows in the set $T$. We wish to bound $\|\bar{A}
\tilde{x}\|_1$. Using the triangle inequality and that $A$ is
$(\tau(T),b_2,T)$-typical for all vectors in $W$, we get:
$$
\|\bar{A} \tilde{x}\|_1 \leq \|\tilde{x}\|_1 \sum_{j=0}^\infty \alpha_j \|\bar{A}
\tilde{x}^{(j)}\|_1 \leq \|\tilde{x}\|_1 \ln \binom{n}{T} b_2 \sum_{j=0}^\infty \alpha_j F(\tilde{x}^{(j)}, \tau(T)).
$$
We want to argue that $F(\tilde{x}^{(j)}, \tau(T))$ is small. For
this, we will bound the magnitude of coordinates in
$\tilde{x}^{(j)}$. We start with $j\neq 0$, in which case we have
 $\tilde{x}^{(j)} =
f(\hat{x}^{(j)}/\|\hat{x}^{(j)}\|_1)$. By definition of $f$, this means
that $\|\tilde{x}^{(j)}\|_\infty \leq
\|\hat{x}^{(j)}/\|\hat{x}^{(j)}\|_1\|_\infty + 1/b_3 T$. Using that
\begin{eqnarray*}
\hat{x}^{(j)} &=& \hat{x}^{(j-1)} - \alpha_{j-1}\tilde{x}^{(j-1)} \\
&=& \hat{x}^{(j-1)} -
\alpha_{j-1}f(\hat{x}^{(j-1)}/\|\hat{x}^{(j-1)}\|_1) \\
&=& \hat{x}^{(j-1)} -
\|\hat{x}^{(j-1)}\|_1 f(\hat{x}^{(j-1)}/\|\hat{x}^{(j-1)}\|_1) \\
&=& \|\hat{x}^{(j-1)}\|_1 \left(\hat{x}^{(j-1)}/\|\hat{x}^{(j-1)}\|_1
  - f(\hat{x}^{(j-1)}/\|\hat{x}^{(j-1)}\|_1) \right),
\end{eqnarray*}
we get by definition of $f$ that $\left(\hat{x}^{(j-1)}/\|\hat{x}^{(j-1)}\|_1
  - f(\hat{x}^{(j-1)}/\|\hat{x}^{(j-1)}\|_1) \right)$ has at most $T$
non-zero coordinates, all between $0$ and $1/b_3 T$ in absolute
value. We have thus shown that 
$$
\|\tilde{x}^{(j)}\|_\infty \leq
\|\hat{x}^{(j)}/\|\hat{x}^{(j)}\|_1\|_\infty + 1/b_3 T \leq
\left(1+\frac{\|\hat{x}^{(j-1)}\|_1}{\|\hat{x}^{(j)} \|_1} \right)
\cdot \frac{1}{b_3 T} = \left(1+\frac{\alpha_{j-1}}{\alpha_j} \right)
\cdot \frac{1}{b_3 T}.
$$
We therefore have for $j\neq 0$ that:
\begin{eqnarray*}
F(\tilde{x}^{(j)}, \tau(m)) 
&\leq& \\
\left(\sum_{j=1}^{\lfloor \tau(T)^2 \rfloor}
    \left(1+\frac{\alpha_{j-1}}{\alpha_j} \right)
\cdot \frac{1}{b_3 T} + \tau(T) \left( \sum_{j = \lfloor \tau(T)^2
        \rfloor  + 1 }^T \left(\left(1+\frac{\alpha_{j-1}}{\alpha_j} \right)
\cdot \frac{1}{b_3 T}\right)^2\right)^{1/2}\right)
&\leq& \\
\tau(T)^2\left(1+\frac{\alpha_{j-1}}{\alpha_j} \right)
\cdot \frac{1}{b_3 T} + \tau(T) \sqrt{T}\left(1+\frac{\alpha_{j-1}}{\alpha_j} \right)
\cdot \frac{1}{b_3 T}
&=& \\
\tau(T)^2\left(1+\frac{\alpha_{j-1}}{\alpha_j} \right)
\cdot \frac{1}{b_3 T} + \tau(T)\left(1+\frac{\alpha_{j-1}}{\alpha_j} \right)
\cdot \frac{1}{b_3 \sqrt{T}}.
\end{eqnarray*}
For $j=0$, we see that 
$$
\|\tilde{x}^{(0)}\|_\infty = \|f(\hat{x}^{(0)}\|_\infty \leq
\|\tilde{x}/\|\tilde{x}\|_1\|_\infty + 1/b_3 T = \|\tilde{x}\|_\infty /\|\tilde{x}\|_1+ 1/b_3 T.
$$
But $\tilde{x}$ was the difference between $x$ and $f(x)$ and
therefore $\|\tilde{x}\|_\infty \leq 1/b_3 T$ and we get:
$$
\|\tilde{x}^{(0)}\|_\infty \leq \left(1 +
  \frac{1}{\|\tilde{x}\|_1}\right)/b_3 T.
$$
We then have:
\begin{eqnarray*}
F(\tilde{x}^{(0)}, \tau(T)) &\leq& \tau(T)^2\left(1+\frac{1}{\|\tilde{x}\|_1} \right)
\cdot \frac{1}{b_3 T} + \tau(T)\left(1+\frac{1}{\|\tilde{x}\|_1} \right)
\cdot \frac{1}{b_3 \sqrt{T}},
\end{eqnarray*}
which allows us to conclude:
\begin{eqnarray*}
\|\bar{A} \tilde{x}\|_1 &\leq&  \|\tilde{x}\|_1 \ln \binom{n}{T} b_2
\sum_{j=0}^\infty \alpha_j F(\tilde{x}^{(j)}, \tau(T)) \\
&=& \|\tilde{x}\|_1 \ln \binom{n}{T} b_2 \cdot \left(\frac{\tau(T)^2}{b_3 T} +
    \frac{\tau(T)}{b_3 \sqrt{T}} \right) \cdot
  \left(\left(1+\frac{1}{\|\tilde{x}\|_1} \right) +
    \sum_{j=1}^\infty\left(\alpha_{j-1} +
      \alpha_j\right)\right)\\
&=& \|\tilde{x}\|_1 \ln \binom{n}{T} b_2 \cdot \left(\frac{\tau(T)^2}{b_3 T} +
    \frac{\tau(T)}{b_3 \sqrt{T}} \right) \cdot
  \left(\frac{1}{\|\tilde{x}\|_1}+
    2 \cdot \sum_{j=0}^\infty \alpha_j \right). 
\end{eqnarray*}
Recalling that $\alpha_j \leq \alpha_{j-1}/b_3$, we can write: $\sum_{j=0}^\infty \alpha_j \leq \sum_{j=0}^\infty (1/b_3)^j \leq
2$ for $b_3 \geq 2$. Using that $\tilde{x}$ has at most $T$ non-zero
coordinates, all of absolute value at most $1/b_3 T$, we also have
$\|\tilde{x}\|_1 \leq 1/b_3$. Therefore we conclude:
\begin{eqnarray*}
\|\bar{A} \tilde{x}\|_1 &\leq& \ln \binom{n}{T} b_2 \left(\frac{\tau(T)^2}{b_3 T} +
    \frac{\tau(T)}{b_3 \sqrt{T}} \right) \cdot
  \left(1+
    4/b_3 \right). 
\end{eqnarray*}
Hence there must be one of the coordinates $i$ where
$$
|(\bar{A}\tilde{x})_i| \leq \frac{b_2}{b_3} \left(\frac{\tau(T)^2}{ T} +
    \frac{\tau(T)}{ \sqrt{T}} \right) \cdot
  \left(1+
    4/b_3 \right). 
$$
But all $\ln \binom{n}{T}$ coordinates of $\bar{A}f(x)$ were less than
$$
-c_2 \min\left\{1/2, 
      \frac{\tau(T)}{\sqrt{T}} \right\}
$$
thus for $b_3$ a large enough constant and $\tau(T) \leq \sqrt{T}$, there must be a coordinate $i$
where
$$
(Ax)_i \leq - \Omega\left( \min\left\{1,\sqrt{\frac{\ln(n/T)}{T}}
  \right\} \right).
$$
Under the assumption that $\tau(T) \leq \sqrt{T} \Leftrightarrow
\ln(n/T) \leq T \Leftarrow \ln n \leq T$, this simplifies to:
$$
(Ax)_i \leq - \Omega\left( \sqrt{\frac{\ln(n/T)}{T}}
   \right).
$$
\end{proof}

We have argued that the random matrix $A$ was typical for all
vectors in $W$ simultanously with non-zero probability. Combining this
with Lemma~\ref{lem:typImplies} finally concludes the proof of Lemma~\ref{lem:smallcoord}.
g


\section{Experiments}
\label{sec:experiments}
Gradient Boosting \cite{gb1,gb2} is probably the most popular boosting algorithm in practice. It has several highly efficient open-source implementations \cite{xgboost,lightgbm, catboost} and obtain state-of-the-art performance in many machine learning tasks \cite{lightgbm}. In this section we demonstrate how our sparsification algorithm can be combined with Gradient Boosting. For simplicity we consider a single dataset in this section, the Flight Delay dataset \cite{airline}, see \Cref{sec:appendix_experimental_details} for similar results on other dataset. 

We train a classifier with $T=500$ hypotheses using LightGBM \cite{lightgbm} which we sparsify using \Cref{lemma:margin_preservation} to have $T'=80$ hypotheses. The sparsified classifier is guaranteed to preserve all margins of the original classifier to an additive $O(\sqrt{\log(n/T')/T'})$. The cumulative margins of the sparsified classifier and the original classifier are depicted in \Cref{fig:margins}. Furthermore, we also depict the cumulative margins of a LightGBM classifier trained to have $T'=80$ hypotheses. 
\begin{figure}[t]
\vskip 0.2in
\begin{center}
\centerline{\includegraphics[width=0.8\columnwidth]{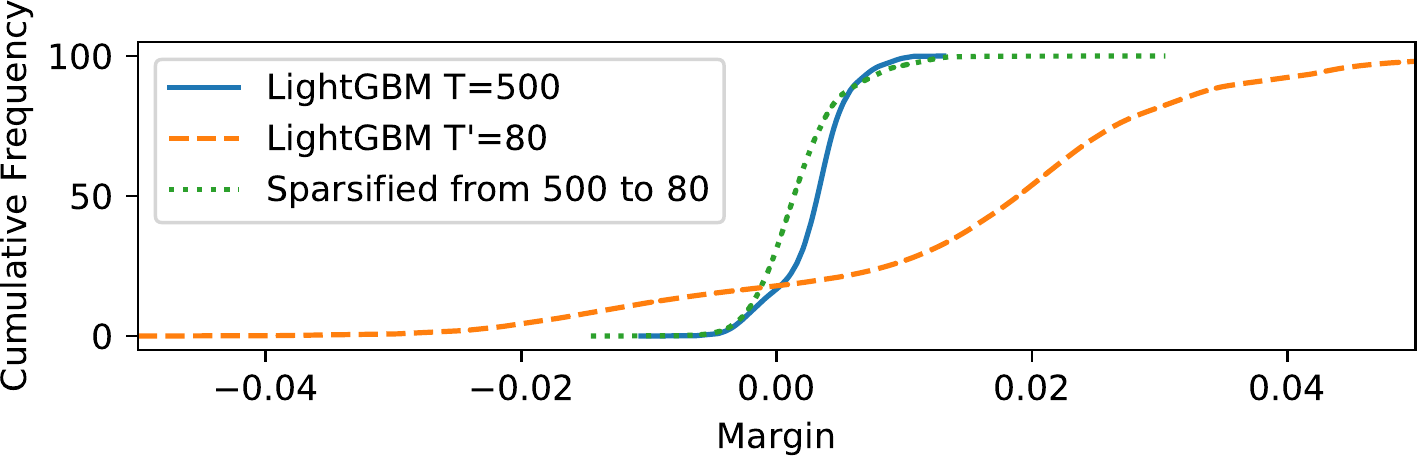}}
\caption{ The plot depicts the cumulative margins of three classifiers: (1) a LightGBM classifier with $500$ hypotheses (2) a classifier sparsified from $500$ to $80$ hypotheses and (3) a LightGBM classifier with $80$ hypotheses.  }
\label{fig:margins}
\end{center}
\vskip -0.2in
\end{figure}
First observe the difference between the LightGBM classifiers with $T=500$ and $T'=80$ hypotheses (blue and orange in \Cref{fig:margins}). The margins of the classifier with $T=500$ hypotheses vary less. It has fewer points with a large margin, but also fewer points with a small margin. The margin distribution of the sparsified classifier with $T'=80$ approximates the margin distribution of the LightGBM classifier with $T=500$ hypotheses. Inspired by margin theory one might suspect this leads to better generalization. To investigate this, we performed additional experiments computing AUC and classification accuracy of several sparsified classifiers and LightGBM classifiers on a test set (we show the results for multiple sparsified classifiers due to the randomization in the discrepancy minimization algorithms). The experiments indeed show that the sparsified classifiers outperform the LightGBM classifiers with the same number of hypotheses. See \Cref{fig:scores} for test AUC and test classification accuracy. 

\begin{figure}[t]
\vskip 0.2in
\begin{center}
\includegraphics[width=0.48\columnwidth]{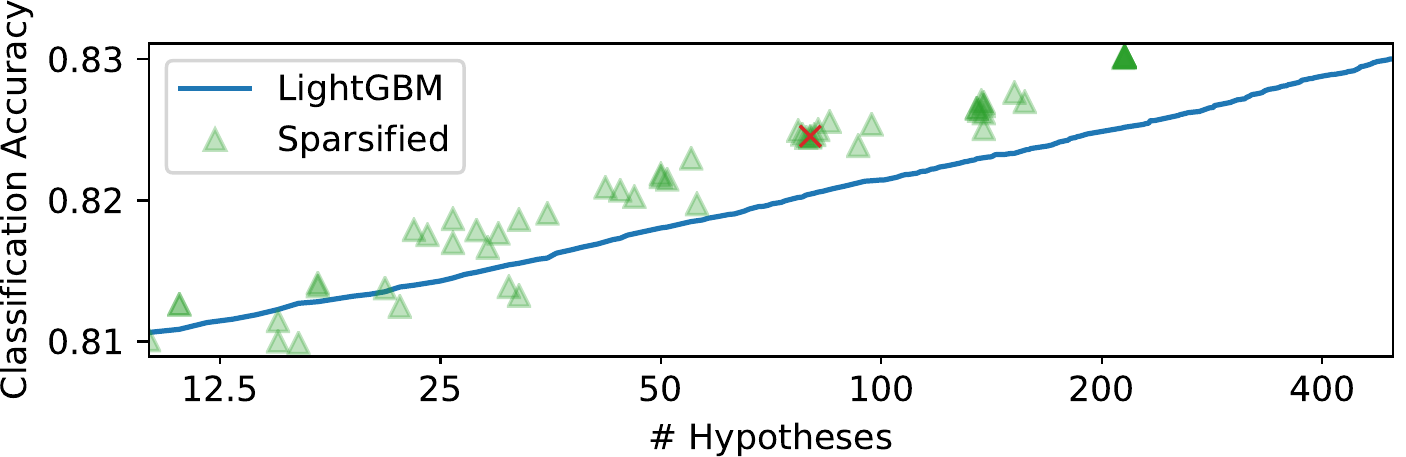}
\includegraphics[width=0.48\columnwidth]{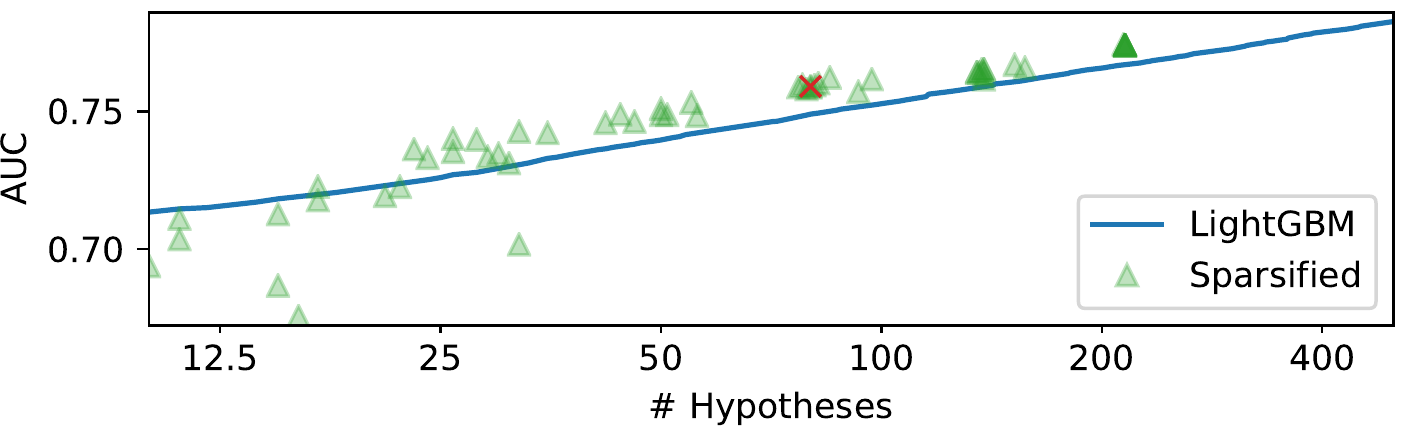}
\caption{The plot depicts test AUC and test classification accuracy of a LightGBM classifier during training as the number of hypotheses increase (in blue). Notice the x-axis is logarithmically scaled. The final classifier with $500$ hypotheses was sparsified with \Cref{lemma:margin_preservation} multiple times to have between $T/2$ to $T/16$ hypotheses. The green triangles show test AUC and test accuracy of the resulting sparsified classifiers. The red cross represents the sparsified classifier used to plot the cumulative margins in \Cref{fig:margins}.}
\label{fig:scores}
\end{center}
\vskip -0.2in
\end{figure}

\paragraph{Further Experiments and Importance Sampling.}
Inspired by the experiments in \cite{emargin, lightgbm, xgboost} we also performed the above experiments on the Higgs \cite{higgs} and Letter \cite{letter} datasets. See \Cref{sec:appendix_experimental_details} for (1) further experimental details and (2) for cumulative margin, AUC and test accuracy plots on all dataset for different values of $n$ and $T$. 

As mentioned in Section~\ref{sec:upper}, one could use importance sampling for sparsification. It has a slightly worse theoretical guarantee, but might work better in practice. \Cref{sec:appendix_experimental_details} also contains test AUC and test accuracy of the classifiers that result from using importance sampling instead of our algorithm based on discrepancy minimization. Our algorithm and importance sampling are both random so the experiments were repeated several times. On average over the experiments, our algorithm obtains a better test AUC and classification accuracy than importance sampling. 
\\\\
A Python/NumPy implementation of our sparsification algorithm (Theorem~\ref{lemma:margin_preservation}) can be found at: 
\begin{center}
\url{https://github.com/AlgoAU/DiscMin}
\end{center}



\section{Conclusion}
A long line of research into obtaining a large minimal margin using few hypotheses \cite{breiman1999prediction, grove1998boosting, bennett2000column, ratsch2002maximizing} culminated with the AdaBoostV \cite{ratsch2005efficient} algorithm. AdaBoostV was later conjectured by \cite{nie2013open} to provide an optimal trade-off between minimal margin and number of hypotheses. In this article, we introduced SparsiBoost which refutes the conjecture of \cite{nie2013open}. Furthermore, we show a matching lower bound, which implies that SparsiBoost is optimal. 

The key idea behind SparsiBoost, is a sparsification algorithm that reduces the number of hypotheses while approximately preserving the entire margin distribution. Experimentally, we combine our sparsification algorithm with LightGBM. We find that the sparsified classifiers obtains a better margin distribution, which typically yields a better test AUC and test classification error when compared to a classifier trained directly to the same number of hypotheses. 


\bibliography{boosting}
\bibliographystyle{apalike}

\appendix
\section{Rewriting bounds}
\label{sec:appen}
In the following, $\lg$ is the base $2$ logarithm.
Here we show that if $T \leq c_1 \lg(nv^2)/v^2$ for a constant $c_1
> 0$ and $T \leq n/2$, then $v \leq c_2 \sqrt{\lg(n/T)/T}$ for a
constant $c_2 >0$. We split the proof in two parts:
\begin{enumerate}
\item If $v^2 >
1/n$: First note that $T \leq c_1 \lg(nv^2)/v^2
\Rightarrow v \leq \sqrt{c_1 \lg(nv^2)/T}$. We now claim that
$\lg(nv^2) \leq c_3 \lg(n/T)$ for a constant $c_3 > 0$. Using our
bound $v \leq \sqrt{c_1 \lg(nv^2)/T}$ we get $\lg(nv^2) \leq \lg(n
c_1 \lg(nv^2)/T) = \lg(n/T) + \lg c_1 + \lg \lg (nv^2)$. We have $\lg \lg(nv^2) \leq \lg(nv^2)*0.9$ (since $\lg \lg x <
(\lg x)*0.9$ for all $x > 1$). This implies $\lg(nv^2)/10 \leq \lg(n/T) +
\lg c_1$. Since $T \leq n/2$ we have $\lg(n/T) \geq 1$ and it follows
that $\lg(nv^2) \leq (10 + \max\{0,10\lg c_1\}) \lg(n/T)$. Defining $c_3
= (10 + \max\{0,10\lg c_1\})$ we thus conclude that $v \leq \sqrt{c_1
  \lg(nv^2)/T} \Rightarrow v \leq \sqrt{c_1 c_3 \lg(n/T)/T}$ as claimed.
\item If $v^2 \leq 1/n$: Here we immediately have $v \leq
  \sqrt{\lg(n/T)/T}$ since $T \leq n/2$.
\end{enumerate}

Next we show that if $v \geq c_1 \sqrt{\lg(n/T)/T}$ for a constant
$c_1 > 0$ and $v$ satisfies $v  > \sqrt{1/n}$, then $T \geq c_2 \lg(nv^2)/v^2$ for a constant $c_2 > 0$. We
split the proof in two parts:
\begin{enumerate}
\item If $T \leq \lg(nv^2)/v^2$: Observe that $v \geq c_1
  \sqrt{\lg(n/T)/T} \Rightarrow T \geq c_1^2 \lg(n/T)/v^2$. Using the assumption
  $T \leq \lg(nv^2)/v^2$ we get $T \geq c_1^2 \lg(nv^2/\lg(nv^2))/v^2
  = c_1^2 (\lg(nv^2)-\lg\lg(nv^2))/v^2$. But $\lg \lg(nv^2) \leq
  \lg(nv^2)*0.9$ since $nv^2 > 1$. Therefore we conclude $T \geq
  (c_1^2/10) \lg(nv^2)/v^2$.
\item If $T > \lg(nv^2)/v^2$: This case is trivial as we already
  satisfy the claim.
\end{enumerate}

\section{Additional Experiments and Experimental Details}
\label{sec:appendix_experimental_details}

We train a LightGBM classifier and sparsify it in two ways: \Cref{lemma:margin_preservation} and importance sampling. Both sparsification algorithms are random so we repeated both sparsifications 10 times. The experiment was performed on the following dataset: 
\begin{itemize}
\item Inspired by the XGBoost article \cite{xgboost} we used the Higgs dataset \cite{higgs}. We shuffled the $10^7$ training examples and selected the first $10^6$ examples for training and the following $10^6$ points for examples. 
\item Inspired by the LightGBM article \cite{lightgbm} we used the Flight Delay dataset \cite{airline}. The dataset contain delay times and was turned into binary classification by predicting if the flight was delayed or not. All features of the $10^7$ training examples were one-hot encoded with the Pandas \cite{pandas} function \textit{get\_dummies()} which yielded $660$ features. We shuffled the $10^7$ training examples and selected the first $10^6$ examples for training and the following $10^6$ examples for testing. 
\item Inspired by the equivalence margin article \cite{emargin} we used the Letter dataset \cite{letter}. It was turned into a binary classification problem as done in \cite{emargin}. We shuffled the $20000$ examples and used the first $10000$ for training and the last $10000$ for testing. 
\end{itemize}
The initial classifiers were trained with LightGBM using default parameters (except for the Letter dataset where we used LightGBM decision stumps inspired by \cite{emargin}). The rest of this article contain figures that show different variants our experiments. Each figure concerns a single dataset for a choice of \textit{number of hypotheses} $T$ and \textit{number of points} $n$. For example \Cref{fig:airline1} contains results for our experiment on the Flight Delay dataset with $T=100$ hypotheses and $n=250000$ training and test points. It contains a margin plot similar to \Cref{fig:margins} and test AUC/accuracy plot similar to \Cref{fig:scores}. See \Cref{table:experiments} for an overview of all experiments. 

\begin{table}[h]
\centering
\begin{tabular}{|l|l|l|}
\hline
\textbf{Dataset} & \textbf{n} & \textbf{T} \\
\hline
Letter & 10000 & 100 \\
\hline
Flight Delay & 250000 & 100 \\
\hline
Flight Delay & 500000 & 250 \\
\hline
Flight Delay & 1000000 & 500 \\
\hline
Higgs & 250000 & 100 \\
\hline
Higgs & 500000 & 250 \\
\hline
Higgs & 1000000 & 500 \\
\hline
\end{tabular}
\caption{The different experimental settings. }
\label{table:experiments}
\end{table}

Finally, we would like to acknowledge the NumPy, Pandas and Scikit-Learn libraries \cite{numpy, pandas, sklearn}. 

\subsection{Correcting Sparsified Predictions}
\label{sec:correct}
In some cases the classification accuracy of a sparsified classifier is very poor even though the test AUC is good (e.g. accuracy $60\%$ but AUC 0.80). The AUC depends on the relative ordering of predictions while classification accuracy depends on whether each prediction is above or below zero. This lead us to believe that the poor test classification accuracy was caused by a bad offset. Maybe we should predict +1 if points where above $-0.01$ instead of $0$. In other words, it seemed that the sparsified classifiers skewed the bias term of the original classifier. To fix this we computed the bias term that yielded the largest classification accuracy on the training set. This can be done by sorting predictions and then trying every possible offset, one for each point, taking just $O(n\lg(n))$ time. For the airline dataset this typically improved the sparsified classifiers accuracy from $60\%$ to $80\%$. 

In this way we "corrected" the bias of all sparsified predictions. All test classification accuracies reported are corrected in this sense (including \Cref{fig:scores}). 



\begin{figure*}[h]
\centering
\includegraphics[width=0.48\textwidth]{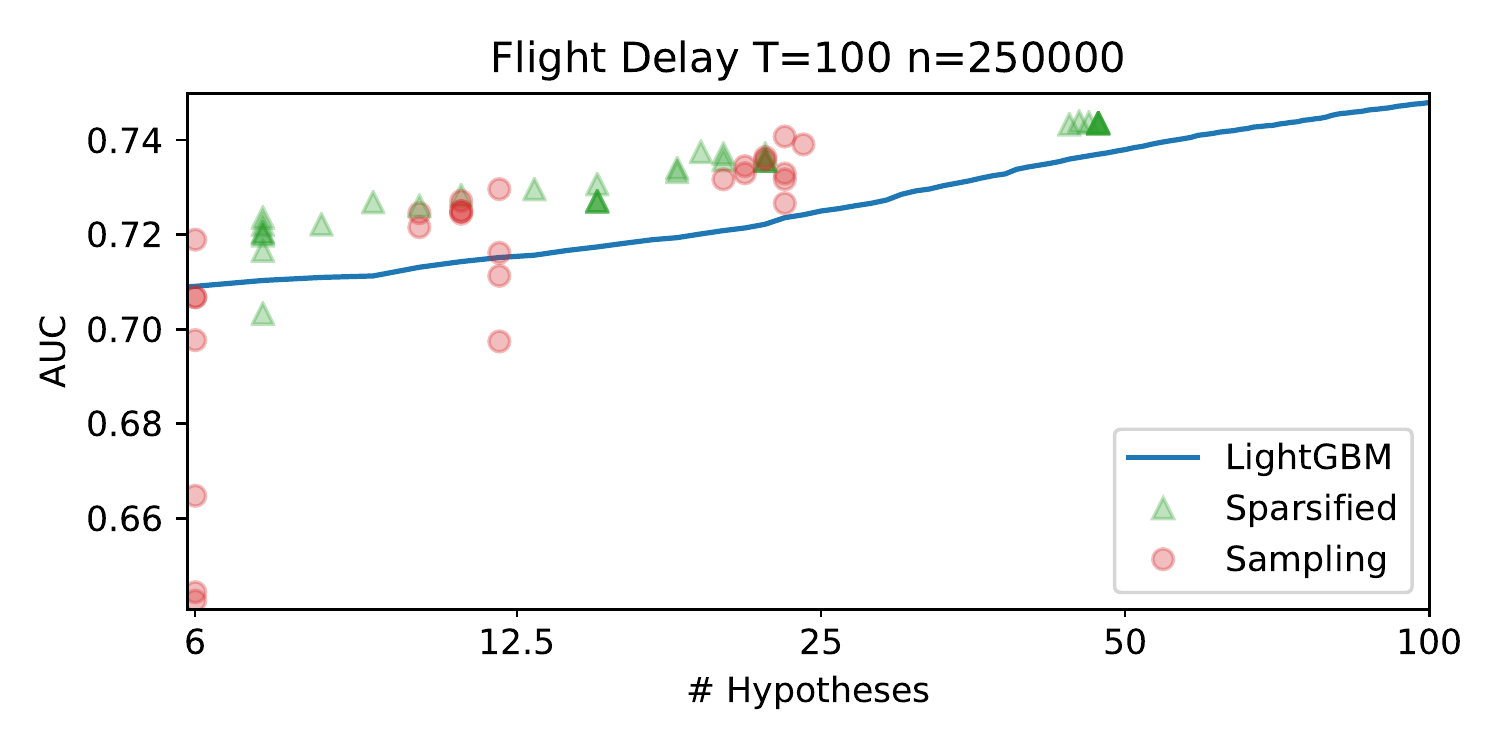}
\includegraphics[width=0.48\textwidth]{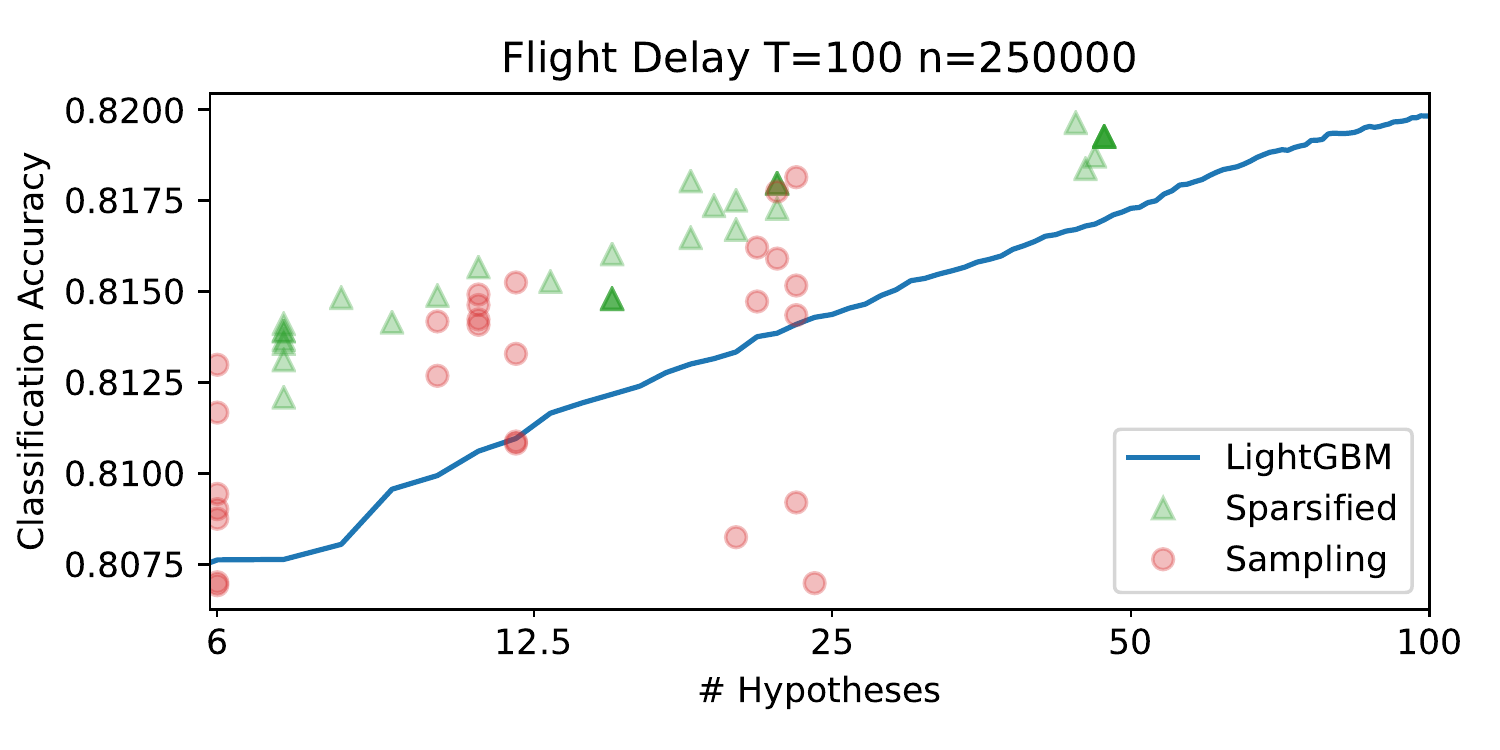}
\includegraphics[width=0.90\textwidth]{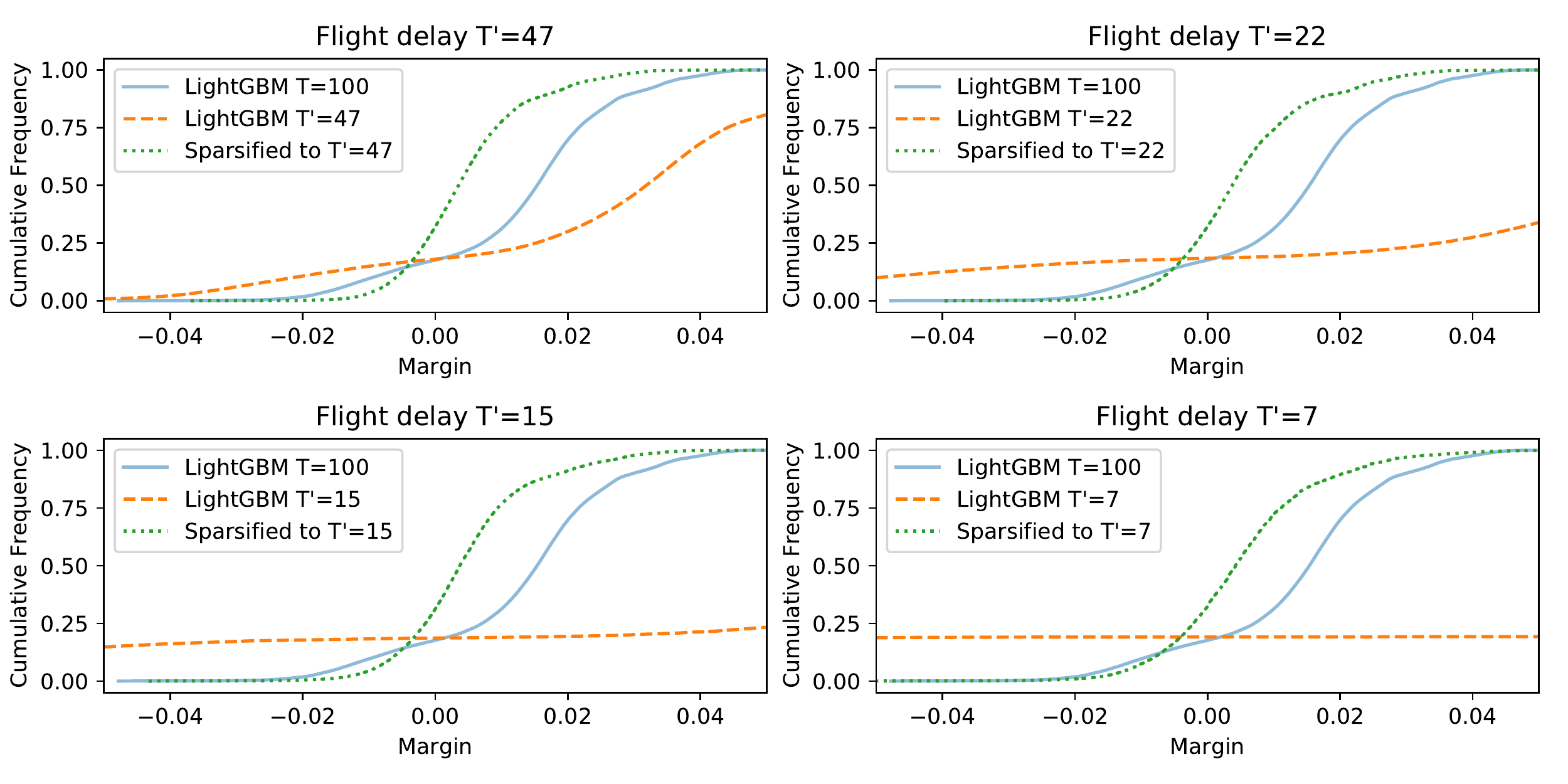}
\caption{Similar to \Cref{fig:scores} and \Cref{fig:margins}, but for Flight Delay with $T=100$ and $250000$ training and test points. }
\label{fig:airline1}
\end{figure*}

\begin{figure*}[h]
\centering
\includegraphics[width=0.48\textwidth]{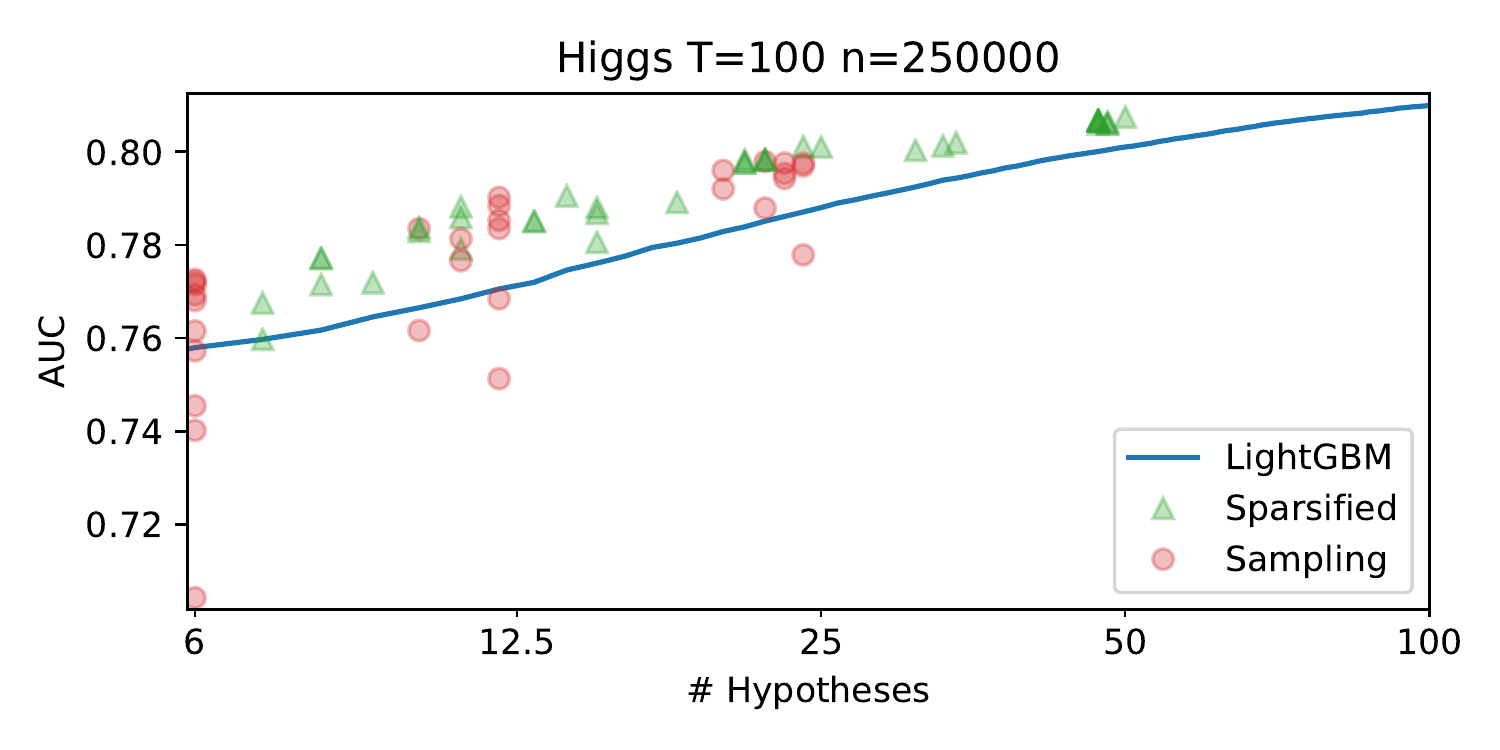}
\includegraphics[width=0.48\textwidth]{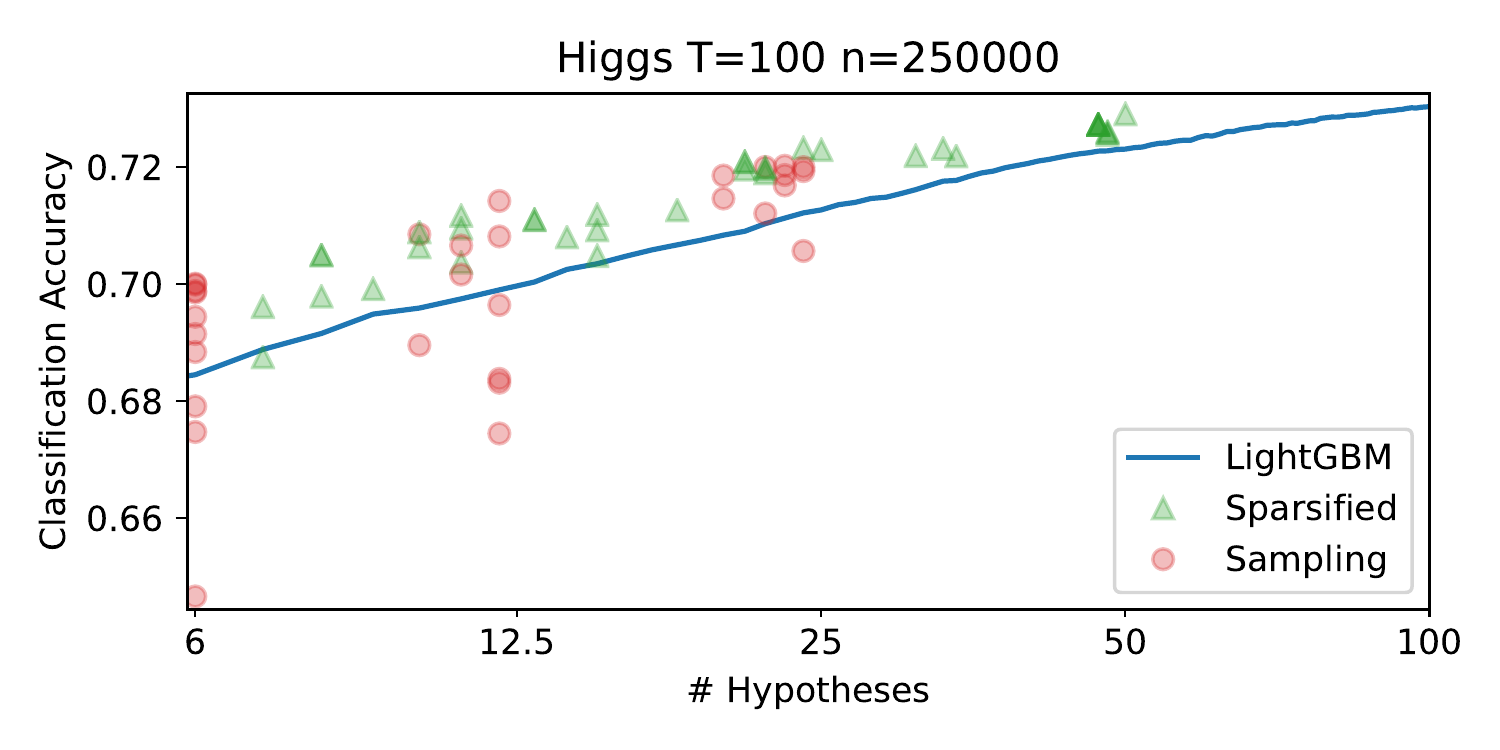}
\hspace{5mm}\includegraphics[width=0.90\textwidth]{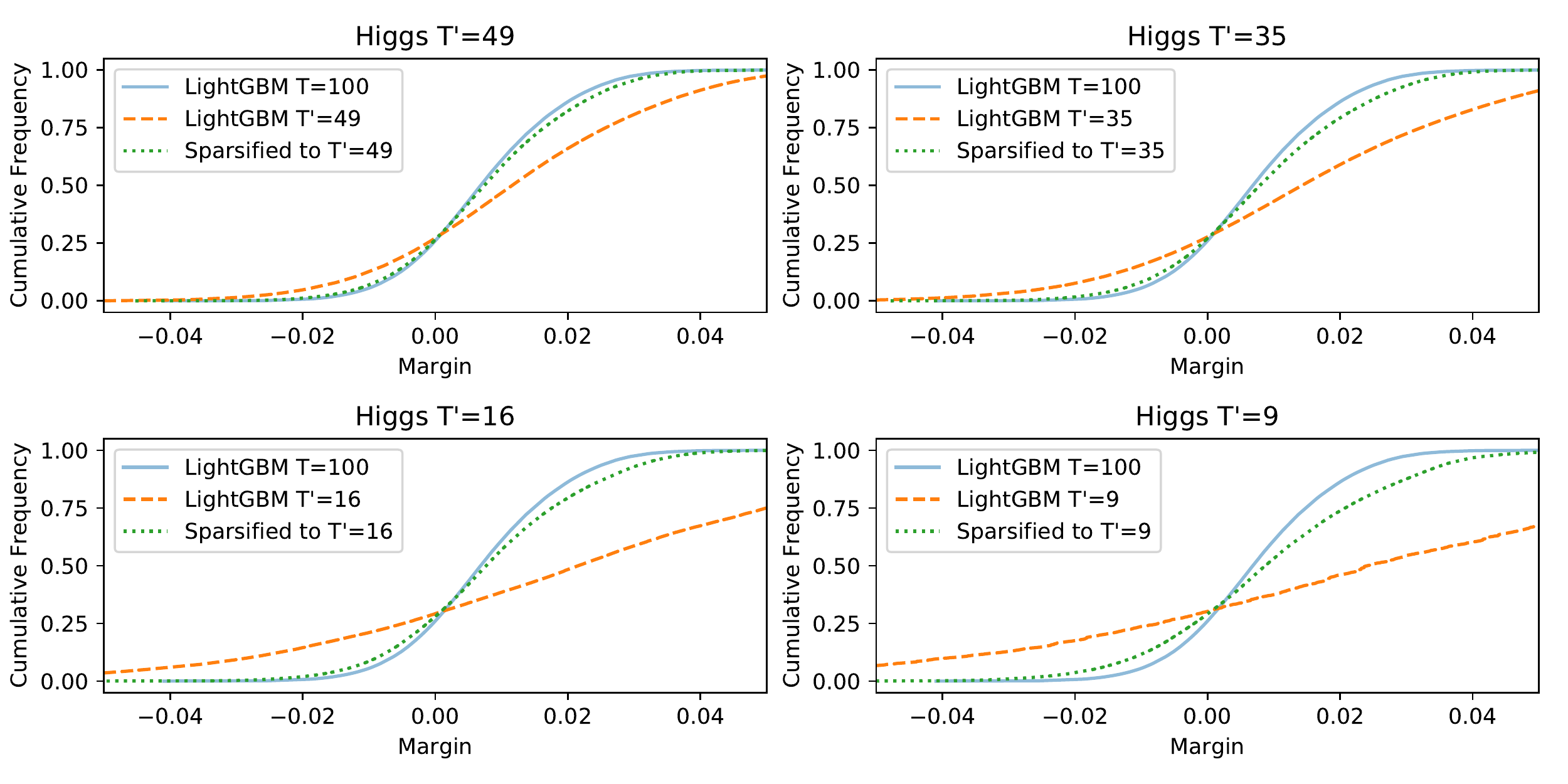}
\caption{Similar to previous plot, but for Higgs with $T=100$ hypotheses and $250000$ training and test points. }
\label{fig:higgs1}
\end{figure*}

\begin{figure*}[h]
\centering
\includegraphics[width=0.48\textwidth]{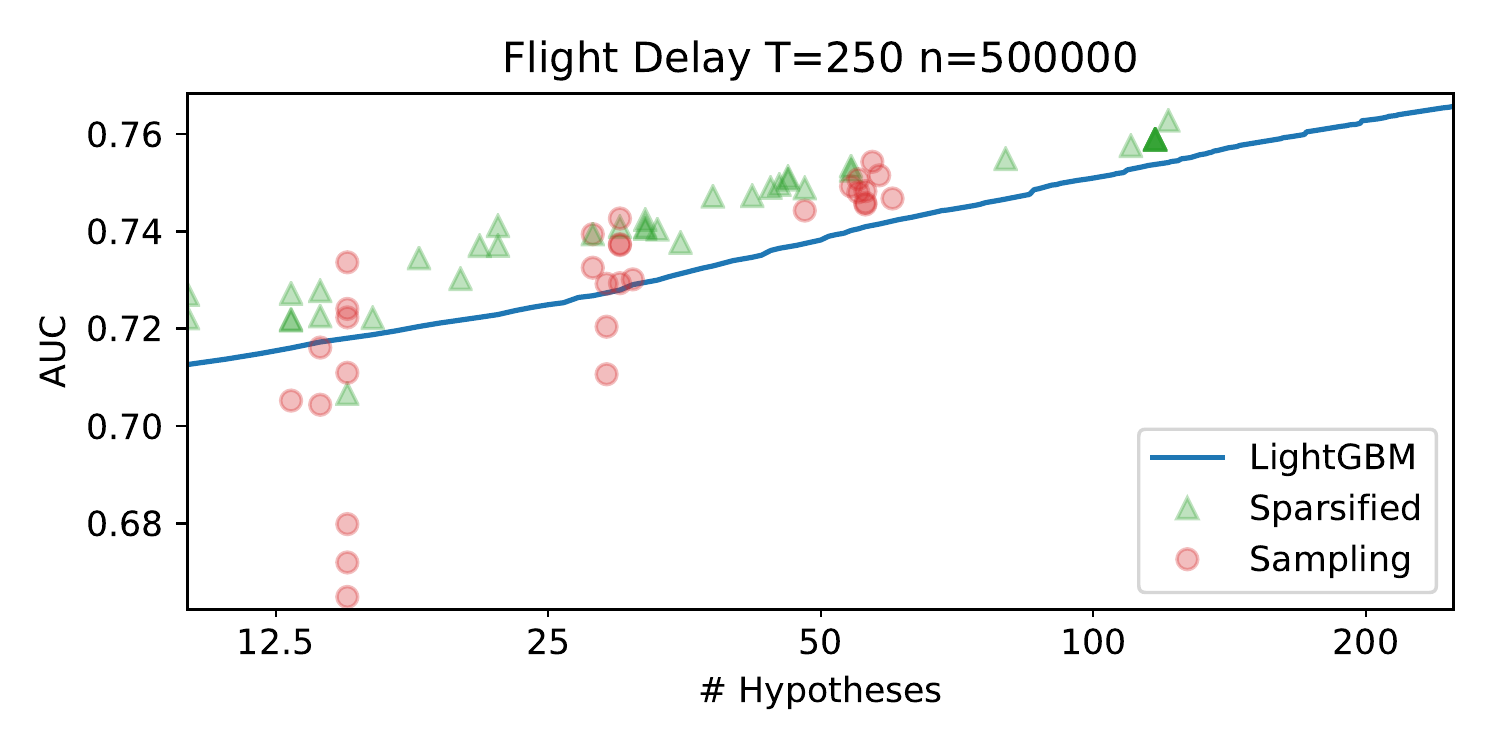}
\includegraphics[width=0.48\textwidth]{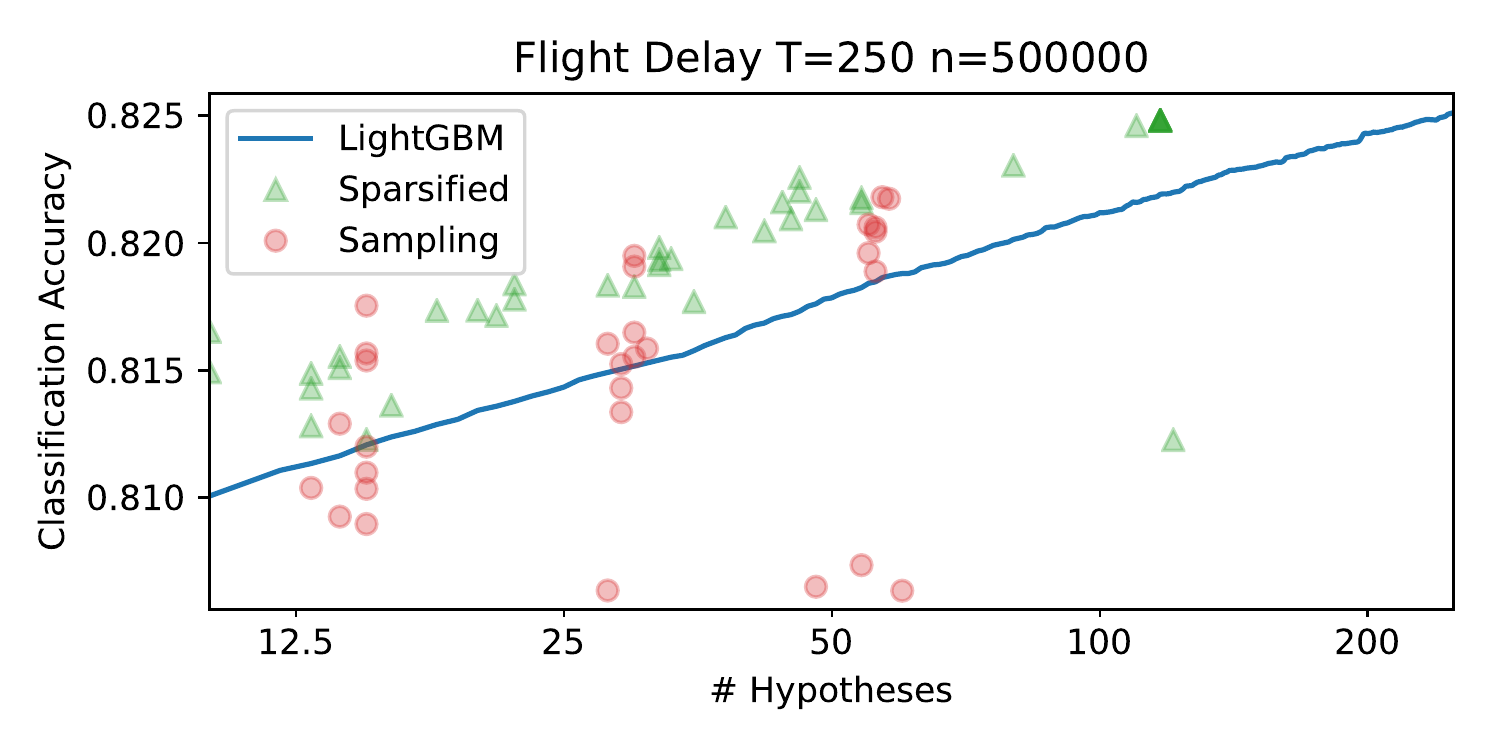}
\includegraphics[width=0.90\textwidth]{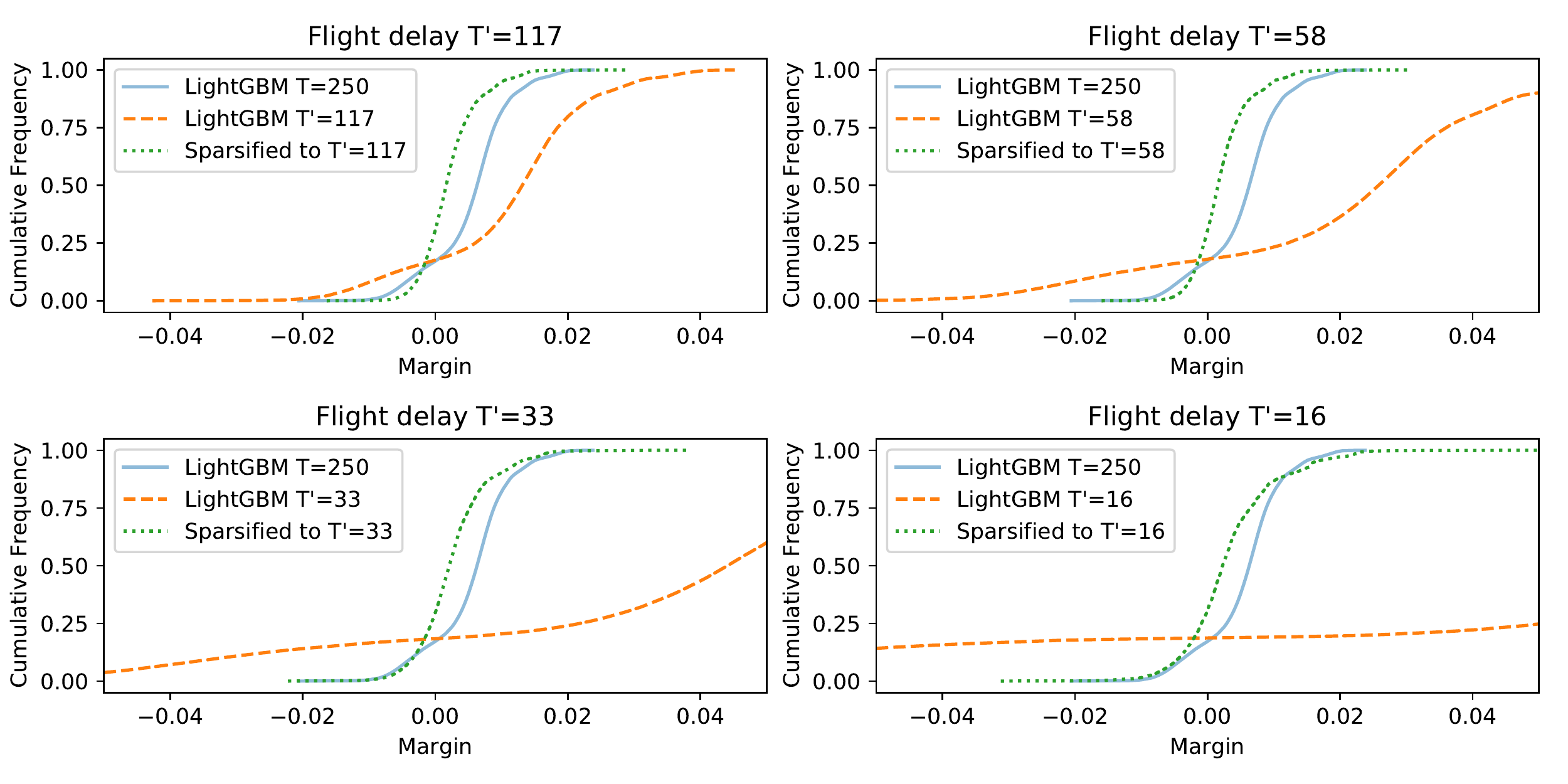}
\caption{Similar to previous plot, but for Flight Delay with $T=250$ hypotheses and $500000$ training and test points. }
\label{fig:airline2}
\end{figure*}

\begin{figure*}[h]
\centering
\includegraphics[width=0.48\textwidth]{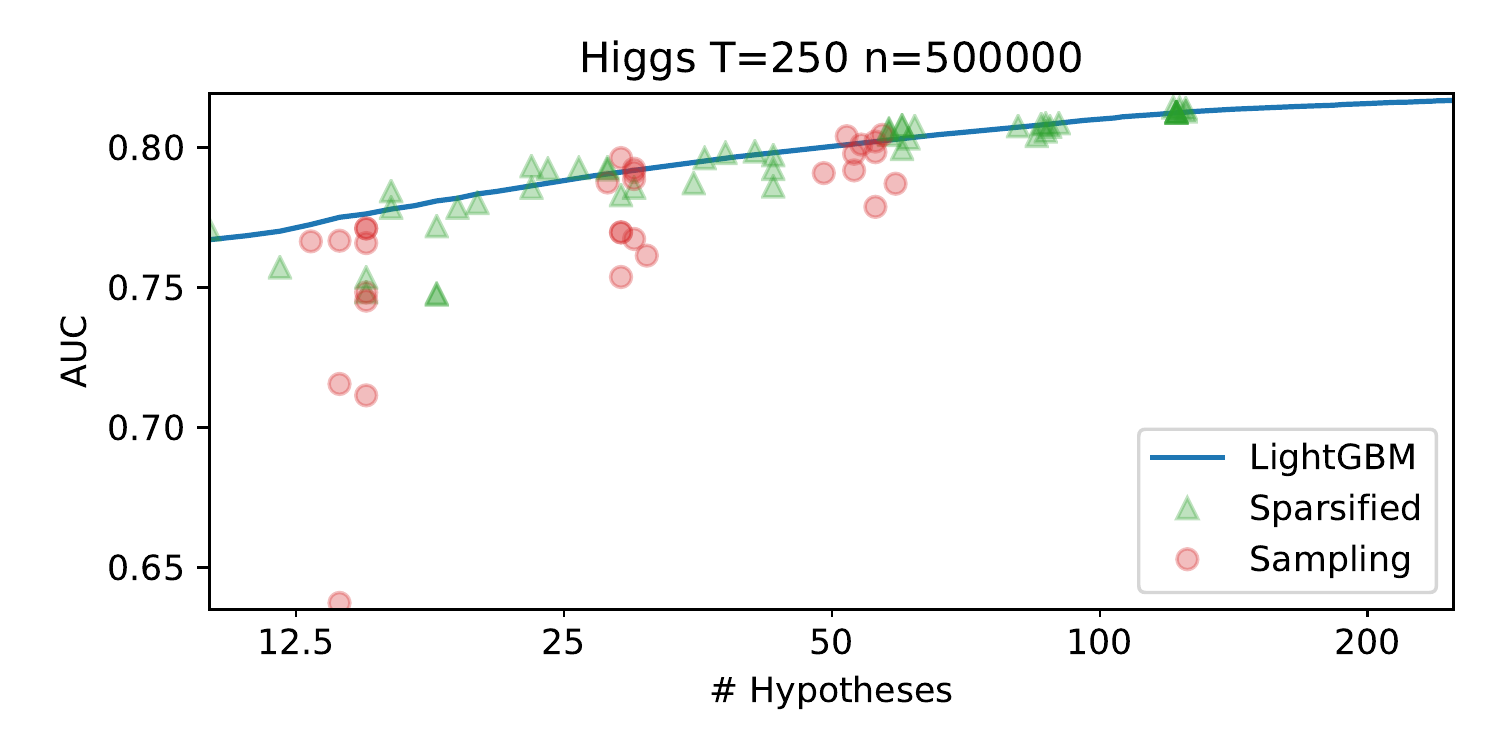}
\includegraphics[width=0.48\textwidth]{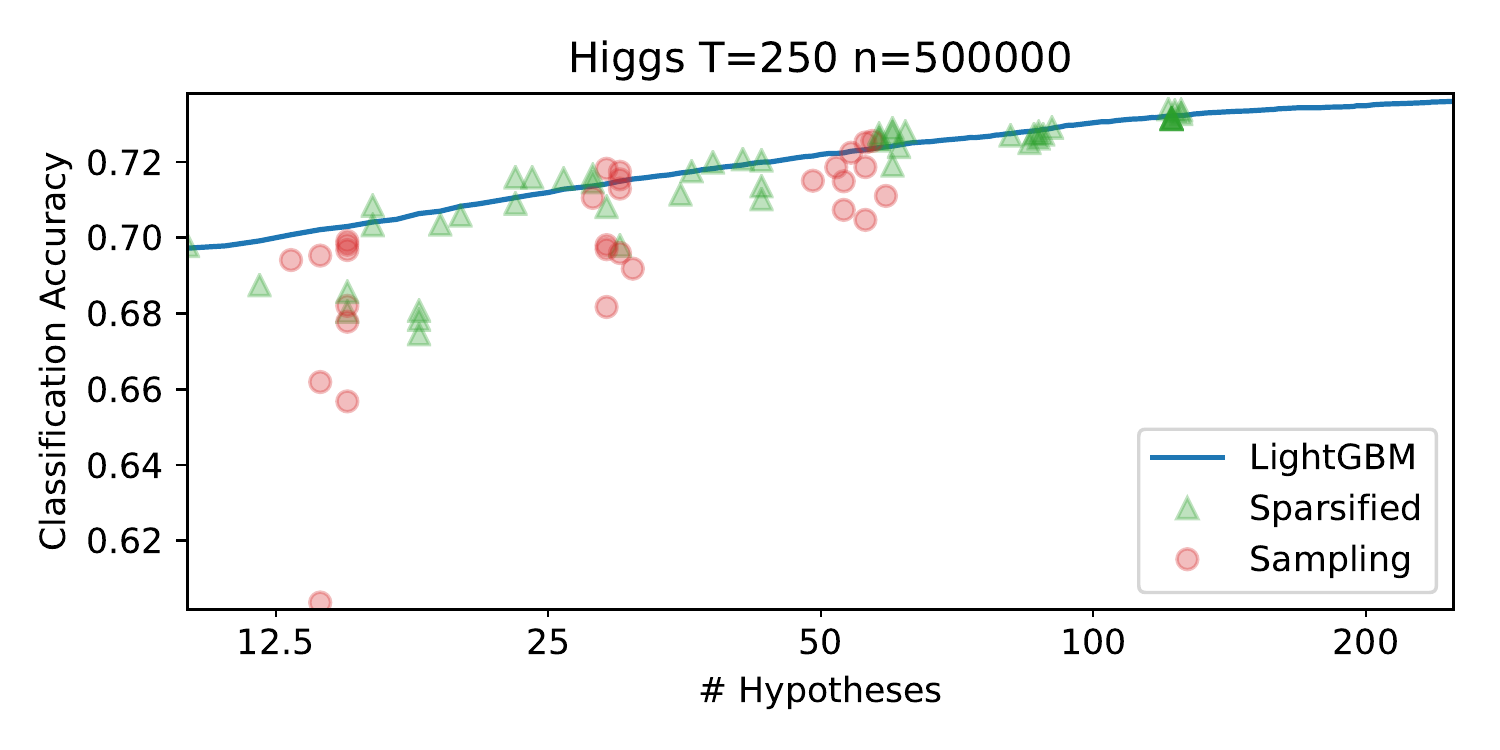}
\hspace{5mm}\includegraphics[width=0.90\textwidth]{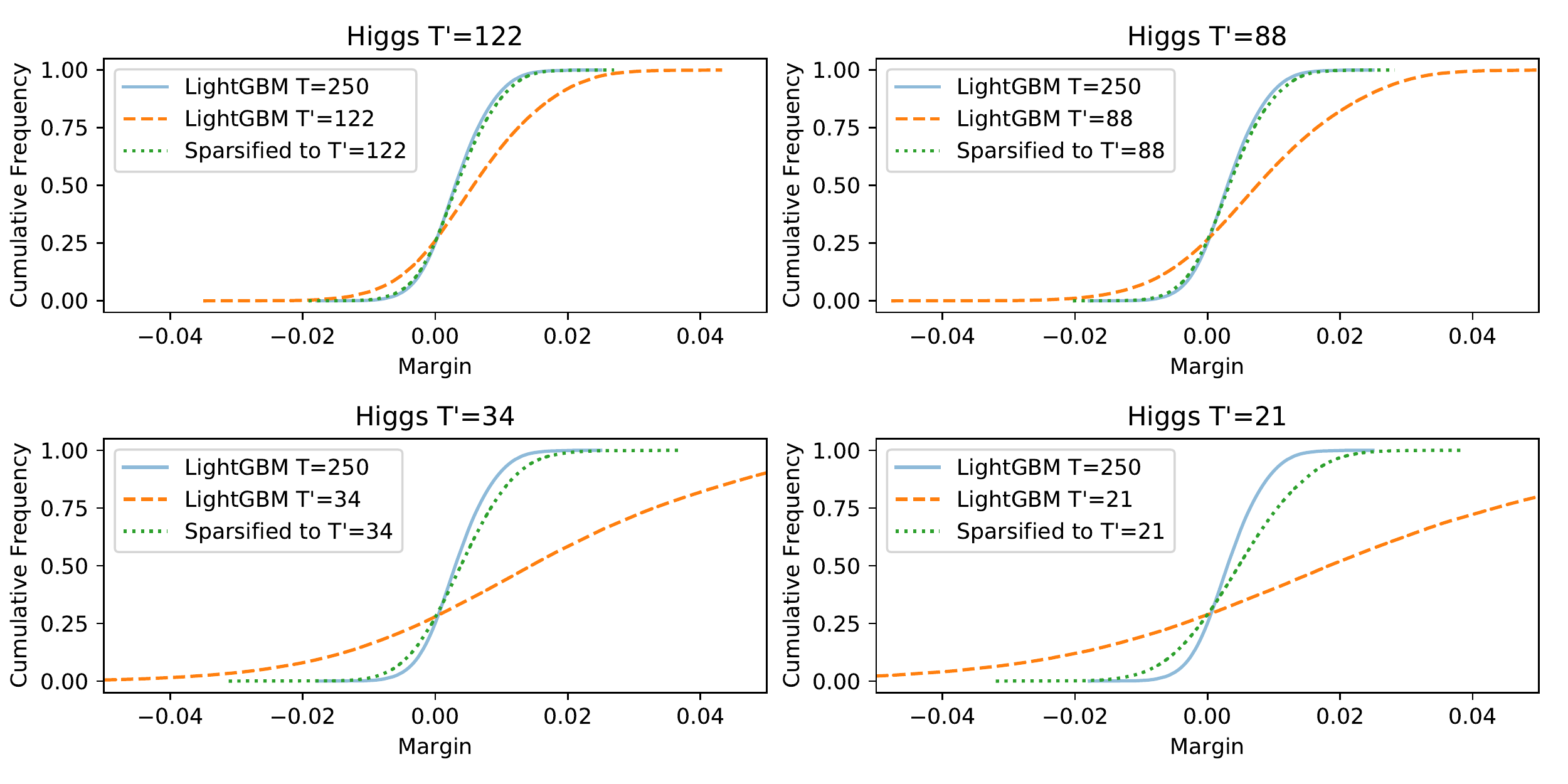}
\caption{Similar to previous plot, but for Higgs with $T=250$ hypotheses and $500000$ training and test points. }
\label{fig:higgs2}
\end{figure*}

\begin{figure*}[h]
\centering
\includegraphics[width=0.48\textwidth]{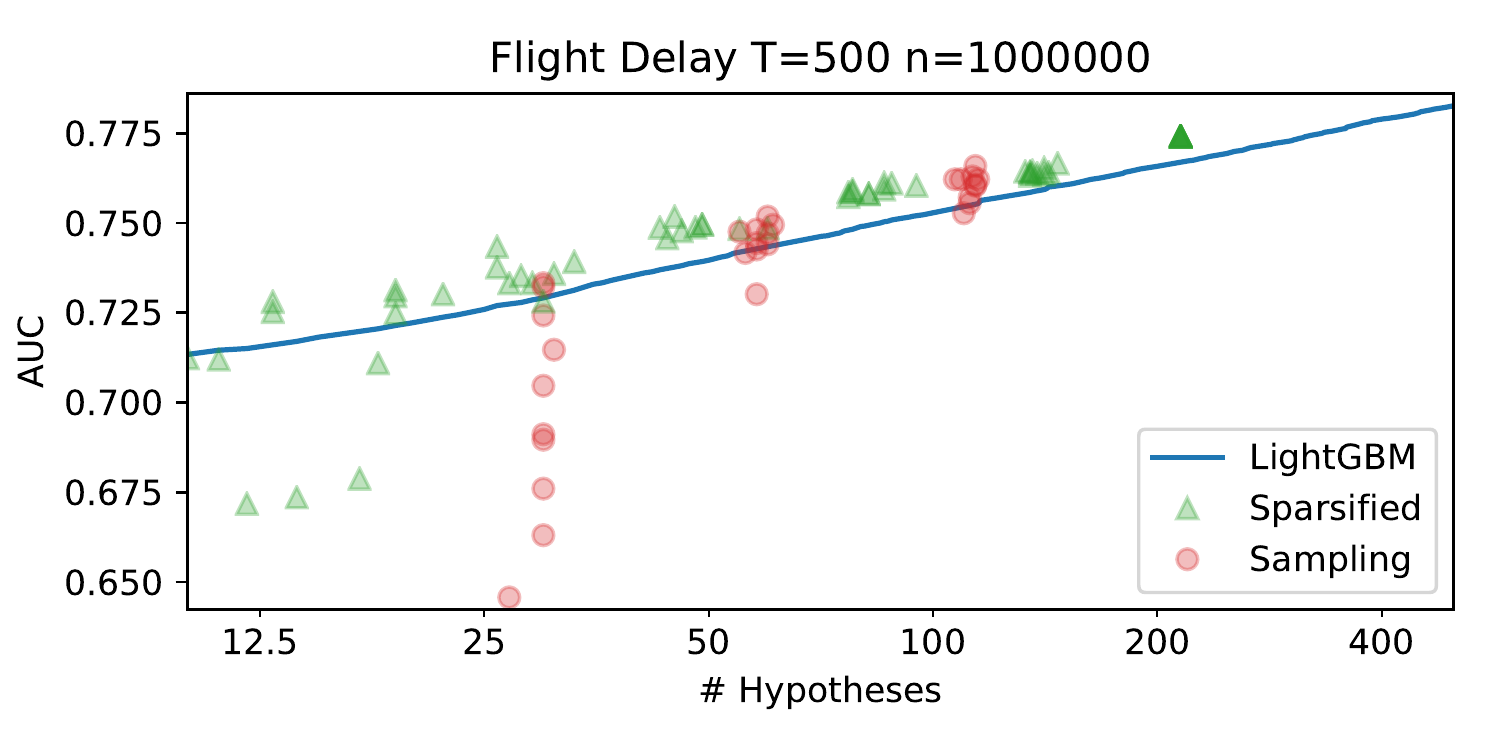}
\includegraphics[width=0.48\textwidth]{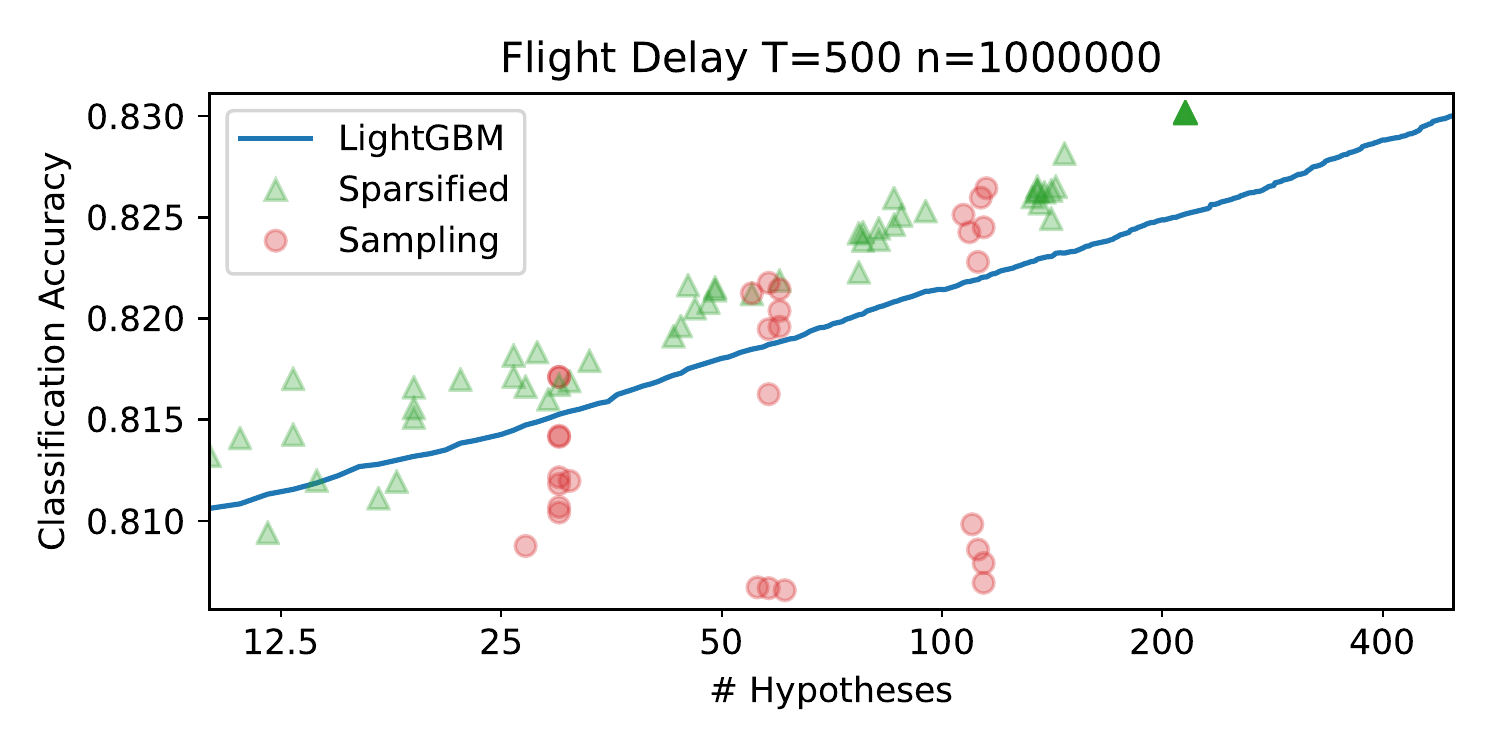}
\includegraphics[width=0.90\textwidth]{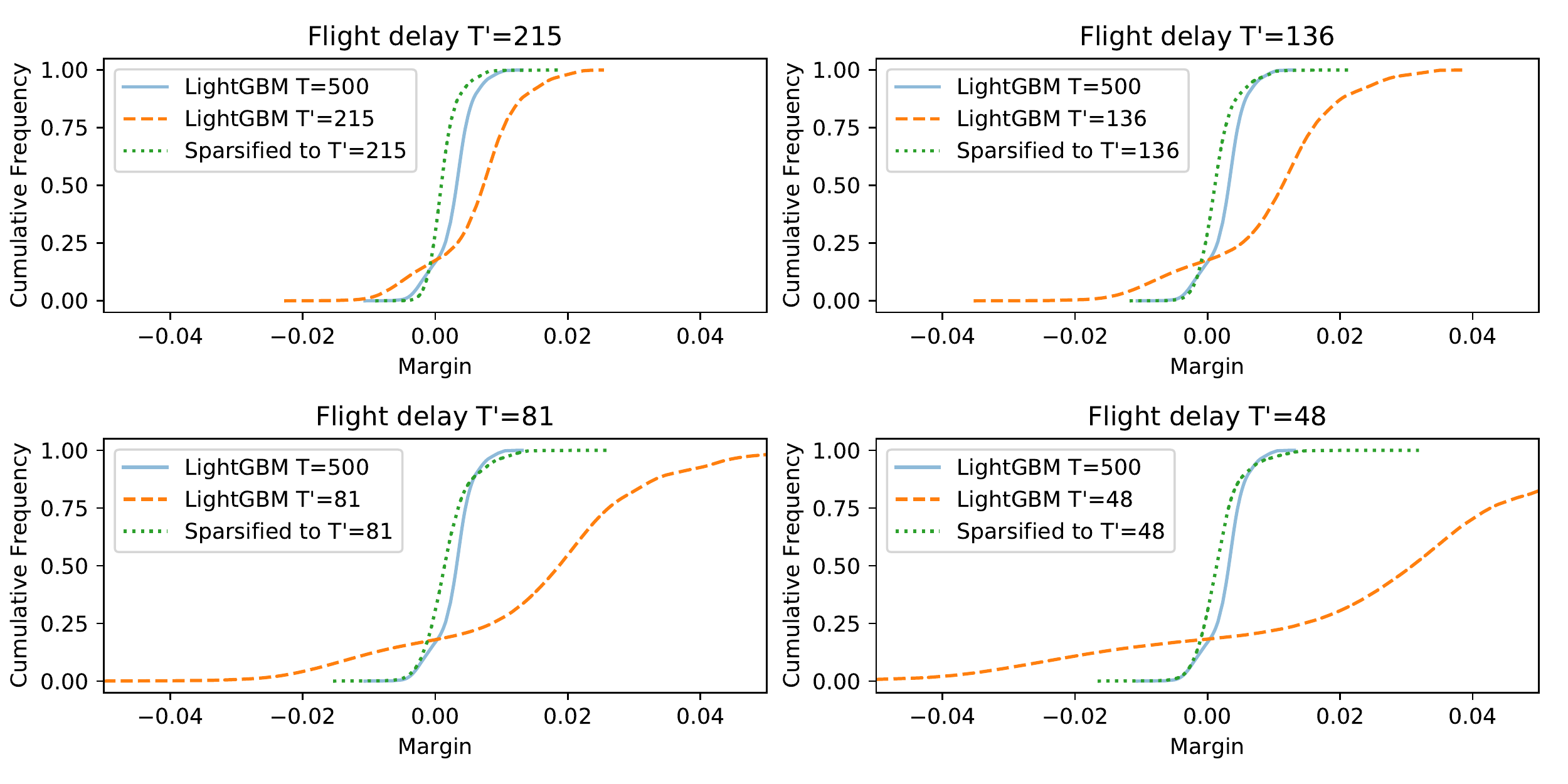}
\caption{Similar to previous plot, but for Flight Delay with $T=500$ hypotheses and $1000000$ training and test points. }
\label{fig:airline3}
\end{figure*}

\begin{figure*}[h]
\centering
\includegraphics[width=0.48\textwidth]{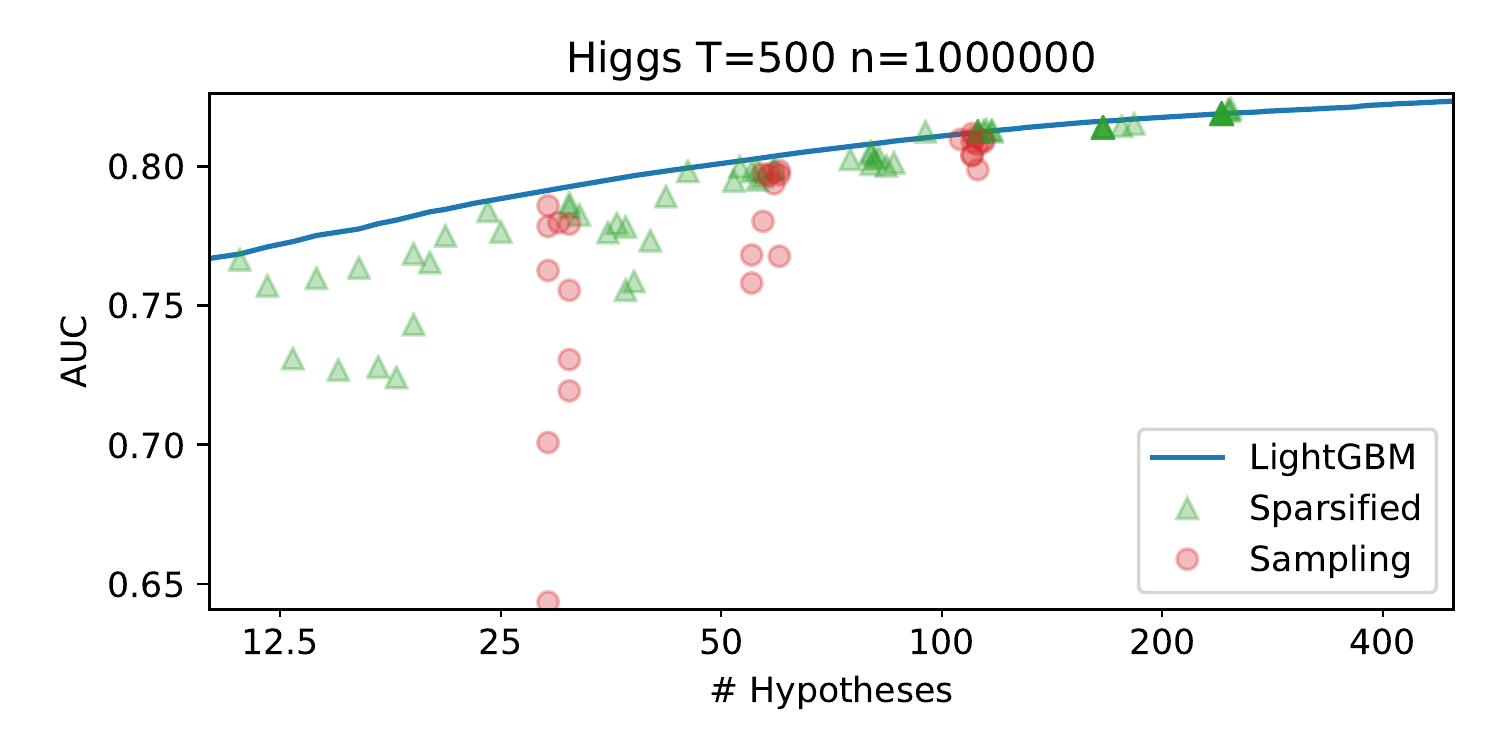}
\includegraphics[width=0.48\textwidth]{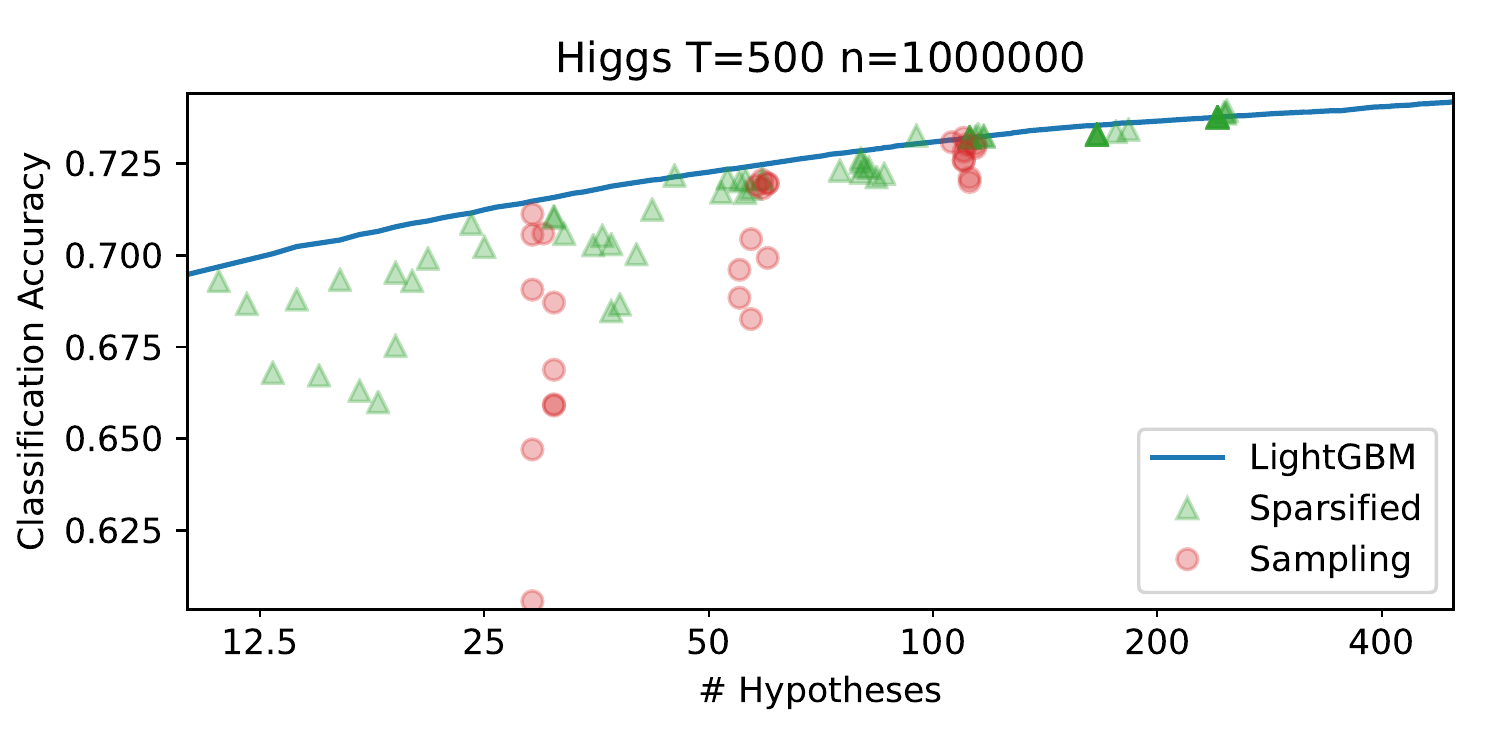}
\hspace{5mm}\includegraphics[width=0.90\textwidth]{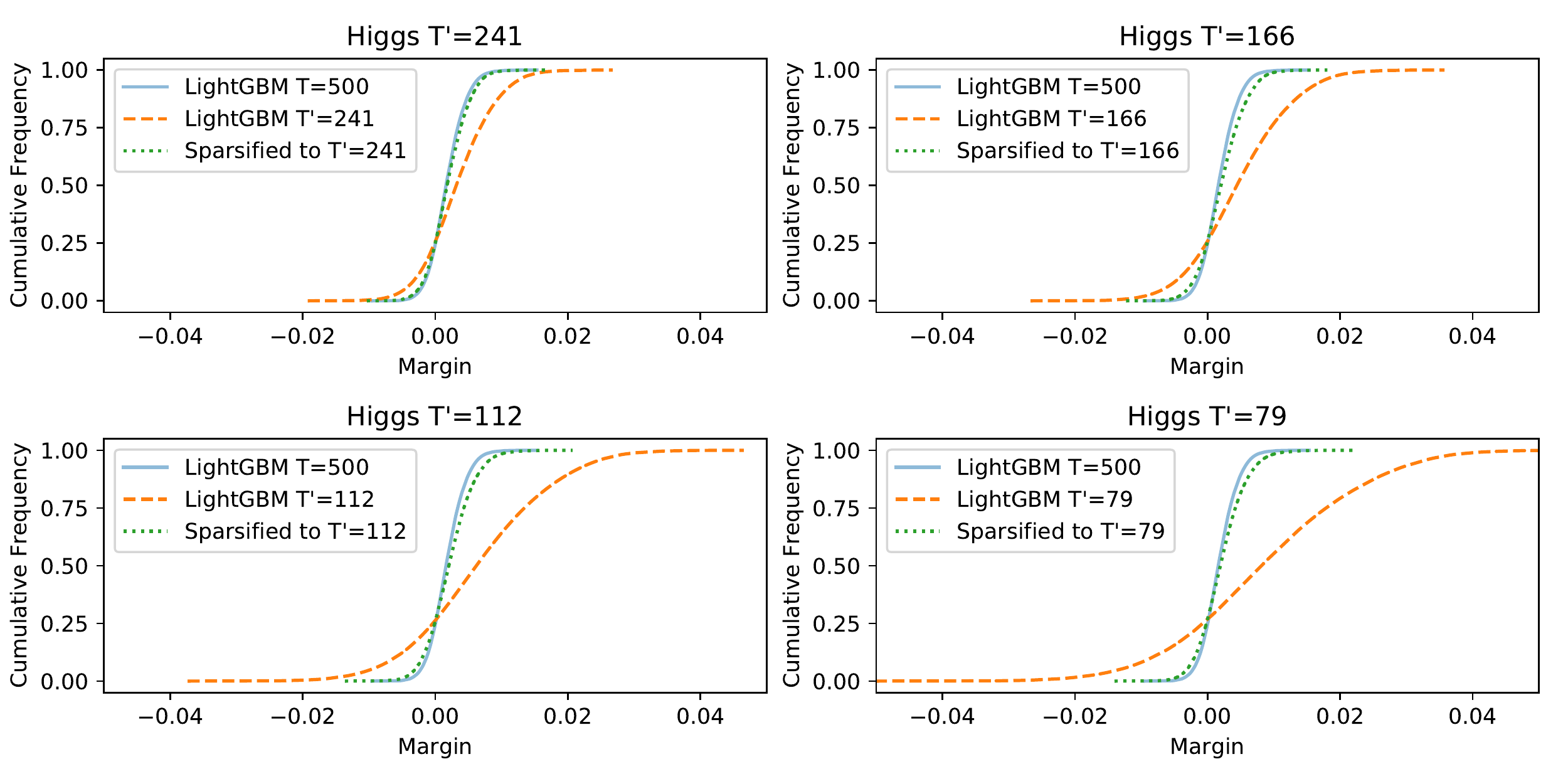}
\caption{Similar to previous plot, but for Higgs with $T=500$ hypotheses and $1000000$ training and test points. }
\label{fig:higgs3}
\end{figure*}

\begin{figure*}[h]
\centering
\includegraphics[width=0.48\textwidth]{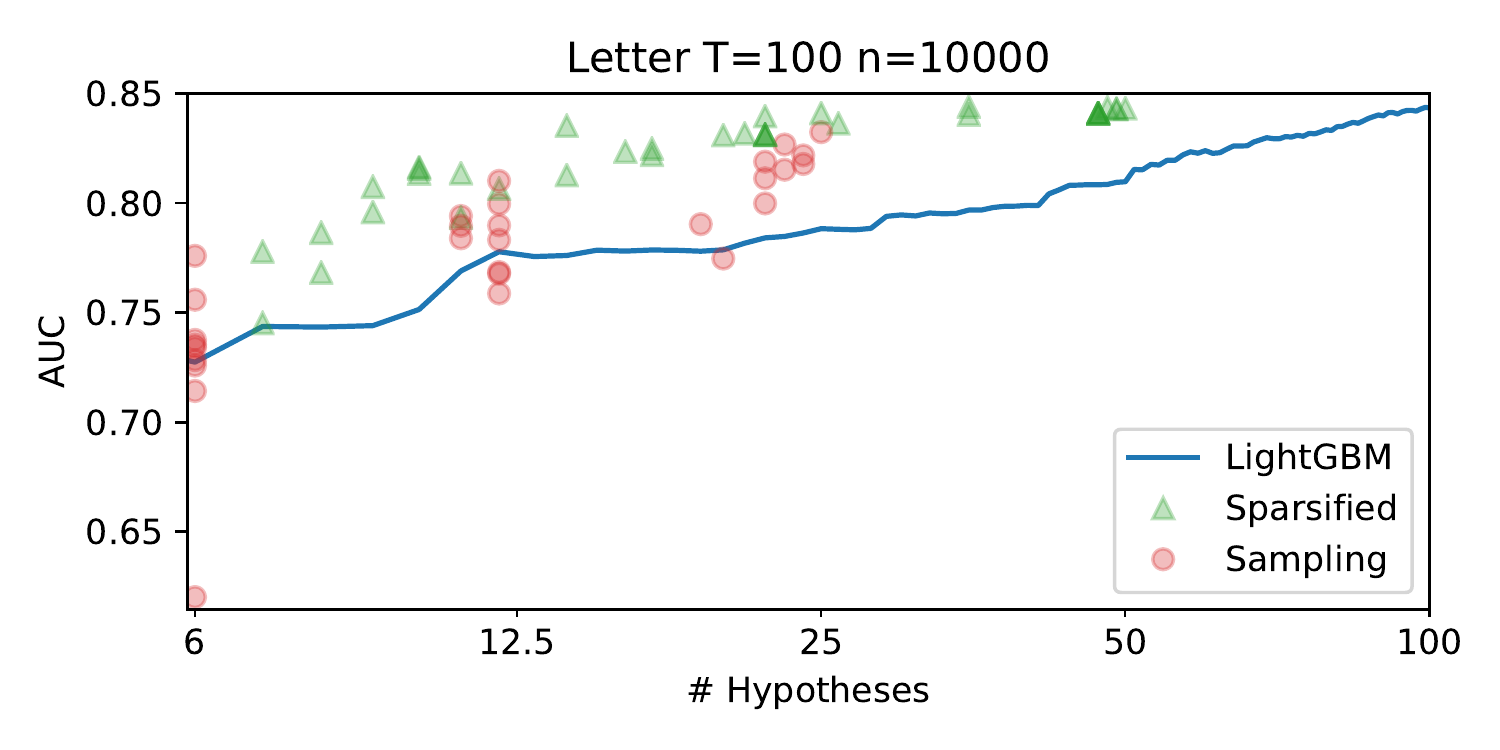}
\includegraphics[width=0.48\textwidth]{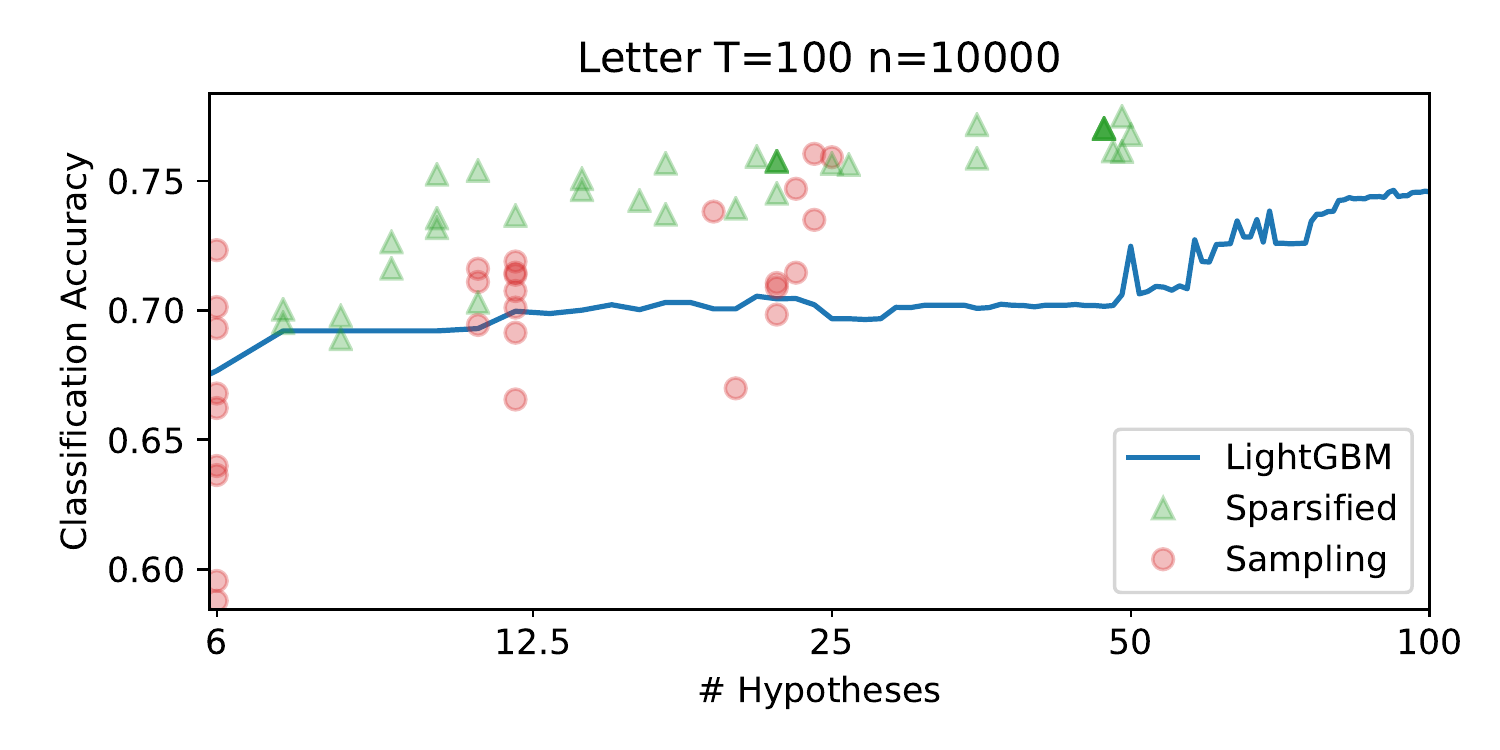}
\includegraphics[width=0.90\textwidth]{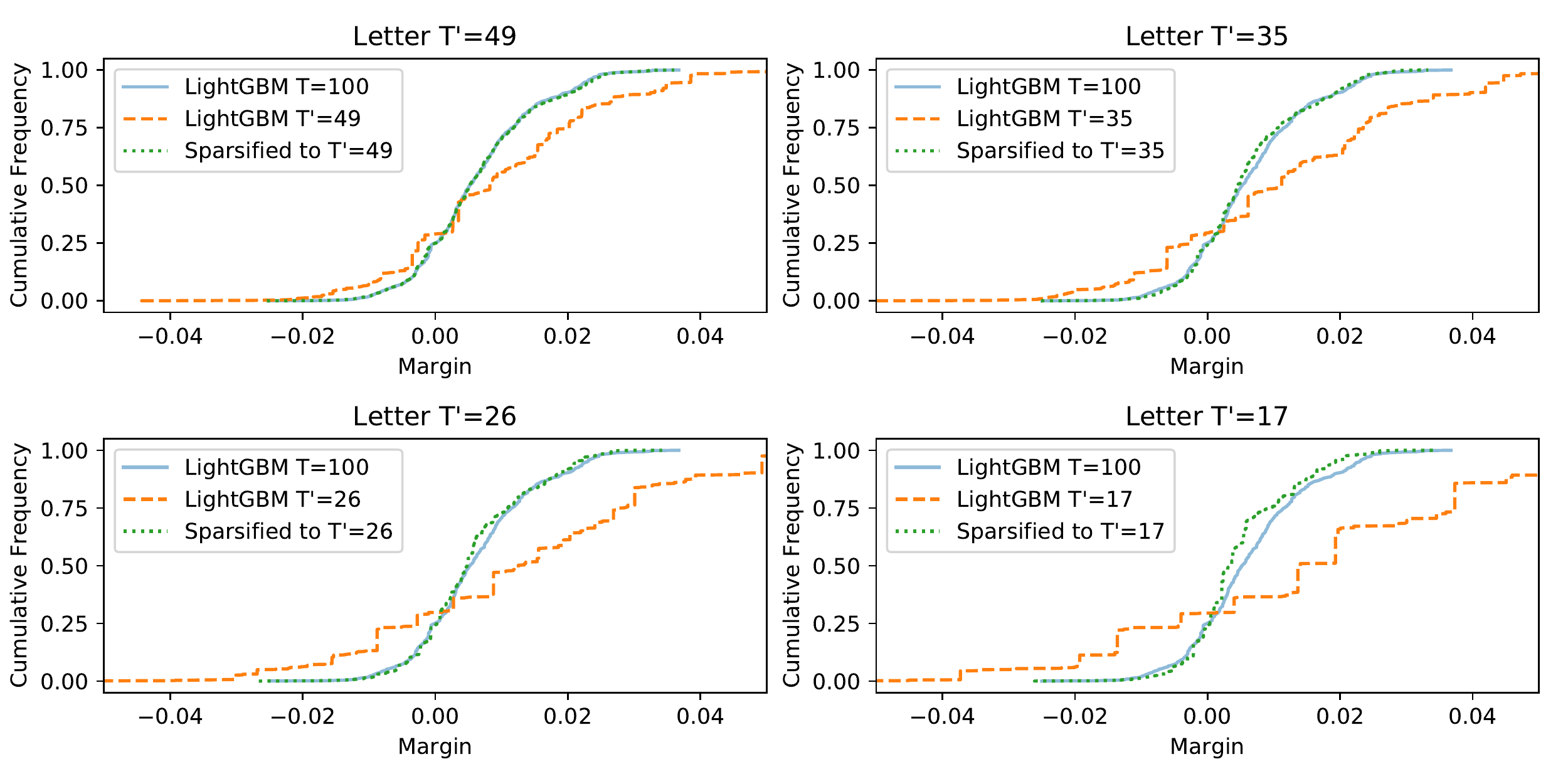}
\caption{Similar to previous plot, but for Letter with $T=100$ and $n=10000$ training and test points. Furthermore, LightGBM used decision stumps (inspired by \cite{emargin}), that is, we choose $\text{num\_leaves}=2$. Quite surprisingly the sparsified classifiers obtain a better classification accuracy than the original classifier. All sparsified classifiers predictions were corrected as described in \Cref{sec:correct}. Since the original classifier has better AUC than the sparsified ones, we believe this is caused by a poor bias for the original classifier. }
\label{fig:letter}
\end{figure*}

\end{document}